\documentclass[12pt,journal,onecolumn,draftclsnofoot]{IEEEtran}
\usepackage{epsfig}
\usepackage{times}
\usepackage{float}
\usepackage{afterpage}
\usepackage{amsmath}
\usepackage{mathrsfs}
\usepackage{amstext}
\usepackage{amssymb,bm}
\usepackage{latexsym}
\usepackage{color}
\usepackage{graphicx}
\usepackage{amsmath}
\usepackage{amsthm}
\usepackage{tikz}
\usepackage{tikzscale}
\usepackage{graphicx}
\usepackage[center]{caption}
\usepackage{pstricks}
\usepackage{makecell}
\usepackage{subcaption}
\usepackage{booktabs}
\usepackage{multicol}
\usepackage{lipsum}
\usepackage{dblfloatfix}
\usepackage{algorithm} 
\usepackage[noend]{algpseudocode} 
\usepackage[normalem]{ulem}
\usepackage{enumerate}
\usepackage{bbm}
\usepackage{dsfont}
\usepackage{tikz}
\usepackage[shortlabels]{enumitem}

\def \bx{\mathbf{x}}
\def \bz{\mathbf{z}}

\newcommand{\mbb}{\mathbb}
\newcommand{\mb}{\mathbf}
\newcommand{\mcal}{\mathcal}

\DeclareMathOperator*{\argmax}{arg\,max}
\DeclareMathOperator*{\argmin}{arg\,min}

\newtheorem{theorem}{Theorem}
\newtheorem{thm}{Theorem}

\newtheorem{lem}[thm]{Lemma}

\newtheorem{rem}{Remark}
\newtheorem{defin}{Definition}
\newtheorem*{defin*}{Definition}

\algnewcommand\algorithmicforeach{\textbf{for each}}

\algdef{S}[FOR]{ForEach}[1]{\algorithmicforeach\ #1\ \algorithmicdo}

\usepackage{xcolor}
\newcommand*\circled[1]{\tikz[baseline=(char.base)]{
		\node[shape=circle,draw,inner sep=1pt] (char) {#1};}}

\input{widebar_hack.tex}

\begin{document}

\title{On Distributed Quantization for Classification} 

\author{\IEEEauthorblockN{Osama A. Hanna$^\dagger$, Yahya H. Ezzeldin$^\dagger$, Tara Sadjadpour$^\dagger$,\\ Christina Fragouli$^\dagger$ and Suhas Diggavi$^\dagger$ \\ 
$^\dagger$University of California, Los Angeles\\
Email:\{ohanna, yezzeldin, tsadja, christina.fragouli, suhasdiggavi\}@ucla.edu}
\thanks{
This work was supported in part by NSF grants 1514531, 1824568 and  by UC-NL grant LFR-18-548554.
}
}
\maketitle

\begin{abstract}
  We consider the problem of distributed feature quantization, where the goal is to enable a pretrained classifier 
  at a central node to carry out its classification on features that are gathered from distributed nodes through communication constrained channels.  We propose the design of distributed quantization schemes specifically tailored to the classification task: unlike quantization schemes that help the central node reconstruct the original signal as accurately as possible, our focus is not reconstruction accuracy, but instead correct classification.
Our work does not make any apriori distributional assumptions on the data, but instead uses training data for the quantizer design.
Our main contributions include: we prove  NP-hardness of finding
optimal quantizers in the  general case; we design an optimal scheme for a special case; we propose quantization algorithms, that leverage discrete neural representations and training data, and can be designed in polynomial-time  for any number of features, any number of classes, and arbitrary division of features across the distributed nodes. We find that tailoring the quantizers to the classification task can offer significant savings: as compared to alternatives, we can achieve more than a factor of two reduction in terms of the number of bits communicated, for the same classification accuracy.

\end{abstract}

\section{Introduction}
Quantization forms the core of almost all lossy data-compression
algorithms, and is widely used to reduce the number of bits required
for storage and communication.  These schemes optimize a
rate-distortion trade-off, where the goal is to represent data using a
limited number of bits as precisely as possible. Instead, in this
work, we propose  distributed quantization
schemes tailored to data that are going to be used for
classification. That is, we explore the design of distributed
quantizers for a rate-classification error trade-off: our quantization
schemes are not optimized for reconstruction accuracy, but instead
correct classification.

Figure~\ref{Fig1} illustrates the difference
between the two aforementioned approaches. For a given number of
bits, we create a corresponding number of quantization regions in the space ($3$ bits
per feature = $2^3.2^3$ = $64$ regions in our example). Intuitively, for
data reconstruction, we want to more finely represent the regions of
high signal concentration (Figure~\ref{Fig1}(b)); for classification,
we want to more finely represent areas closer to the classification
boundary where errors may happen (Figure~\ref{Fig1}(c)).
\begin{figure}
	\centering
    \begin{subfigure}[b]{0.3\linewidth}
        \includegraphics[draft=false,width=\textwidth]{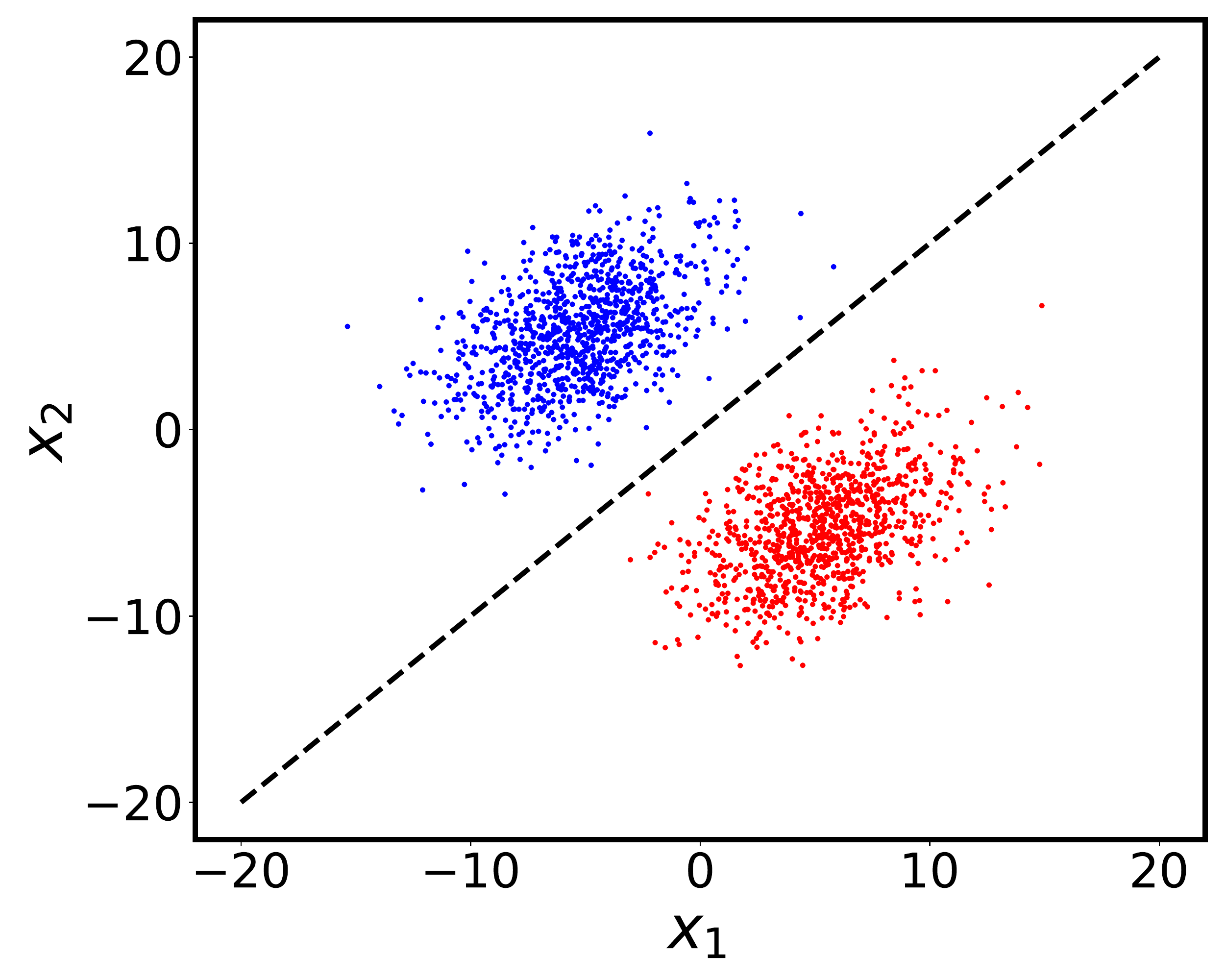}
        \caption{Unquantized}
        \label{fig:example1a}
    \end{subfigure}
~
    \begin{subfigure}[b]{0.3\linewidth}
        \includegraphics[draft=false,width=\textwidth]{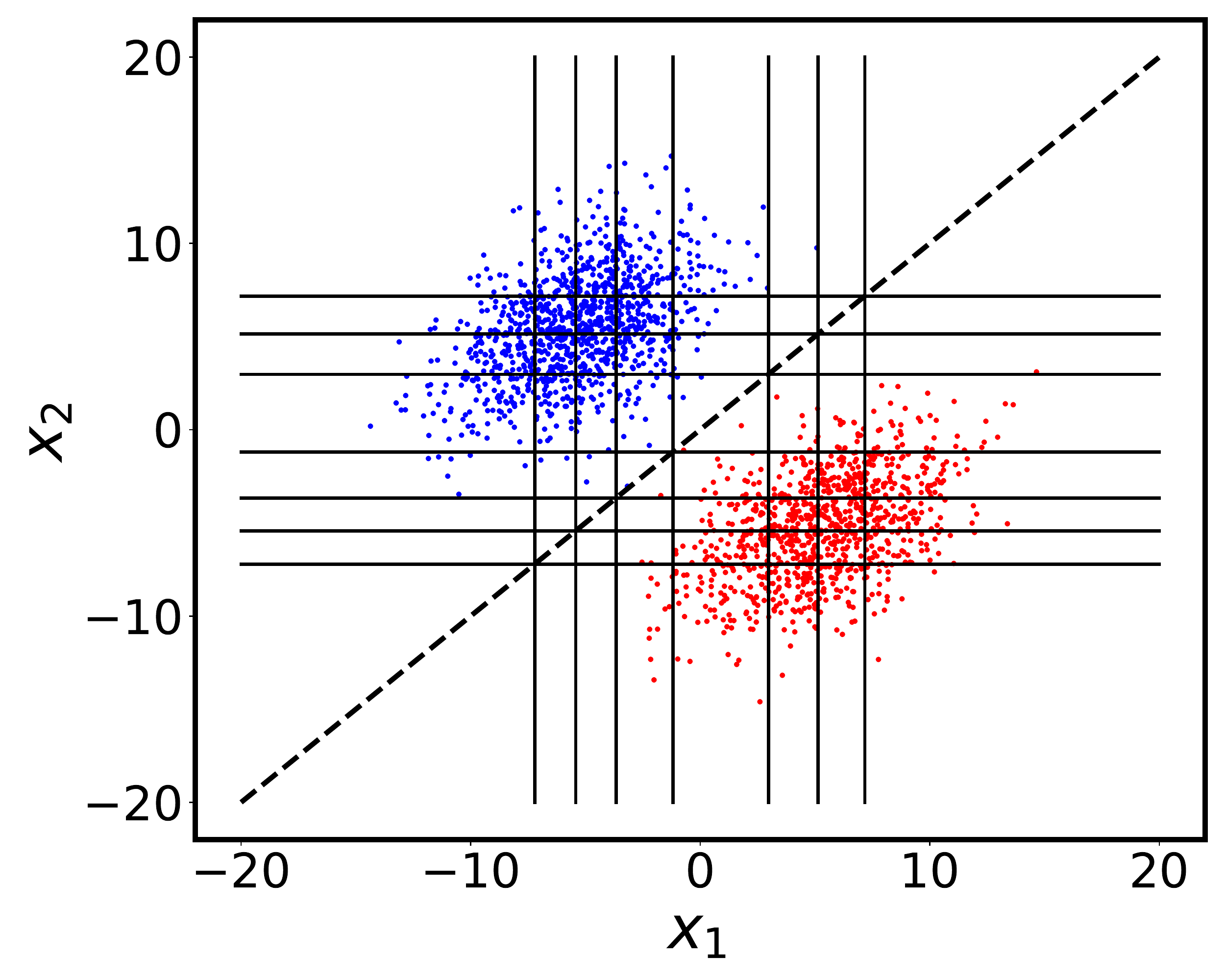}
        \caption{Uniform}
        \label{fig:example1b}
    \end{subfigure}
~
    \begin{subfigure}[b]{0.3\linewidth}
        \includegraphics[draft=false,width=\textwidth]{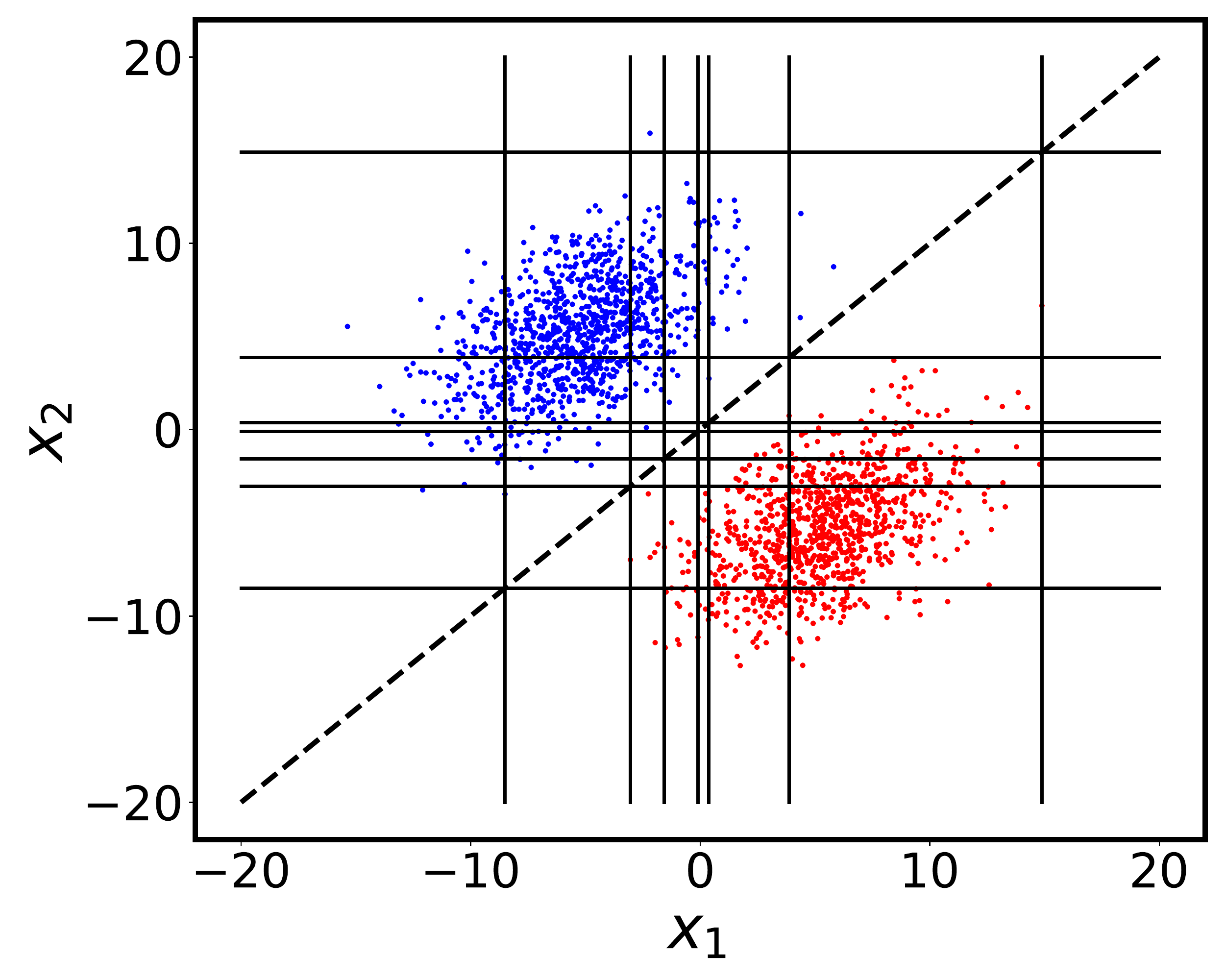}
        \caption{Classification-aware}
        \label{fig:example1c}
    \end{subfigure}
    \caption{Example with 2 features and 3 bits/feature: {\bf (a)} Unquantized points separated by a linear classifier; {\bf (b)} Uniform measure quantization;  {\bf (c)}  Classification-aware quantization.}
    \label{Fig1}
\end{figure}

In our work, we aim to design quantizers for the following generic
scenario. A central entity has access to a pretrained subdifferentiable 
classifier (\emph{e.g.,} a multilayer perceptron
- MLP \cite{DLbook}) and wishes to apply that classifier on data
features collected at $K$ distributed sensor nodes. The communication
between the sensors and the central entity comes at a cost (is rate
limited), and thus it is expensive to send the measured features with
full precision.  Instead, each node employs a \emph{distributed
  single-shot} quantizer\footnote{Single-shot quantization means that
  we do not collect samples over time and jointly quantize
  them. Therefore a set of \emph{local} features observed at a node
  are quantized together whenever observed; motivated by delay
  requirement for classification.}, independently from other nodes, in
order to encode its measurements into bit representations that can be
sent to the central entity efficiently as soon as sensed.  We
emphasize that we do not make any apriori distributional assumptions
on the data, as is common in many learning scenarios. Moreover, the
data may be heterogeneous, from unknown composite distributions
(\emph{e.g.,} multimodal observations of sensors that capture video,
sound, and radar signals).  We simply use training samples from the
data to design the quantizers.

This scenario is motivated by many machine learning applications,
that include wireless cyberphysical systems, immersive environments
and supported health.  For example, in brain-to-computer interface
applications, multiple electrodes are placed around the brain to
capture brain signals which are used collectively as features to
classify in what direction a person is trying to move his
hand~\cite{lebedev2006brain}.  Such features need to be quantized at
the sensor peripheral nodes, and communicated through rate constrained
channels to a central node for processing, so that classification (and
decisions based on it) can be done within a reasonable time of
sensing.


 Our assumption of a pretrained classifier is motivated by the
following practical considerations: (i) we may not know the
communication channel constraints when designing the classifier and we
may want to use the same classifier over systems with different
communication channels;
(ii) we may not have access to the data used to train the classifier
(\emph{e.g.,} we use a pretrained classifier from cloud services such as
Google Cloud~\cite{googleCloud} or Clarifai~\cite{clarifai}), but are
able to personalize the quantizers leveraging locally available data.

\noindent{\bf Contributions.}  We begin our formal study of the
problem by proving that in general, it is NP-hard\footnote{This
  hardness is in terms of the problem parametrization, \emph{e.g.,}
  number of training points and the number of features.} to design an
optimal distributed quantization system tailored for classification of
a number of given data points. We also show that the problem is hard
to approximate, therefore motivating alternate approaches to the
design, and empirical evaluations of proposed techniques.


Given the difficulty of designing the optimal
quantization system, we propose a data-based greedy quantization
boundary insertion strategy, which we term $\texttt{GBI}$, which can
be used for any type of classifier, any number of features, any number
of classes, and arbitrary division of features across the distributed
nodes. Operationally, $\texttt{GBI}$ creates rectangular quantization
regions by greedily deciding how to divide the training data along each feature.  We demonstrate that $\texttt{GBI}$ has quadratic complexity in
the number of training samples and linear in the number of features.

To further reduce complexity and capture richer quantization
boundaries (beyond rectangular), we propose a (deep) learning based
approach to design our quantizers that makes use of the
subdifferentiable nature of the classifier employed by the central
node.  This is inspired by the recent success of learning discrete
latent variables~\cite{VQ-VAE,VQ-VAE2}, joint source channel
coding~\cite{JSCC} and discrete representations for image
compression~\cite{theis2017lossy,balle2016end,van2016pixel}.  Our
design can be understood as a \emph{distributed} discrete neural
representation optimized for classification.  We leverage the
$\texttt{GBI}$ algorithm by making it a module within the discrete
neural representation.

Through numerical evaluation,
we show  that for the same representation
budget (number of bits available at sensor nodes for each
measurement), we can achieve four folds gains in classification accuracy compared to approaches that try to
learn discrete representations aimed directly at reconstruction.

Our main contributions can thus be summarized as follows:
\begin{itemize}
  \item We prove the NP-hardness and hardness to approximation for
    designing optimal distributed quantizers for classification.
    \item We design optimal quantizers for linearly separable data and two features under some structural restrictions. 
  \item We propose a polynomial-complexity greedy quantization
    algorithm,  $\texttt{GBI}$, optimized for classification, that can be used for any number of features and any classifier.
  \item We propose a novel distributed discrete neural representation
    for classification, which can also be combined with
    $\texttt{GBI}$.
  \item When compared with approaches for data reconstruction, we
    demonstrate benefits of 50\% gain in terms of
    classification accuracy for our proposed quantization approaches,
    on an sEMG dataset and 300\% improvement on the CIFAR10 image dataset.
\end{itemize}	  

\noindent{\bf Paper Organization.} Section~\ref{sec:related_work},
reviews related work; Section~\ref{sec:model} develops the notation
and problem framework; Section~\ref{sec:complexity} proves the
NP-hardness results; Section~\ref{sec:GBI}, introduces the
$\texttt{GBI}$ algorithm; Section~\ref{sec:neural} proposes neural
representation schemes; and Section~ \ref{sec:experiments} presents
our numerical evaluation.





\section{Related Work}
\label{sec:related_work}
We will give representative examples of related literature to put our
work in context, with an organization around specific
approaches/problems.

{\bf Distributed detection and hypothesis
  testing.}  The problem studied in this paper is related to
distributed estimation and detection in communication-constrained
networks, extensively studied in the literature
(see~\cite{chamberland2007wireless,luo2005universal} and references
therein). Differently from our work, a common assumption is that
sensor measurements are independently distributed given the detection
hypothesis, and that these conditional distributions are
known. In~\cite{longo1990quantization}, scalar quantization for
distributed hypothesis testing was studied, using \emph{known}
conditional distribution of features.  In contrast to all these works,
we neither assume knowledge of the sensor measurements distribution,
nor do we make independence assumptions.

The information theoretic study through error exponents where features
are observed at different nodes, is surveyed
in~\cite{amari1998statistical}. Here, differently from our single-shot
setup, an asymptotically long sequence of i.i.d. time samples, from a
fixed underlying (unknown) distribution, are jointly compressed to
distinguish between two hypotheses (\emph{e.g.,} testing for
independence).

There have also been several recent works in information theory and
machine learning on distributed probability estimation, property
testing and
simulation~\cite{han2018geometric,diakonikolas2017communication,acharya2018distributed}. These
works assume that each node observes all features, and has access to
independent samples from an unknown underlying distribution. Distinct
from this in our setup, each node observes subsets of (non-overlaping)
features, \emph{i.e.,} the observations at different nodes are not
identically distributed.

{\bf Scalar and Vector quantization.}  In
\cite{Poor-HVQ-88,sun2013distributed,misra2011distributed} and references therein,
a high-rate quantization theory is developed for computing
\emph{known} functions from distributed observations, where the source
distributions are known.  A framework for \emph{centralized}
quantizers used for classification can be designed using the learning
vector quantization (LVQ)
framework~\cite{kohonen1996lvq,sato1996generalized}, where a number of
prototype classified vectors are defined and updated to reduce the
misclassification error. In contrast, our problem requires decentralized
quantizatio; moreover, the Voronoi tessellation in LVQ may not be
decomposable into decision boundaries applicable by distributed
quantization.

{\bf Multi-terminal function computation.}  Rate-distortion literature
has considered several related problems, where asymptotically large
number of samples are jointly represented; moreover these problems assume that
distribution of the sources are known.  In the classical CEO
problem~\cite{berger1996ceo,viswanathan1997quadratic,oohama1998rate,prabhakaran2004rate},
a central node wishes to reconstruct a value from independently
corrupted versions measured at distributed sensors.  Distributed
compression for functional computation with distortion has been
studied
in~\cite{wagner2011distributed,krithivasan2009lattices,doshi2010functional}. Our
work focuses on single-shot quantization for apriori unknown source
distributions, without an explicit knowledge of the classifier
function.

{\bf Model Compression.} Quantization is also used in inference tasks
for model
compression~\cite{ZhouYGXC17,jacob2018quantization,wu2018training,RD-ModelCompICML19},
with the goal to simplify implementation and reduce storage. However,
differently from our work, the goal is to quantize the model operands
rather than focus on distributively quantizing the inputs to the
model.

{\bf Decision stumps.} A closely related algorithm that could be
adapted to use for feature quantization is
AdaBoost~\cite{freund1997decision,hastie2009multi} with decision
stumps. In this case, the majority rule on the decision stumps
naturally partitions (quantizes) the space based on the number of
stumps corresponding to each feature.  However, AdaBoost with decision
stumps will not necessarily be able to return viable quantizers in all
cases. For example, if we consider labeled data points with an XOR
pattern in $\mbb{R}^2$ (centered at [-1,-1], [-1,1],[1,-1] and [1,1]
as shown in Figure \ref{fig:xor_example},
Appendix~\ref{sec:xor_example}) then AdaBoost with stumps is not able
to represent the XOR pattern in its decision
regions~\cite{tu2014layered}. In contrast, it is not difficult to see
that two quantization boundaries at $x_1 = 0$ and $x_2 = 0$,
respectively, are enough to allow a good classifier to correctly
classify the quantized points.

{\bf Latent variable models.} Perhaps the closest approach to ours,
are those of learning latent representations for data
reconstruction. In variational autoencoders
(VAEs)~\cite{kingma2013auto,higgins2017beta,zhao2019infovae,mathieu2019disentangling},
a continuous latent representation space is learned from the inputs,
that can then be used to reconstruct inputs or generate new data that
follow the same distribution as the data in the training set.
In~\cite{VQ-VAE}, the authors present a new way of training VAEs to
learn discrete latent space representations, which naturally leads to
a compression algorithm, since continuous (or full-precision) inputs
can be mapped to discrete latent representations typically using fewer
bits. In~\cite{JSCC}, the authors also study the inference of discrete
latent variables for joint source and channel coding. In particular,
discrete latent variables are learned such that they can be used for
compression as in VQ-VAEs; they are also robust to transmission over
noisy discrete channels for reconstruction.

A main difference between these implementations and our setup is the
\emph{distributed} (decomposable) structure of our quantization
system. In addition, it is intuitive to expect that reconstruction may
not yield the best classification results; what is perceived by the
reconstruction loss as a good approximation of the image might be
inappropriate for a classifier as compared to the performance of a
classification tailored approach. We explore the latter point
empirically in Section~\ref{sec:experiments}.  Therefore, our work can
be thought of as an approach to distributed discrete (neural)
representation for classification.  Recently,~\cite{45903} presented
a variational approximation to the information bottleneck
method~\cite{tishby2000information} to design classifiers. However,
differently from our work, it assumed a centralized encoder and
continuous latent variables.

\section{Notation and Problem Formulation}
\label{sec:model}
\begin{figure}[h!]
	\centering
	\includegraphics[width=0.9\textwidth]{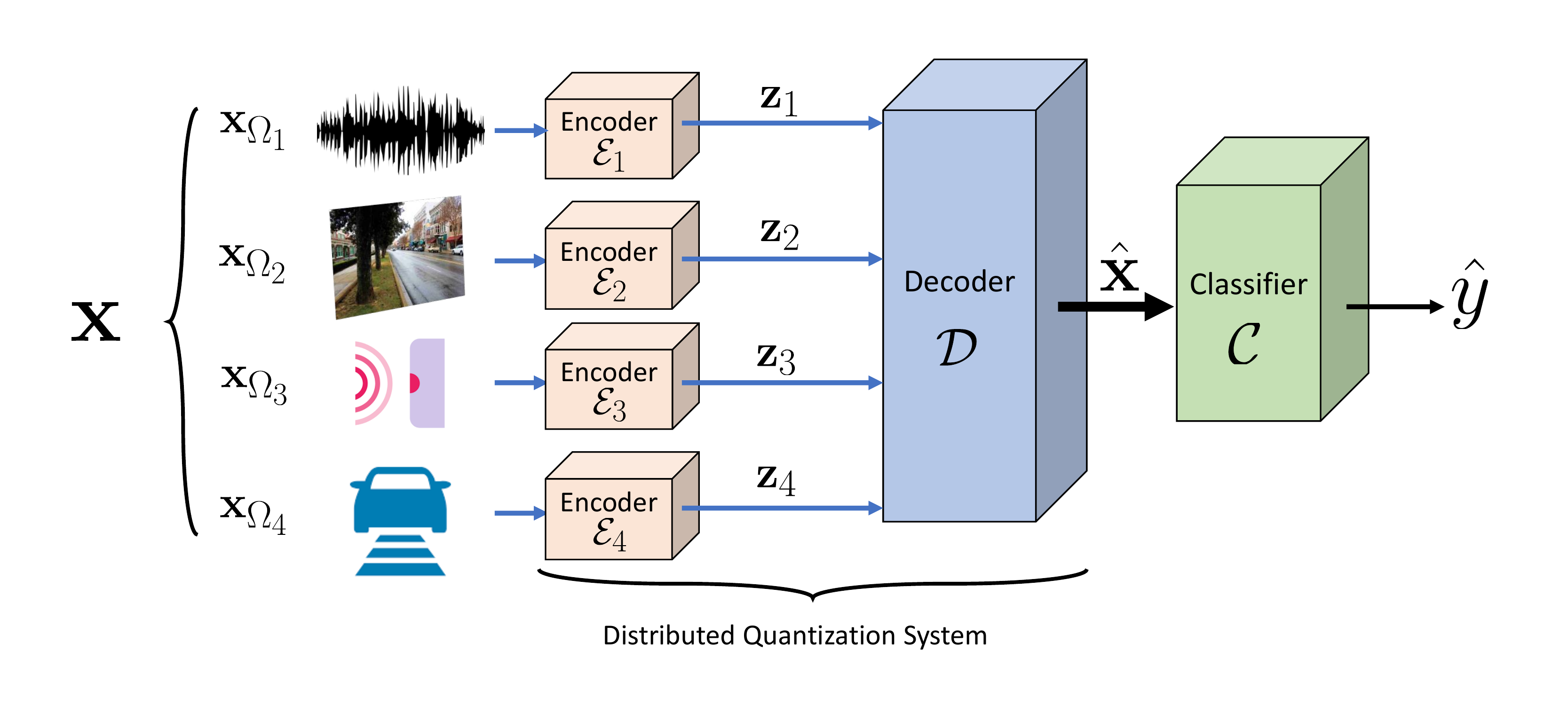}
	\vspace{-1em}
	\caption{\label{fig:model} An example for distributed quantization of features for classification with $K=4$ nodes.}
\end{figure}
Let $\mcal{X}^n$ be the $n$-dimensional space of possible input features (e.g., sensor measurements, images, text, etc.) and $\mcal{Y}$ be the set of possible classification classes over the space $\mcal{X}^n$. 
We use $\bx \in \mcal{X}^n$ to denote an $n$-dimensional input and $y(\mb{x}) \in \mcal{Y}$ to denote the class associated with $\mb{x}$. 

We consider a scenario where the features of $\mb{x}$ are not collected (sensed) all at the same entity but instead  at $K$ distributed nodes. An example is shown in Figure~\ref{fig:model} for $K=4$.
In particular, node $k\in [1:K]$ collects the vector of features $\mathbf{x}_{\Omega_k}$ indexed by a set $\Omega_k \subseteq [1:n]$. We assume that the index sets $\{\Omega_k\}_{k=1}^K$ are disjoint and their union is the set $[1:n]$. We also assume that the feature vector $\mathbf{x}$ is ordered such that $\mb{x} = [\mb{x}_{\Omega_1},\mb{x}_{\Omega_2},\cdots,\mb{x}_{\Omega_K}]$. We denote by $\mcal{X}_{\Omega_k}$ the space defined by the features indexed by $\Omega_k$, with
$\mcal{X}^n = \prod_{k=1}^K \mcal{X}_{\Omega_k}$.

A central node is interested in classifying the collected input features $\mb{x}$  using a pretrained subdifferentiable\footnote{By ``subdifferentiable'' classifier, we refer to a classifier that has non-trivial subgradient sets (i.e., not zero everywhere).} classifier $\mcal{C}(.)$ where
\begin{align}\label{def:C}
\mcal{C}(.) : \mcal{X}^n \to \mbb{R}^{|\mcal{Y}|}, 
\end{align}
and the output class label $\widehat{y}(\mb{x})$ is given by
\begin{align}\label{def:y_hat}
\widehat{y}(\mb{x}) = \argmax_{i \in [1:|\mcal{Y}|]} [\mcal{C}(\bx)]_i.
\end{align}
Note that $\widehat{y}(\mb{x})$ may be different than the true label $y(\mb{x})$.
With no communication constraints, node $k$ can perfectly convey  $\mathbf{x}_{\Omega_k}$ to the central node.
Instead, we assume that node $k$ is constrained to use $R_k$ bits (much less than full precision). That is, node $k$ uses a
quantizer/encoder $\mcal{E}_k$, that takes as input  $\mb{x}_{\Omega_k}$
and produces a discrete representation $\mb{z}_k$
from an alphabet $\mcal{M}_k$ of size at most $2^{R_k}$, with
\begin{align}
\label{eq:encoding}
\mb{z}_k &= \mcal{E}_k(\mathbf{x}_{\Omega_k}) : \mcal{X}_{\Omega_k} \to \mcal{M}_k ,\quad \forall k \in [1:K].
\end{align} 
Based on~\eqref{eq:encoding}, we will denote the preimage of $\mcal{E}_k$ as 
\begin{align}\label{eq:preimage}
\mathcal{E}_k^{-1}(\bz_k)=\{\bx_{\Omega_k}\in \mathcal{X}_{\Omega_k} |\mathcal{E}_k(\bx_{\Omega_k})=\bz_k\}.
\end{align}
Note that the computed $\bz_k$ depends only on $\bx_{\Omega_k}$, the features available at node $k$.
~
At the central node, in order to apply the pretrained classifier $\mcal{C}$, a decoder $\mcal{D}$ generates $\widehat{\mb{x}}\in \mathcal{X}^n$ from $\mb{z} = [\bz_1, \bz_2, \cdots, \bz_K]$ and uses it as the input to the classifier. 
The end-to-end operation, depicted in Figure~\ref{fig:model}, is given by \eqref{eq:encoding} and
\begin{align}
\label{eq:decoding}
&\bz = [\bz_1, \bz_2, \cdots, \bz_K] \nonumber \\
&\widehat{\mb{x}} = \mcal{D}(\bz) : \prod_{k=1}^K \mcal{M}_k \to \mcal{X}^n, \\
&\widehat{y}(\widehat{\bx}) = \argmax_{i \in [1:|\mcal{Y}|]} [\mcal{C}(\widehat{\bx})]_i. \nonumber
\end{align}

We refer to a set of encoders $\mathbf{\mathcal{E}}=\{\mcal{E}_k\}_{k=1}^K$ and a decoder $\mcal{D}$ as a {\em distributed quantization system  $(\mathbf{\mathcal{E}},\mcal{D})$.}
Ideally, we would like to use an $(\mathbf{\mathcal{E}},\mcal{D})$ system that minimizes the probability of misclassification. That is, the  encoders and decoder are the solution of the optimization problem
\begin{align}
\label{eq:misclassify_loss}
&\min_{\mathbf{\mathbf{\mathcal{E}}},\mcal{D}:|\mathcal{M}_k|\leq 2^{R_k}}  \mbb{E}_{\bx,y(\bx)\sim p(\bx,y(\bx))}[\mbb{I}(\widehat{y}(\widehat{\mb{x}})\neq y(\bx))] \nonumber \\
&\quad = \min_{\mathbf{\mathbf{\mathcal{E}}},\mcal{D}:|\mathcal{M}_k|\leq 2^{R_k}}  \mbb{E}_{\bx,y(\bx)\sim p(\bx,y(\bx))}[\mbb{I}(\widehat{y}(\mathcal{D}(\mathbf{\mathbf{\mathcal{E}}}(\mathbf{x})))\neq y(\bx))],
\end{align} where: (i) $p(\bx,y(\bx))$ is the input data distribution; (ii) 
 $\widehat{y}(\widehat{\bx})$ and $\widehat{\bx}$ are obtained from $\mb{x}$ using~\eqref{eq:encoding} and~\eqref{eq:decoding}; and
(iii) we used  $\bz = \mathbf{\mathcal{E}}(\bx)$ for brevity.

However, in this paper we assume that the distribution $p(\bx,y(\bx))$ is not known: instead, we are given a dataset $\mcal{T} = \{(\mb{x}^{(i)},y(\bx^{(i)}))\}_{i=1}^N$ which contains $N$ independent samples drawn from  $p(\bx,y(\bx))$. 
Thus, we can only empirically approximate the expectation in~\eqref{eq:misclassify_loss} using the dataset $\mcal{T}$, and hence, our objective is to minimize the {\em misclassification loss} $\mcal{L}(\mathbf{\mathbf{\mathcal{E}}},\mcal{D},\mcal{T})$, calculated as
\begin{align}
\label{eq:misclassify_loss_only}
\mcal{L}(\mathbf{\mathbf{\mathcal{E}}},\mcal{D},\mcal{T}) = \frac{1}{N}\sum_{i=1}^N [\mbb{I}(\widehat{y}(\widehat{\mb{x}}^{(i)})\neq y(\bx^{(i)}))].
\end{align}
In the rest of the paper, we will say that a distributed quantization system
($\mathbf{\mathbf{\mathcal{E}}},\mcal{D}$) is {\em optimal}, if the encoders  $\mathbf{\mathcal{E}} = \{\mcal{E}_k\}_{k=1}^K$ and the decoder $\mcal{D}$ are an optimal solution of the problem
\begin{equation}
\label{eq:misclassify_loss_emp}
\min_{\mathbf{\mathbf{\mathcal{E}}},\mcal{D}:|\mathcal{M}_k|\leq 2^{R_k}}  \mcal{L}(\mathbf{\mathbf{\mathcal{E}}},\mcal{D},\mcal{T}).
\end{equation}

\begin{rem}
{\rm Note that in~\eqref{eq:misclassify_loss_only}, given a labeled
  dataset $\mcal{T},$ our objective is to minimize the empirical
  probability of misclassifying the data points $\bx^{(i)}$ after
  quantization. Instead, if we are given a local dataset of unlabeled
  data points, $\mcal{T}_u= \{\bx^{(i)}\}_{i=1}^N,$ we could create a
  labeled dataset, $\widehat{\mcal{T}} =
  \{(\bx^{(i)},\widehat{y}(\bx)\}_{i=1}^N$, by applying the pretrained
  classifier $\mcal{C}$ on the local unlabeled data $\mcal{T}_u$.  We
  can then apply~\eqref{eq:misclassify_loss_only} - and the logic in the
  remainder of the paper - on the data $\widehat{\mcal{T}} =
  \{(\bx^{(i)},\widehat{y}(\bx)\}_{i=1}^N$.  In this case, our
  objective is equivalent to keeping the classifier output consistent
  before and after applying the distributed quantization system.  }
\end{rem}
	\begin{figure}[h!]
		\centering
		\includegraphics[width=0.9\textwidth]{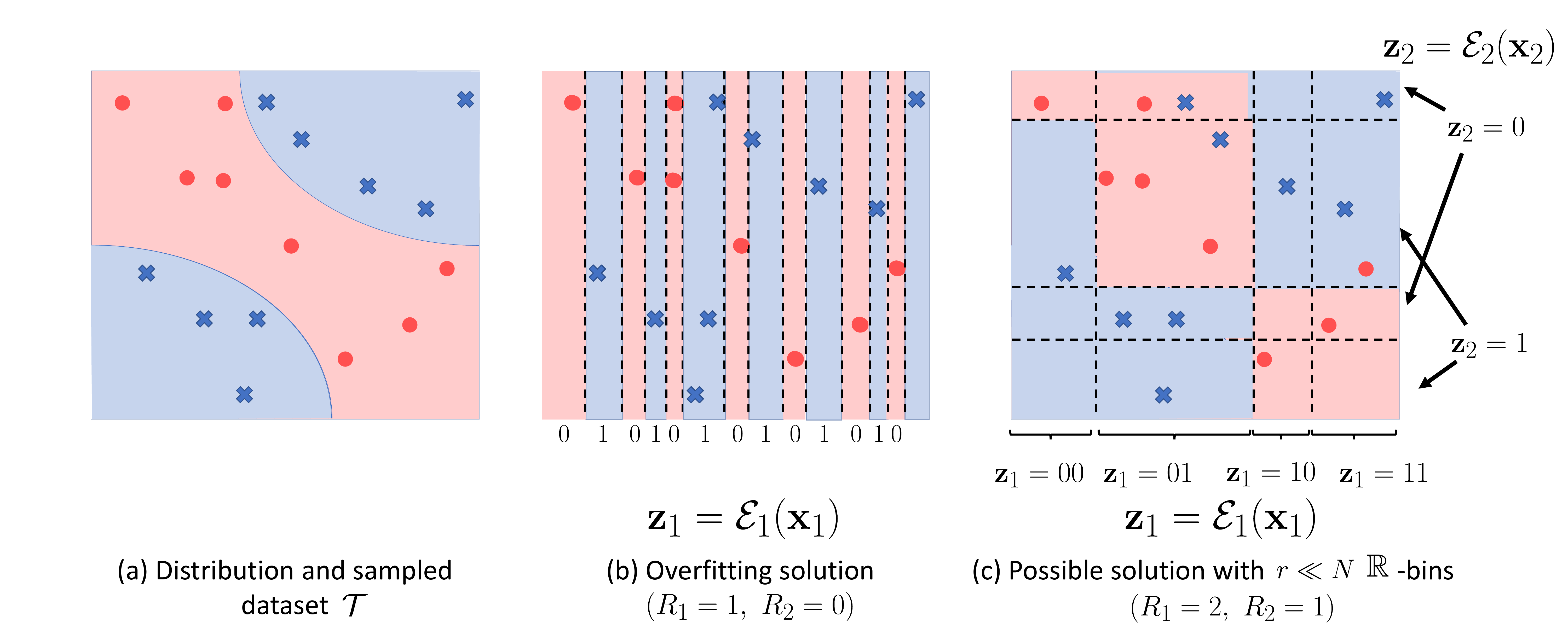}
		\vspace{-1em}
		\caption{Quantization example.}
		\label{fig:preimage_example}
	\end{figure}        
\begin{rem}\label{rem:overfitting}
{\rm 
With no structural restrictions on the encoders $\mathbf{\mathcal{E}}$, it is possible to achieve $\mcal{L}(\mathbf{\mathbf{\mathcal{E}}},\mcal{D},\mcal{T})=0$ almost-surely through over-fitting. For instance, if the distribution $p(\bx)$  is a continuous distribution, the probability that two data points have the same value for $\bx_{\Omega_k}$ is zero for any $k$. Thus, we can consider for example the first node ($k=1$), and partition the space of $\bx_{\Omega_1}$ into $N$ disjoint regions such that each region contains only one training data point. Then, for each region, the encoder function $\mcal{E}_1(.)$ at node $1$ can directly output the class $y(\bx^{(i)})$ of the data point $\bx^{(i)}$ contained in that region, then the decoder outputs a data point that is classified by the classifier to be $y(\bx^{(i)})$.
This requires only $\log_2(|\mcal{Y}|)$ bits. Hence, the rates $[R_1,...,R_K]=[\log_2(|\mcal{Y}|),0,...,0]$ are sufficient to achieve $\mcal{L}(\mathbf{\mathbf{\mathcal{E}}},\mcal{D},\mcal{T}) {=} 0$. 
Although such a quantization scheme will have zero loss when evaluated on the dataset $\mcal{T}$, it would obviously handle out of sample points very poorly.
}
\end{rem}
To avoid this, we restrict the preimage $\mathcal{E}_k^{-1}(\bz_k)$ to be the union of at most $r$ $\mcal{X}_{\Omega_k}$-bins, which are defined below.
\begin{defin}[$\mcal{S}$-bin]
\label{defin:path_connected}
 {\rm
 	We say that the set $\mcal{A} \subseteq \mcal{S}$ is an $\mcal{S}$-bin if $\mcal{A}$ is path-connected~\cite{munkres2014topology} in $\mcal{S}$. A set $\mcal{A}$ is path-connected if and only if for every pair of points $a,b \in \mcal{A}$, there exists a path that connects $a, b$ which completely lies inside $\mathcal{A}$. More formally, a set $\mcal{A}$ is said to be  path-connected if and only if for every pair of points $a,b \in \mcal{A}$, there exists a continuous function $f:[0,1] \to \mcal{A}$ such that $f(0) = a$ and $f(1) = b$~\cite{munkres2014topology}.
 }
 \end{defin} 
By restricting the preimage of $\mathcal{E}_k^{-1}(\bz_k)$ to $r \ll N$ $\mcal{X}_{\Omega_k}$-bins, we force $\mathcal{E}_k$ to assign the same $\bz_k$ to a limited number of path-connected regions (earlier, these could be as many as the number of data points from the same class). By doing so, the solution discussed above (where a single encoder can fully carry the burden of classifying the data points) is eliminated.

We illustrate how the introduction of this restriction can reduce overfitting in the learned quantization system through the example shown in Figure~\ref{fig:preimage_example}.
 Figure~\ref{fig:preimage_example}(a) depicts the underlying true class function $y(\bx)$ through colored regions of $\mbb{R}^2$ and the sampled dataset $\mcal{T}$  as points scattered in the plot. Figure~\ref{fig:preimage_example}(b) shows how an overfitting quantization system (as described in Remark~\ref{rem:overfitting}) using only $\bx_1$ would approximate the underlying class function $y$. Note that, although the resulting system provides poor approximation, it classifies the dataset points perfectly (given by the background color in each region). Finally, Figure~\ref{fig:preimage_example}(c) shows an example where each encoder $\mcal{E}_k$ assigns at most 2 $\mbb{R}$-bins to the same $\bz_k$. It is not difficult to see that although the illustrated quantization system does misclassify some points in the dataset (decision given by the background in each region), it gives a better approximation of $y(\bx)$ outside the given dataset compared to the design in~\ref{fig:preimage_example}(b).

\medskip 

A summary of the notation used throughout the paper is given  in Table~\ref{tbl:notation}.
	\begin{table}
		\centering
		\begin{tabular}{@{}cl||cl@{}}
			\toprule
			{\textbf{Symbol}} & {\bf Description} & {\textbf{Symbol}} & {\bf Description}  \\ 
			\bottomrule
			$K$						& Number of distributed sensing nodes & 
			$\mcal{M}_k$			& Set of possible values for $\bz_k$ ({\bf def.}~\eqref{eq:encoding})
			\\
			$n$             		& Number of features of data point $\bx$&
			$\bz$	   				& Collection of encoder outputs $[\bz_1,\bz_2,\cdots,\bz_K]$ ({\bf def.}~\eqref{eq:decoding})
			\\
			$\mcal{Y}$      		& Set of possible classes & 
			$\mcal{M}$			    & Set of possible values for $\bz$ ({\bf def.}~\eqref{eq:decoding})
			\\
			$y$						& True class label of data point $\bx$ &
			$\widehat{\bx}$ 		& Reconstructed input from $\bz$ using $\mcal{D}$ ({\bf def.}~\eqref{eq:decoding})
			\\			
			$\Omega_k$      		& Set of features at node $k$ &
			$\mcal{L}(\mcal{E},\mcal{D},\mcal{T})$ & Empirical misclassification loss ({\bf def.}~\eqref{eq:misclassify_loss_only}) 
			\\
			$R_k$					& Number of bits/data point at node $k$ &
			$\mcal{S}$-bin			& Path-connected subset of $\mcal{S}$ ({\bf def.} Definition~\ref{defin:path_connected})
			\\
			$\mcal{C}$ 				& Pretrained classifier ({\bf def.}~\eqref{def:C}) &
			$p_e$					& Misclassification loss threshold ({\bf def.} Lemma~\ref{lem:poly_equiv})
			\\
			$\widehat{y}$			& Output class by classifier $\mcal{C}$ ({\bf def.}~\eqref{def:y_hat}) &
			$d$						& Set of boundaries used in optimal on-the-line quantizer 
			\\
			$\mcal{E}_k$ 			& Encoder at node $k$ ({\bf def.}~\eqref{eq:encoding}) &
			$f_k(\cdot;\theta_k)$   & Neural network of neural driven encoder $\mcal{E}_k$
			\\
			$\mcal{E}_k^{-1}$	 	& Preimage of $\mcal{E}_k$({\bf def.}~\eqref{eq:preimage}) &
			$\mbb{R}^{m_k}$  		& Output space of neural encoder $f_k$ ({\bf def.}~\eqref{eq:neural_encoding})
			\\
			$\mcal{E}$ 				& Collection of encoders $\{\mcal{E}_k\}_{k=1}^K$ &
			$\mcal{Q}_k$			& Quantizer of neural driven encoder $\mcal{E}_k$ ({\bf def~\eqref{eq:neural_encoding}})
			\\
			$\mcal{D}$ 				& Decoder at central node ({\bf def.}~\eqref{eq:decoding}) & 
			$\mcal{G}$				& Initial mapping of neural driven decoder $\mcal{D}$ ({\bf def.}~\eqref{eq:neural_decoding}) 
			\\
			$\mcal{T}$ 				& Training dataset $\{(\bx^{(i)},y(\bx^{(i)})\}_{i=1}^N$ &
			$g(\cdot;\phi)$   & Neural network of neural driven decoder $\mcal{D}$ ({\bf def.}~\eqref{eq:neural_decoding})
			\\
			$N$						& Number of data points in $\mcal{T}$ & 
			$\mcal{L}^{\rm c}$ & Misclassification loss for neural based approach ({\bf def.}~\eqref{eq:misclassify_loss_emp3}) 			
			\\			
			$\bz_k$	   				& Output of encoder $\mcal{E}_k$ ({\bf def.}~\eqref{eq:encoding}) &
			$\mcal{L}^{\rm q}$ & Quantization loss for neural based approach ({\bf def.}~\eqref{eq:quantization_loss})
			\\
			\bottomrule
		\end{tabular}
		\caption{Notation used throughout the paper.}
		\label{tbl:notation}
	\end{table}

	\section{On the complexity of finding an optimal distributed quantization system}
\label{sec:complexity}
In this section we study the complexity of finding an optimal quantization system $(\mathbf{\mathcal{E}},\mcal{D})$ that minimizes the loss in~\eqref{eq:misclassify_loss_emp} over the dataset $\mcal{T}$. 
We first start by describing how to find an optimal decoder $\mathcal{D}$ assuming that the optimal encoders $\mathbf{\mathcal{E}}^\star = \{\mathcal{E}^\star_k\}_{k=1}^K$ are given. We then discuss the complexity of finding optimal encoders and show that the problem is NP-hard in all cases but one. 
For the case where the problem is not NP-hard, we propose a polynomial-time algorithm to find the optimal quantization system (encoders/decoder) under some structural restrictions on the encoders.

\subsection{Optimal decoder} \label{optimal_dec}
Assuming that the optimal encoders $\mathbf{\mathcal{E}}^\star$ are given, we are interested in a decoder $\mathcal{D}^\star$ that minimizes the misclassification loss in~\eqref{eq:misclassify_loss_emp}.
For brevity, let us denote the set of all possible encoded values $\mathbf{z} = \mathbf{\mathcal{E}^\star}(\bx)$ as $\mcal{M}$, i.e., 
\begin{align}\label{eq:z_in_M}
	\mathbf{z}\in \mathcal{M},\quad \text{where}\quad \mathcal{M}=\prod_{k=1}^K \mcal{M}_k.
\end{align}
The operation of the optimal decoder is described in the following lemma.
\begin{lem}\label{lem:opt_decoder}
	For given fixed encoders $\mcal{E}^\star$, the optimal decoder $\mcal{D}^\star$ is defined by
	\begin{equation}
		\mathcal{D}^\star(\mathbf{z}) = \widehat{\bx}\quad \text{s.t.}\quad	\widehat{y}(\widehat{\bx}) = \argmax_{c \in \mcal{Y}} \sum_{i:\mathcal{E}^\star(\bx^{(i)}) = \bz}\mbb{I}\left[y(\mb{x}^{(i)}) = c\right],
	\end{equation}
	where $\widehat{y}$, defined in~\eqref{def:y_hat}, is the label output by the classifier $\mcal{C}$ for $\widehat{\bx}$.
\end{lem}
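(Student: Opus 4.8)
The plan is to exploit the fact that, once the encoders $\mathcal{E}^\star$ are held fixed, the decoder $\mathcal{D}$ can influence the loss only through the single reconstruction $\widehat{\bx} = \mathcal{D}(\bz)$ it assigns to each codeword $\bz \in \mathcal{M}$, and that these assignments can be optimized independently across distinct values of $\bz$. First I would partition the training indices according to their (now fixed) encodings: for each $\bz \in \mathcal{M}$ set $I_{\bz} = \{\, i : \mathcal{E}^\star(\bx^{(i)}) = \bz \,\}$. Since every point with $\mathcal{E}^\star(\bx^{(i)}) = \bz$ is mapped to the same $\widehat{\bx} = \mathcal{D}(\bz)$ and hence receives the same classifier label $\widehat{y}(\mathcal{D}(\bz))$, the empirical loss in~\eqref{eq:misclassify_loss_only} decomposes additively over codewords as
\[
\mcal{L}(\mathcal{E}^\star,\mathcal{D},\mcal{T}) = \frac{1}{N}\sum_{\bz \in \mathcal{M}} \sum_{i \in I_{\bz}} \mbb{I}\!\left(\widehat{y}(\mathcal{D}(\bz)) \neq y(\bx^{(i)})\right),
\]
where codewords with empty $I_{\bz}$ contribute nothing and so leave the decoder free.

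Next I would note that each inner sum depends on $\mathcal{D}$ only through the induced label $c = \widehat{y}(\mathcal{D}(\bz))$, and rewrite it as $|I_{\bz}| - \sum_{i \in I_{\bz}} \mbb{I}(y(\bx^{(i)}) = c)$. Minimizing over the choice of $c$ is therefore equivalent to \emph{maximizing} the number of correctly labeled points $\sum_{i \in I_{\bz}} \mbb{I}(y(\bx^{(i)}) = c)$, whose maximizer is exactly the plurality (most frequent) true class among the points in $I_{\bz}$, i.e. $\argmax_{c \in \mcal{Y}} \sum_{i : \mathcal{E}^\star(\bx^{(i)}) = \bz} \mbb{I}(y(\bx^{(i)}) = c)$. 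Because the decomposition fully decouples the per-codeword terms, making this plurality choice simultaneously for every $\bz$ minimizes the total loss; this is precisely the rule defining $\mathcal{D}^\star$ in the statement.

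The single point that needs care — which I regard as a modeling caveat rather than a genuine computational obstacle — is that the decoder emits a reconstruction $\widehat{\bx}$, not a label directly, the label being obtained through $\widehat{y}(\widehat{\bx}) = \argmax_{i}[\mcal{C}(\widehat{\bx})]_i$. The argument above implicitly assumes that for the target plurality class $c^\star$ there exists some $\widehat{\bx} \in \mcal{X}^n$ with $\widehat{y}(\widehat{\bx}) = c^\star$, so that the prescribed $\mathcal{D}^\star$ is realizable. I would therefore either state the mild assumption that the classifier $\mcal{C}$ attains every class in its decision regions, or, failing that, restrict the $\argmax$ over $c$ to the labels actually in the range of $\widehat{y}$; in either case the same decoupling argument applies verbatim. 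Combining the per-codeword optimality with the additive decomposition then gives global optimality of $\mathcal{D}^\star$, completing the proof.
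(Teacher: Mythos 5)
Your proof is correct and follows essentially the same route as the paper's: decompose the empirical loss additively over codewords $\bz$, observe that each inner sum depends on $\mcal{D}$ only through the induced label, and pick the plurality class per codeword by converting the $\argmin$ of misclassifications into an $\argmax$ of correct classifications. Your added remark about realizability of the plurality label by some $\widehat{\bx}\in\mcal{X}^n$ is a reasonable caveat the paper leaves implicit, but it does not change the argument.
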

\begin{proof}
	The misclassification loss in~\eqref{eq:misclassify_loss_only} can be rewritten as 
	\begin{equation} \label{obj_1}
	\mcal{L}(\mathbf{\mathbf{\mathcal{E}^\star}},\mcal{D},\mcal{T})=\frac{1}{N}\sum_{\mathbf{z}\in \mcal{M}} \sum_{i:\mathbf{\mathbf{\mathcal{E}^\star}}(\mathbf{x}^{(i)})=\mathbf{z}}\mbb{I}\left[y(\mb{x}^{(i)})\neq \widehat{y}(\mcal{D}(\mb{z}))\right],
	\end{equation} 
	where $\widehat{y}(\mcal{D}(\mb{z}))$ is obtained by \eqref{eq:decoding}. 
	Since for a fixed $\bz$, $\mcal{D}(\mb{z})$ only affects one term in the outer summation in~\eqref{obj_1}, each of the outer summation terms can be independently minimized by choosing $\mathcal{D}^\star(\mathbf{z})$ to be a point $\widehat{\bx} \in \mcal{X}^n$ satisfying
	\begin{align}
	\widehat{y}(\widehat{\bx}) &= \argmin_{c \in \mcal{Y}} \sum_{i:\mathbf{\mathbf{\mathcal{E}^\star}}(\mathbf{x}^{(i)})=\mathbf{z}}\mbb{I}\left[y(\mb{x}^{(i)}) \neq c\right] = \argmin_{c \in \mcal{Y}} \left(N-\sum_{i:\mathbf{\mathbf{\mathcal{E}^\star}}(\mathbf{x}^{(i)})=\mathbf{z}}\mbb{I}\left[y(\mb{x}^{(i)}) = c\right]\right)\nonumber \\
	&=  \argmax_{c \in \mcal{Y}} \sum_{i:\mathbf{\mathbf{\mathcal{E}^\star}}(\mathbf{x}^{(i)})=\mathbf{z}}\mbb{I}\left[y(\mb{x}^{(i)}) = c\right].
	\end{align}
\end{proof}
That is, $\widehat{\bx} = \mathcal{D}^\star(\mathbf{z})$ can be any point in $\mathcal{X}^n$ such that $\widehat{y}(\widehat{\bx})$ (the decision of classifier $\mcal{C}$ for $\widehat{\bx}$) is the majority true label $y(\bx)$ among the points of the dataset $\mcal{T}$ that fall in $\mathbf{\mathcal{E}}^{-1}(\mb{z})$.
For instance, in the example shown in Figure~\ref{fig:preimage_example}, if $\mathbf{\mathcal{E}}^{\star-1}(\mb{z})$ has one ``x'' (blue)  and two ``o'' (red) training points, any point that the classifier $\mcal{C}$ would classify to be ``o'' (red) can be selected as $\widehat{\mb{x}} = \mcal{D}^\star(\bz)$.
Thus, the optimal decoder $\mcal{D}^\star$ manipulates the classifier $\mcal{C}$ to output a classification that best serves the loss function in~\eqref{obj_1}.

With the optimal decoder in mind, we are now ready to discuss the hardness of the problem of finding the optimal encoders $\mathbf{\mathcal{E}}$ in the following subsection.

\subsection{Hardness of finding an optimal quantizer}
\label{subsec:hardness}
Given a training dataset $\mcal{T} = \{(\bx^{(i)},y^{(i)})\}_{i=1}^N$, our goal is to design an \emph{optimal} distributed quantization system $(\mathbf{\mathcal{E}},\mcal{D})$ which minimizes the misclassification loss in~\eqref{eq:misclassify_loss_only} for a given communication budget of $R_k$ bits per data point at each node $k$.
We study four different cases of the problem:
\begin{enumerate}[{\bf (P1)}]
	\item For number of features $n > 1$, number of classes $|\mcal{Y}| > 1$, dataset $\mcal{T} = \{(\bx^{(i)}, y^{(i)})\}_{i=1}^N$ and given $\{R_k\}_{k=1}^K$: find the optimal ($\mathbf{\mathcal{E}}$, $\mcal{D}$) that minimizes the misclassification loss $\mcal{L}(\mathbf{\mathbf{\mathcal{E}}},\mcal{D},\mcal{T})$, assuming that $\mcal{E}_k^{-1}(\bz_k)$ is the union of $r < N$ $\mathcal{X}_{\Omega_k}$-bins. \label{proba}
	\item The restriction of problem \ref{proba} to the case of linearly separable data. \label{probb}
	\item The restriction of problem \ref{proba} to the case where $\mcal{E}_k^{-1}(\bz_k)$ is a single $\mathcal{X}_{\Omega_k}$-bin. \label{probc}
	\item The restriction of problem \ref{probc} to the case of linearly separable data. \label{probd}
\end{enumerate}

Next, we prove that the first three problems for $n>1$ and the last problem for $n>2$ are NP-hard and also prove their hardness of approximation. 
In the general case, as a result of the hardness and hardness of approximation, we focus on finding heuristic approaches to find a good distributed quantization solution that may not necessarily be optimal.

\begin{rem}
{\rm 
For problem~\ref{probd}, we prove the hardness results for number of features $n \geq 3$. In the next subsection, we introduce optimal polynomial-time algorithm for the case $n = 2$ under some structural restrictions on the encoders. 
}
\end{rem}

For all the problems, in order to prove the goal results, it is sufficient to consider prototype settings with predefined number of features $n$, number of classes $|\mcal{Y}|$, number of nodes $K$, and communication budget $\{R_k\}_{k=1}^K$ and allow the size $N$ of the dataset $\mcal{T}$ to grow. It follows that the general problems, which are expansions of these prototype problems, are also NP-hard. In particular, in all cases, we assume that each distributed node quantizes only one feature (i.e., $n=K$) and the number of classes $|\mcal{Y}| = 2$. The remaining parameters are defined below for each problem
\begin{align}
	\label{eq:prob_parameters}
	&\bullet \ref{proba}\ \&\ \ref{probb}: n = K = 2,\ \text{finite}\ R_1\ \text{and}\ R_2 \to \infty;\nonumber \\
	&\bullet \ref{probc}: n = K = 2,\ \text{finite}\ R_1 = R_2;\\
	&\bullet \ref{probd}: n = K = 3,\ R_3 = 0\ \text{and finite}\ R_1 = R_2.\nonumber
\end{align}

We start by showing that, under polynomial-time reductions, the problem of finding the optimal quantization system in~\ref{proba}-~\ref{probd} with the aforementioned parameters is equivalent to finding the minimum number of bits $\{R_k\}_{k=1}^K$ required for a particular fixed misclassifcation error $p_e$. In particular, the equivalence is summarized in the following lemma.

\begin{lem}	\label{lem:poly_equiv}
	For a fixed $p_e \in [0,1]$ and number of classes $|\mcal{Y}| = 2$: the problems \ref{proba} - \ref{probd} with parameters in \eqref{eq:prob_parameters} are equivalent to \ref{i} - \ref{iv} below under polynomial-time reductions:
	\begin{enumerate}[{\bf(P1')}]
		\item For number of features $n = 2$, $R_2\to \infty$, finding the minimum $R_1$ for which $\mcal{L}(\mathbf{\mathbf{\mathcal{E}}},\mcal{D},\mcal{T})<p_e$, assuming that $\mcal{E}_k^{-1}(\bz_k)$ is the union of $r < N$ $\mathcal{X}_{\Omega_k}$-bins. \label{i}
		\item The restriction of problem \ref{i} to the case of linearly separable data. \label{ii}
		\item For number of features $n = 2$, finding the minimum $R_1=R_2$ for which $\mcal{L}(\mathbf{\mathbf{\mathcal{E}}},\mcal{D},\mcal{T})<p_e$, assuming that $\mcal{E}_k^{-1}(\bz_k)$ is a single $\mathcal{X}_{\Omega_k}$-bin. \label{iii}
		\item For number of features $n {=} 3$, $R_3{=}0$, finding the minimum $R_1=R_2$ for which $\mcal{L}(\mathbf{\mathbf{\mathcal{E}}},\mcal{D},\mcal{T})<p_e$ for the case of linearly separable data, assuming that $\mcal{E}_k^{-1}(\bz_k)$ is a single $\mathcal{X}_{\Omega_k}$-bin. \label{iv}
	\end{enumerate}
\end{lem}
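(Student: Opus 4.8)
The plan is to prove the equivalence by exhibiting polynomial-time reductions in both directions between each rate-constrained problem (one of \ref{proba}--\ref{probd}, where the budget $\{R_k\}$ is fixed and we minimize the loss) and its threshold-constrained counterpart (the corresponding problem among \ref{i}--\ref{iv}, where the target loss $p_e$ is fixed and we minimize the rate). The two formulations are ``inverses'' of one another, so the crux is to argue that the minimum achievable loss, viewed as a function of the available rate, is well-behaved enough that an oracle for either problem can be queried a polynomial number of times to solve the other.

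First I would establish three structural facts. (i) \emph{Monotonicity}: the optimal loss $\mcal{L}^\star(R_1)$ (for \ref{proba},\ref{probb}, with $R_2 \to \infty$) and $\mcal{L}^\star(R_1)$ with $R_1 = R_2$ (for \ref{probc},\ref{probd}) is non-increasing in the rate, because any quantization system realizable with a given alphabet can be replicated with a larger alphabet by simply leaving the extra symbols unused; this preserves both the single-bin and the union-of-$r$-bins restrictions. (ii) \emph{Discreteness of the loss}: by~\eqref{eq:misclassify_loss_only}, $\mcal{L}(\mathbf{\mathcal{E}},\mcal{D},\mcal{T})$ takes values only in the finite set $\{0, 1/N, 2/N, \ldots, 1\}$, so there are at most $N+1$ distinct thresholds to consider. (iii) \emph{A polynomial bound on the relevant rate range}: once the alphabet size $2^{R_1}$ reaches $N$, each encoder can isolate every distinct value that $\bx_{\Omega_1}$ takes on the training set, so the loss cannot decrease further; hence only rates $R_1 \in \{0, 1, \ldots, \lceil \log_2 N\rceil\}$ need be examined, a set of size $O(\log N)$.

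Given these facts, the two reductions follow. To reduce a rate-constrained problem to its threshold-constrained version, I would recover $\mcal{L}^\star(R_1)$ for the given budget by binary-searching over the $O(N)$ candidate loss values $j/N$: by definition, $\mcal{L}^\star(R_1) < j/N$ holds exactly when the minimum rate achieving loss below $j/N$ is at most $R_1$, a query answered by the threshold oracle, so $O(\log N)$ calls pin down $\mcal{L}^\star(R_1)$. Conversely, to reduce a threshold-constrained problem to its rate-constrained version, I would use monotonicity and fact (iii): scan (or binary-search) over the $O(\log N)$ candidate rates, query the rate oracle for $\mcal{L}^\star(R_1)$ at each, and output the smallest $R_1$ with $\mcal{L}^\star(R_1) < p_e$. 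Both directions run in time polynomial in $N$ and make polynomially many oracle calls, and each reduction preserves the structural restriction (single bin / union of $r$ bins) and the rate pattern ($R_2 \to \infty$ for \ref{i},\ref{ii} versus $R_1 = R_2$ for \ref{iii},\ref{iv}) of the problem being reduced.

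The main obstacle I anticipate is verifying the structural facts uniformly across all four cases while respecting their differing bin and rate constraints---in particular, confirming that monotonicity and the $O(\log N)$ rate bound survive the ``union of $r$ bins'' restriction in \ref{i},\ref{ii} (where refining a partition must not force more than $r$ bins per symbol) and the joint scaling $R_1 = R_2$ in \ref{iii},\ref{iv}. A secondary point to handle carefully is the boundary convention in the strict inequality $\mcal{L} < p_e$ when $p_e$ coincides with one of the grid values $j/N$, which the discreteness in fact (ii) makes harmless.
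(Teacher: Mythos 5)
Your proposal is correct and follows essentially the same route as the paper's proof: both directions rest on the observations that the empirical loss takes at most $N+1$ values and that only polynomially many alphabet sizes (at most $N$) are relevant, so each problem can be solved by polynomially many oracle calls to the other. Your additions (explicit monotonicity, binary search instead of a linear scan) are harmless refinements of the same argument rather than a different approach.
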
	
\begin{proof}
	 The proof is based on the observation that the loss $\mcal{L}(\mathbf{\mathbf{\mathcal{E}}},\mcal{D},\mcal{T})$ can only take one of the $N+1$ values: $0,\frac{1}{N},\frac{2}{N},...,\frac{N}{N}$, and $2^{R_1}$ can only take values in $[1:N]$. Hence, if we have a polynomial-time algorithm which solves problem \ref{proba} in $O(f(N))$, then we can answer the mentioned question in $O(Nf(N))$ by finding the minimum loss $\forall R_1 \in [\log_2(1), \log(2), \cdots, \log_2(N)]$ and pick the smallest $R_1$ for which the minimum achieved loss is less than $p_e$. Similarly, if we have a polynomial-time algorithm that answers this question, then we can solve problem \ref{proba} in polynomial time. Hence, \ref{i} and problem \ref{proba} are equivalent under polynomial-time reduction. 
	 Following the same logic, problem \ref{probb} - \ref{probd} are equivalent to~\ref{ii} - \ref{iv}.
\end{proof}
Based on Lemma~\ref{lem:poly_equiv}, the hardness results can now be proved by working directly with~\ref{i} - \ref{iv}.
In particular, these problems are NP-hard as stated in the following theorem.

\begin{theorem} \label{hardness}
	For a fixed $p_e \in [0,1]$ and number of classes $|\mcal{Y}| = 2$, the problems \ref{i} - \ref{iv} are NP-hard. Moreover, we have that
		\begin{itemize}
		\item Approximating $2^{R_k}$ in problems \ref{i}, \ref{ii} within $O(N^{1-\epsilon})$ is NP-hard $\forall \epsilon>0$.
		\item Approximating $2^{R_k}$ in problems \ref{iii}, \ref{iv} within $O(N^{\frac{1}{2}-\epsilon})$ is NP-hard $\forall \epsilon>0$ assuming the \textit{Small Set Expansion Hypothesis} (SSEH) and that NP $\nsubseteq$ BPP.
	\end{itemize}
\end{theorem}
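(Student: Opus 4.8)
My plan is to establish NP-hardness and the inapproximability results by reducing from well-known hard problems, where the combinatorial structure of "covering data points with a limited number of bins/cells" matches a known covering or partitioning problem. The key observation is that with the parameters in~\eqref{eq:prob_parameters}, the encoders act as one-dimensional partitions: each $\mathcal{E}_k$ with rate $R_k$ cuts the line $\mathcal{X}_{\Omega_k}$ into at most $2^{R_k}$ quantization values, and by Lemma~\ref{lem:poly_equiv} the quantity we must minimize is exactly $2^{R_1}$ (or $2^{R_1}=2^{R_2}$). So for \ref{iii} and \ref{iv}, where each preimage is a single bin (an interval on the line), the number of labels $2^{R_k}$ equals (one more than) the number of cut points, and minimizing it subject to a misclassification tolerance $p_e$ becomes a geometric covering/partitioning question. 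The reduction targets should be chosen so that the "two features, product grid" structure of the quantizer corresponds to covering the points by a combinatorial object whose minimum size is hard to approximate.

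First I would handle the multi-bin problems \ref{i},\ref{ii}. Here $R_2\to\infty$ means node $2$ transmits its feature losslessly, so the whole burden falls on node $1$, whose encoder may map up to $r$ disjoint intervals to the same symbol. This "grouping intervals into $2^{R_1}$ labels, each label a union of $\le r$ bins" is naturally a \emph{graph coloring} or \emph{clique cover} type problem: I would build, from an instance of a problem whose optimum is $N^{1-\epsilon}$-inapproximable (graph coloring / chromatic number, which is NP-hard to approximate within $N^{1-\epsilon}$), a dataset in which two intervals can share a label without incurring misclassification precisely when the corresponding vertices are non-adjacent (or adjacent), so that the minimum number of labels $2^{R_1}$ equals the chromatic number. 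Because $2^{R_1}$ then inherits the $N^{1-\epsilon}$ hardness of chromatic number, this simultaneously gives NP-hardness and the stated approximation hardness; the linear-separability version \ref{ii} is obtained by embedding the same gadget so that one global linear classifier still realizes the intended labels.

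For the single-bin problems \ref{iii},\ref{iv} the $N^{1/2-\epsilon}$ factor and the invocation of the Small Set Expansion Hypothesis strongly suggest a reduction from (the densest-subgraph / biclique-cover flavor of) problems known to be SSEH-hard to approximate. Here each encoder contributes an ordered partition of its line into $2^{R_k}$ intervals, and the pair $(\mathcal{E}_1,\mathcal{E}_2)$ tiles the plane into a $2^{R_1}\times 2^{R_2}$ grid of rectangles; minimizing $2^{R_1}=2^{R_2}$ subject to each cell being (almost) monochromatic is a rectangular biclique/grid-cover problem, whose $\sqrt{N}$-type inapproximability is exactly what SSEH (plus $\mathrm{NP}\not\subseteq\mathrm{BPP}$) is known to deliver. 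I would design point gadgets so that a grid of side $s$ correctly classifies within tolerance $p_e$ if and only if the source instance admits a small-expansion set / biclique cover of size $s$; problem \ref{iv} then adds a third, rate-$0$ feature and arranges the points to be linearly separable, checking that separability does not help the quantizer beat the grid bound.

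The main obstacle I expect is the \textbf{geometric realizability} of the gadgets rather than the combinatorial accounting: I must place the $N$ labeled points on the line(s) so that (a) the intended groupings are exactly the ones achievable by \emph{path-connected} bins under the monotone ordering of $\mathbb{R}$ (an interval cannot ``skip over'' an intervening point of the wrong class), (b) the optimal decoder of Lemma~\ref{lem:opt_decoder}, which takes a majority vote and then \emph{manipulates} the classifier $\mathcal{C}$ to output that majority, cannot be exploited to undercut the reduction, and (c) for \ref{ii} and \ref{iv} the construction remains linearly separable. Reconciling the one-dimensional interval constraint with a two-dimensional combinatorial target (coloring or biclique cover) is the delicate step, and I would spend most of the effort verifying that the minimum grid/label count produced by any feasible quantizer is sandwiched exactly between the source optimum and its claimed approximation bound.
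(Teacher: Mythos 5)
Your plan lands on the same two reductions the paper uses: chromatic number for the multi-bin problems (P1'), (P2') (merging $x_1$-values is allowed without loss exactly when the corresponding vertices are non-adjacent, so $\min 2^{R_1}$ equals $\chi(G)$ and inherits the $N^{1-\epsilon}$ inapproximability of coloring), and a biclique-type problem on a product grid for the single-bin problems (P3'), (P4'), with (P4') obtained from (P3') by appending a third, zero-rate feature that carries the class label and hence makes the data linearly separable without helping the quantizer. That last trick is exactly the paper's argument for (P4'), and your observation that the $\tfrac{1}{2}$ in the exponent comes from the quadratic blow-up in the number of data points is also the paper's.

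Two things separate your proposal from a proof. First, the gadgets \emph{are} the proof, and you explicitly defer them: for (P3') the paper places, for every vertex pair $(v_1,v_2)$, a six-point gadget when $(v_1,v_2)\in E$ and a two-point gadget otherwise, arranged so that the only loss-free merges are of two designated adjacent columns (rows) belonging to a single vertex, and so that the column-merge and row-merge for a pair are mutually exclusive iff the pair is a non-edge; this yields $2^{R}=3|V_1|-W$ with $W$ the size of the maximum \emph{balanced biclique}. Verifying that no other interval merge is ever loss-free (your ``geometric realizability'' worry) is precisely what the gadget geometry is engineered to guarantee, so it cannot be left as a side condition. Second, and more consequentially, you name the target for (P3'), (P4') as a ``biclique cover'' / densest-subgraph problem; the paper reduces from \emph{maximum balanced complete bipartite subgraph}, whose $O(N^{1-\epsilon})$ hardness under SSEH and NP $\nsubseteq$ BPP (Manurangsi) is what becomes the stated $O(N^{1/2-\epsilon})$ bound after the $|V_1|\cdot|V_2|$ gadgets inflate the instance quadratically. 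Biclique cover is a different problem with different approximation status, so identifying the correct source problem is not a cosmetic choice here.
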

\begin{proof}
	We prove the results for each problem by reduction from an NP-Complete problem. In particular, we prove the result for \ref{i}, \ref{ii} by reduction from the vertex coloring problem, and for \ref{iii}, \ref{iv} by reduction from the maximum balanced biclique problem. The proof is delegated to Appendix \ref{hardness_proof}.
\end{proof}

\subsection{Optimal Quantizer for Linear Classifiers in 2D}
\label{sec:opt_line}
In the previous section, we proved that, for problem~\ref{probd} when $n>3$, it is NP-hard to find an optimal quantization system. 
In this subsection, we propose an optimal polynomial-time algorithm for problem~\ref{probd} when the number of features is $n=2$ under some structural restrictions on the encoders. Specifically, we consider a system with two distributed nodes.  Each node $k\in [1:2]$,  observes one feature $x_k$, and aims to quantize $x_k$ using $R$ bits. We assume that we have two classes ($|\mcal{Y}|=2)$ to distinguish among and that the data is linearly separable, namely, $\mcal{C}$ is a linear classifier with output $ \widehat{y}(\mb{x})=y(\mb{x})$. Moreover, without loss of generality, we assume that the features are scaled and translated such that the line  $x_1 =x_2$ separates the data. This transformation can be performed during encoding at each distributed node and reverted in the decoder $\mcal{D}$ at the central node.
\begin{figure}[!h]
	\centering
	\includegraphics[width=0.4\textwidth]{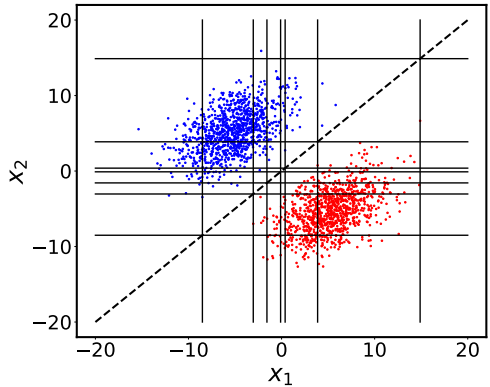}
	\caption{Example of \texttt{on-the-line} quantizer, where boundaries for $x_1$ and $x_2$ intersect along the $x_1 = x_2$ line ($45^\circ$ line).}
	\label{fig:on_line_example}
\end{figure}
Note that in this case, since $\mathcal{X}_{\Omega_k}=\mathbb{R}, \forall k \in[1:2]$ and an $\mathbb{R}$-bin is an interval $[a,b]$ for some $a,b$, then the encoder/quantizer at node $k$ divides $\mathcal{X}_{\Omega_k}=\mathbb{R}$ into $2^R$ intervals by introducing the quantization boundaries $(d_{k,1}, \cdots, d_{k,2^{R}-1})$. We here further restrict our attention to the class of {\bf on-the-line} quantizers, where the horizontal and vertical lines defining $d_{k,i}$ meet along the line $x_1=x_2$ (as in Figure~\ref{fig:on_line_example}).  
In other words, nodes $1$ and $2$ use a common encoder design $\mcal{E}_1 = \mcal{E}_2$. 
This implies that $d_{0,k}=d_{1,k}=d_{k}$, and thus we simply need to find the $2^R-1$ quantization boundaries $(d_1, \cdots, d_{2^R-1})$.
\begin{rem}\label{rem:2N}
	{\rm 
		Note that although $d_{k}$ can take any value in $\mbb{R}$, only $2N$ values can make a difference in the misclassification loss in \eqref{eq:misclassify_loss_only}:  the $2N$ values corresponding to either coordinate of the training data points $\{\mb{x}|(\mb{x},y(\mb{x})) \in \mcal{T}\}$. Indeed, these are the only boundaries that can change the bin to which a training point belongs\footnote{If a point lies on a boundary, we assume it belongs to the bin preceding that boundary.}.
	}
\end{rem}
To find the on-the-line optimal quantizer, we could simply do an exhaustive search over all possible $2N$ values (recall Remark~\ref{rem:2N}) that each of the boundaries $d_{k}$ can take which costs a complexity of $O\left( \binom{2N}{2^R}\right)$, which is not efficient. Instead, we use a recursive approach, that is based on the following two key observations.
\begin{rem}\label{rem:diagonal_bins}
	{\rm 
		The encoders/quantizers $\mcal{E}_1, \mcal{E}_2$ decompose $\mbb{R}^2$ into $2^{2R}$ $\mbb{R}^2$-bins. However, since the data points are assumed to be linearly separable by the line $x_1=x_2$, then we only need to consider the $\mbb{R}^2$-bins crossed by $x_1=x_2$ as the sources of misclassification. In particular, any other $\mbb{R}^2$-bin is completely populated by data points from the same class.
	}
\end{rem}
\begin{rem}\label{rem:induction_obs}
	{\rm
		Assume that a vector $\mathbf{s}$ lists the $2N$ possible boundary values in ascending order, i.e., $s_j \leq s_i$ for all $j \leq i \leq 2N$.
		Let $\mcal{T}_{s_i}$ denote the subset of the dataset $\mcal{T}$ such that both coordinates of $\bx$ are upper bounded by $s_i$, i.e., 
		\begin{align}
		\mcal{T}_{s_i}  = \{(\mb{x},y(\mb{x})) \in \mcal{T} | x_1, x_2 \leq s_i \},\ \forall i \in [1:2N].
		\end{align}
		Note that $\mcal{T}_{s_j} {\subseteq} \mcal{T}_{s_i}, \forall i {<} j$ and that $\mcal{T}_{s_{2N}} {=} \mcal{T}$.
		Then, the optimal quantizer with $b$ boundaries on $\mcal{T}_{s_i}$ shares $b-1$ boundaries with the optimal quantizer with $b-1$ boundaries on $\mcal{T}_{s_j}$ for some $j < i$.
	}
\end{rem}
Remark~\ref{rem:induction_obs} is restated and proved in Appendix~\ref{proof:on-the-line}.
Observations in Remark~\ref{rem:diagonal_bins} and Remark~\ref{rem:induction_obs} lead to a polynomial-time dynamic-programming algorithm to design the optimal quantization boundaries $(d_1,\cdots,d_{2^R -1})$.

The algorithm's pseudo code is given in Algorithm~\ref{alg3} and implements the following logic:
Given an ordered list of the possible $2N$ boundary values $\mb{s}$, 
let $E(s_i,b)$ be the minimum number of misclassified points over the dataset subset $\mcal{T}_{s_i}$ when using $b$ boundaries and $A(s_i,b)$ be the set of boundaries that achieve this minimum loss. 
Then in each iteration $i \in [1:2N]$:

\begin{enumerate}
	\item Find $E(s_i,b), \forall b \in [1:2^R-1]$ by trying to augment $A(s_j,b-1), \forall j < i$ with one extra boundary at $s_j$ (Remark~\ref{rem:induction_obs}). The additional number of misclassified points is only a result of the points in $\{(\bx,y(\bx))\in \mcal{T}| s_j < x_1 \leq s_i, s_j < x_2 \leq s_i \}$ (Remark~\ref{rem:diagonal_bins});
	\item Retain the best augmentation $A(s_i,b)$ to be used in the following iteration;
	\item After $2N$ iteration, the optimal quantization boundaries are stored in $A(s_{2N},2^R-1)$.
\end{enumerate}
In the worst case the algorithm does $2N \times 2^R$ iterations over the whole dataset $\mcal{T}$ resulting in a time-complexity of $O(N^2 2^R)$.
The optimality of Algorithm~\ref{alg3} is proved in Appendix \ref{proof:on-the-line} as a consequence of proving the observation in Remark~\ref{rem:induction_obs}.
\begin{algorithm}
	\caption{Optimal on-the-line quantizer for linearly separable data in $\mbb{R}^2$ }
	\label{alg3}  
	\begin{algorithmic}
		\State {\bf Input:} (a) Training set $\{(\mb{x}^{(i)},,y(\bx^{(i)}))\}_{i=1}^N$; (b) Quantization bits/feature $R$; (c) Ordered set $\mb{s}$ of potential boundary values.\\
		By $\mb{x} \preceq p$ we express that all elements in $\mb{x}$ are less than $p$; $p \prec \mb{x}$ means they all exceed $p$
		\State {\bf Output:} Quantization boundaries $(d_1, d_2, \cdots, d_{2^R-1})$ to use for features $x_1$ and $x_2$
		\State {\bf Initialize:}\\
		\hspace*{\algorithmicindent} $E(s,0) \gets \displaystyle\min_{c \in \{1,2\}} |\{j | \mb{x}^{(j)} \preceq s,\ y^{(j)} = c\}|\quad $ for $s \in \mb{s} $
		\For{$i \in [1:|\mb{s}|]$}
		\For {$b \in [1:2^R-1]$}
		\State $E(s_i,b) \gets \displaystyle\min_{\ell < i}\left\{E(s_\ell,b-1) + \displaystyle\min_{c \in \{1,2\}} |\{j | s_\ell \prec \mb{x}^{(j)} \preceq s_i,\ y^{(j)} = c\}| \right\}$
		\State $\ell^\star \gets$ index $\ell$ that gave the minimum value for $E(s_i,b)$ in the previous expression
		\State $A(s_i,b) \gets A(s_{\ell^\star},b-1) \cup \{s_{\ell^\star}\}$
		\EndFor
		\EndFor
		\State {\bf return }$A(s_{|\mb{s}|},2^R-1)$
	\end{algorithmic}
\end{algorithm}

In the following section, we introduce an approach for designing the encoders and decoder in a more general setting, i.e., when the data points are not necessarily linearly separable and the number of nodes and features are greater than or equal to $2$.

\section{Greedy Boundary Insertion ($\texttt{GBI}$) quantizer}
\label{sec:GBI}
We refer the reader to Table~\ref{tbl:notation}  for the system notation used in this section.
Here, we propose our Greedy Boundary Insertion ($\texttt{GBI}$) algorithm to design encoders/quantizers $\mcal{E}_k, \forall k \in [1:K]$, that can be executed in polynomial-time in the dataset size $N$ and the number of features $n$ for any number of classes.
For the decoder $\mcal{D}$, we use the optimal decoder derived in Section~\ref{optimal_dec}.
$\texttt{GBI}$ extends the intuition in the observations in Subsection \ref{sec:opt_line} to a more general case, where the classifier is arbitrary,
and where each distributed node $k$ observes $\Omega_k$ features and can have arbitrary rate $R_k$. 
We design encoders/quantizers such that $\mcal{E}_k^{-1}(\bz_k)$ (see~\eqref{eq:preimage}) is a single $\mathcal{X}_{\Omega_k}$-bin.
Note that, since we are not constrained to use the same boundaries for each feature as in the on-the-line case, it is sufficient to consider $N$ possible boundary values per feature, the values taken at that feature by the $N$ training points.

The logic behind
$\texttt{GBI}$ is as follows.
$\texttt{GBI}$ iteratively adds quantization boundaries selected greedily: at each iteration it selects to add one of the possible $N$ boundaries to one of the $n$ features, the one that minimizes the misclassification loss in~\eqref{eq:misclassify_loss_emp} given the choice of boundaries in the previous iterations\footnote{Since GBI adds a boundary for one feature at a time, instead of a function of the features $\bx_{\Omega_k}$, we end up with an encoder of the form of a rectangular-grid, where each region is assigned to a value of $\bz_k$.}.
A feature $i$ can accept a new  boundary, if $i \in \Omega_k$ for some node $k$ and introducing a new boundary for feature $i$ does not cause node $k$ to have more than $2^{R_k}$ $\mbb{R}^{|\Omega_k|}$-bins. 
The algorithm terminates when none of the features can accept a new boundary. 
If two or more possible boundaries lead to the same loss (something that happened surprisingly often in our experiments), then instead of breaking ties at random, it makes a significant performance difference to break ties by using a non-linear criterion. This criterion penalizes a boundary that leaves $\mbb{R}^n$-bins with high individual misclassification to correct 
classification ratio. This is discussed in more detail in Appendix~\ref{sec:purity_criterion}. 

The pseudo code for $\tt GBI$ is presented in Algorithm~\ref{alg3_greedy}. The losses computed in Algorithm~\ref{alg3_greedy} assume that the optimal decoder (defined by Lemma~\ref{lem:opt_decoder}) for the designed encoders is used.


Using this notation, the pseudocode for $\texttt{GBI}$ is presented in Algorithm~\ref{alg3_greedy}.

\begin{algorithm}
    \caption{\texttt{GBI} Algorithm}
    \label{alg3_greedy} 
    \begin{algorithmic}
        \State {\bf Variables: }
        \begin{align*}
            \{{\bf d}\}_{f=1}^n &: \text{Boundaries for feature $f \in [1:n]$;} \\
            B_k &: \text{Number of $\mbb{R}^{\Omega_k}$-bins used by node $k \in [1{:}K]$ using boundaries $\{\mb{d}_f\}_{f \in \Omega_k}$;}\\
            \mcal{L}(\mb{d}_1,...,\mb{d}_n) &: \text{misclassification loss (defined in~\eqref{eq:misclassify_loss}) using encoders defined by $\{{\bf d}\}_{f=1}^n$};\\
            \Delta_k(f, \{\mb{d}_j\}_{j \in \Omega_k}) &: \text{Increase in $B_k$, if a new boundary is introduced for feature $f$}.
        \end{align*}
        \State {\bf Input:} (a) Training set $\{(\mb{x}^{(i)},y(\bx^{(i)}))\}_{i=1}^N$; 
        (b) Quantization bits/node $R_k, \forall k \in [1:K]$.
        \State {\bf Output:} Quantization boundaries $\mb{d}_f=\{d_{f,1}, d_{f,2}, \cdots\}$, $\forall f \in [1:n]$
        \State {\bf Initialize:}\\
        \hspace*{\algorithmicindent} 1) List $\mb{s}_f$ of potential boundaries for feature $f$ from the training set, i.e.,
        $\mb{s}_f \gets \{x^{(i)}_f\}_{i=1}^N $
        \\
        \hspace*{\algorithmicindent} 2) Number of current bins in node $k$,  $B_k=0$, boundaries $\mb{d}_f=\phi$ for each feature $f$
        \While{$\exists k: B_k+\min_{f\in \Omega_k}\Delta_k(f, \{\mb{d}_j\}_{j \in \Omega_k})\leq 2^{R_k}$}
        \State - Among all $k$ satisying condition above, find $\widehat{f} \in \Omega_k,\ {\widehat{d}} \in \mb{s}_{\widehat{f}}$ that minimizes \[\mcal{L}(\mb{d}_1,...,\mb{d}_{\widehat{f}}\cup {\widehat{d}},...,\mb{d}_n)\]
        \State - If there are more than one pair $(\widehat{d}, \widehat{f})$ achieving the same minimum objective value,\State we break ties using a non-linear criterion (Appendix~\ref{sec:purity_criterion})
        \State {\bf Update:}
        \begin{align*}
            & \mb{d}_{\widehat{f}}\gets \mb{d}_{\widehat{f}}\cup\{\widehat{d}\} \\
            & B_k \gets B_k + \Delta_k(\widehat{f}, \{\mb{d}_j\}_{j \in \Omega_k}),\quad \text{where $\widehat{f} \in \Omega_k$} 
        \end{align*}
        \EndWhile
        
    \end{algorithmic}
\end{algorithm}

\medskip

\par{\bf Complexity of $\texttt{GBI}$.}\quad At each iteration of the algorithm, we compute the reduction in misclassification error associated with every potential boundary and pick the boundary with the most reduction. This involves $O(N)$ operations per boundary. Thus to add a single boundary, $O(nN^2)$ operations are needed in the worst-case. This results in time-complexity of $O(nN^2 2^{R_{\max}})$, where $R_{\max} = \max_k R_k$. Recall that our focus is on cases where the number of bits used are much lower than required for full precision ($32$ bits). As a result, the contribution of $R_k$ in the complexity term can be subsumed into the notation $O(nN^2)$.

\medskip

\begin{rem}
{\rm 
Despite the fact that $\texttt{GBI}$ is a polynomial-time algorithm, we are interested in approaches with linear complexity in $N$, as the number of available data points (as well as features) in a dataset can be large. 
To overcome the effect of quadratic complexity in $N$, ${\tt GBI}$ can be applied stochastically by randomly sampling a subset of the dataset $\mcal{T}$ to use at each iteration (instead of evaluating the decrease in misclassification over the whole dataset $\mcal{T}$). 
}
\end{rem}

\begin{rem}
{\rm 
A possible drawback of ${\tt GBI}$ is that boundaries are directly introduced on the native features without transformation. 
Thus, as aforementioned, the resulting encoder $\mcal{E}_k$ at node $k$ would always have a rectangular grid structure where each area in the grid would be assigned to some $\bz_k$.
It is not difficult to see that allowing a transformation on the features available at node $k$ (i.e., $\Omega_k$), can allow more elaborate encoder designs. We study how to design such a transformation before applying $\tt GBI$ as part of the deep learning approach proposed in the following section.
}
\end{rem}

\section{Distributed quantization for classification tasks using neural representations}
\label{sec:neural}
In this section, we explore a learning based approach for the distributed quantization problem introduced in Section~\ref{sec:model}. We consider a quantization system where the encoders $\{\mcal{E}_k\}_{k=1}^K$ and decoder $\mcal{D}$ are neural networks, followed by a pretrained classifier $\mcal{C}$ that is subdifferentiable. 

\begin{figure}[h!]
	\centering
	\includegraphics[width=0.9\textwidth]{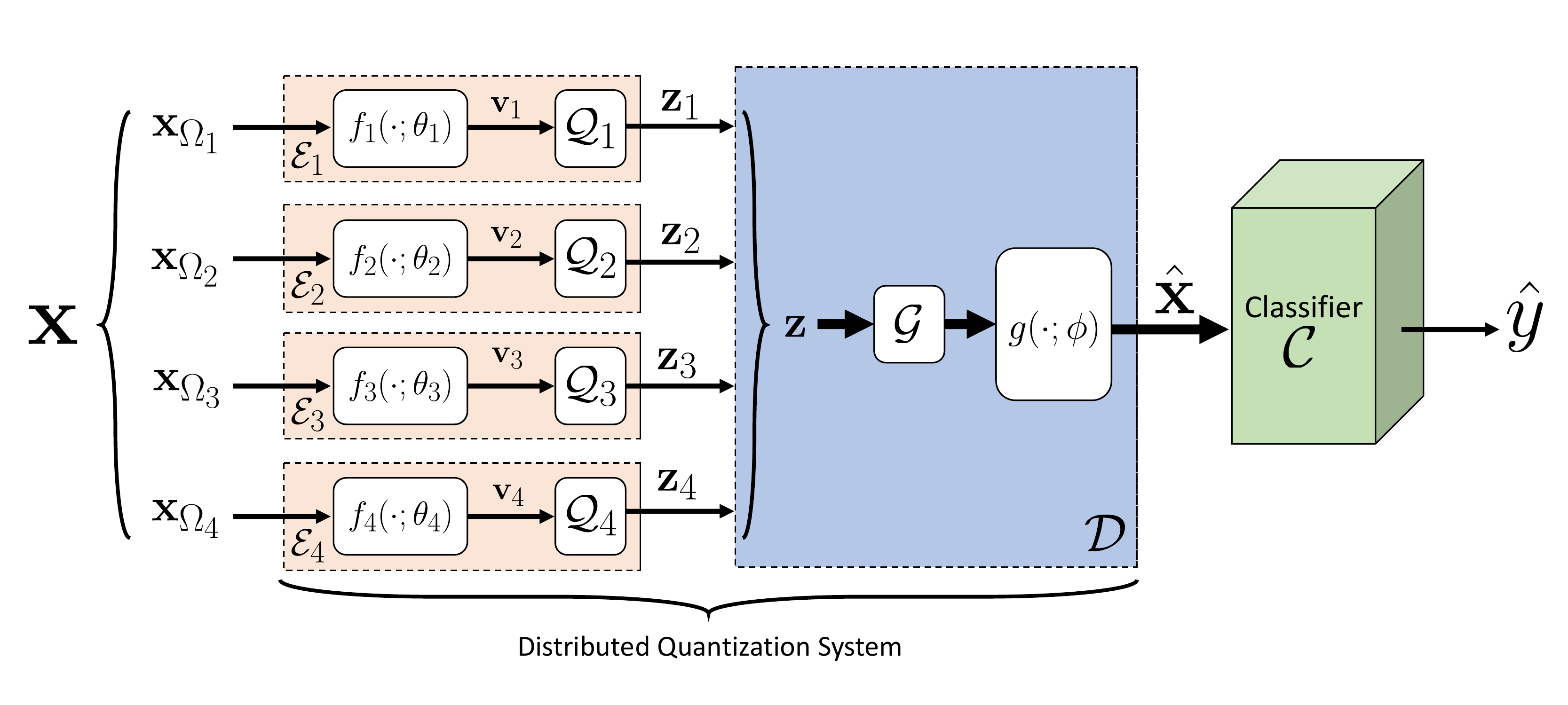}
	\caption{An example of the different components of encoders and decoders in the distributed quantization system for $K=4$ nodes.}
	\label{fig:model_nn}
\end{figure}
The structure of the encoders and decoder is shown in Figure~\ref{fig:model_nn}. 
In particular, the encoder $\mcal{E}_k(\cdot)$ is decomposed into a neural network parameterized by $\theta_k$, which implements a function $f_k(\cdot;\theta_k): \mathcal{X}_{\Omega_k} \to \mbb{R}^{m_k}$, followed by a quantizer $\mcal{Q}_k:\mathbb{R}^{m_k}\to \mathcal{M}_k$ that maps the output of the neural network to a discrete set $\mcal{M}_k\subseteq \mathbb{R}^{m_k}$ of size at most  $2^{R_k}$. That is,
\begin{align}\label{eq:neural_encoding}
\mb{v}_k &= f_k(\bx_k;\theta_k) \in \mbb{R}^{m_k},\ \forall k \in [1:K],\nonumber\\
\bz_k &= \mcal{Q}_k(\mb{v}_k),\ \forall k \in [1:K]. 
\end{align}

Given $\bz = [\bz_1, \bz_2, \cdots, \bz_K]$ as input, the decoder $\mcal{D}$ first applies an initial mapping $\mcal{G}$ that takes $\bz$ to $\mbb{R}^{\widebar{m}}$, where $\widebar{m} = \sum_{k=1}^K m_k$. This serves as a \emph{combiner} for the values $\{\bz_k\}_{k=1}^K$ received from the different encoders. Afterwards, a neural network $g(\cdot;\phi)$, parameterized by $\phi$, is applied on the output of $\mcal{G}$ (see Figure~\ref{fig:model_nn}) before feeding the output $\widehat{\bx}$ to the classifier $\mcal{C}$. Thus, we have
\begin{align}\label{eq:neural_decoding}
\mcal{G} &:\prod_{k=1}^K \mcal{M}_k \to \mbb{R}^{\widebar{m}} \nonumber \\
g(\cdot;\phi) &: \mathbb{R}^{\widebar{m}}\to \mathcal{X}^n \\
\widehat{\bx} &= \mcal{D}(\bz) = g(\mcal{G}(\bz);\phi). \nonumber
\end{align}
Our objective  is to minimize the misclassification loss
\begin{align}
\label{eq:misclassify_loss_emp2}
\min_{\theta_k,\mcal{Q}_k, \mcal{G}, \phi} \frac{1}{N}\sum_{i=1}^N [\mbb{I}(\widehat{y}(\widehat{\mb{x}}^{(i)})\neq y(\bx^{(i)}))].
\end{align}

Instead of minimizing the $0/1$ loss function in~\eqref{eq:misclassify_loss_emp2}, we construct a distribution from the output of the classifier $\mcal{C}$ using a $\tt softmax$ layer, and then apply the cross entropy loss to find the maximum likelihood estimator of $y(\bx)$~\cite{bishop2006pattern} using the samples in dataset $\mcal{T}$. Hence, our objective is to minimize
\begin{equation}
\label{eq:misclassify_loss_emp3}
\min_{\theta_k,\mcal{Q}_k, \mcal{G}, \phi} \mcal{L}^c = \min_{\theta_k,\mcal{Q}_k, \mcal{G}, \phi} \frac{1}{N}\sum_{i=1}^N - \log \left({\tt softmax }[\mathcal{C}(\widehat{\bx}^{(i)})]_{y(\bx^{(i)})}\right),
\end{equation}
where $[{\tt softmax}(\mb{u})]_j =  \exp(u_j)/ \sum_{i=1}^{|\mcal{Y}|} \exp(u_i)$.

We next discuss a challenge in applying standard backpropagation techniques for training our neural networks. Since the classifier $\mcal{C}$ is subdifferentiable, it is possible to compute the gradient of the cross entropy loss in~\eqref{eq:misclassify_loss_emp3} with respect to the decoder parameters $\phi$. 
However, regardless of how the quantizers $\mcal{Q}_k$ are designed, the only subgradient of the quantizers is all zeros.
As a result, it is not possible to apply backpropagation methods~\cite{VQ-VAE,VQ-VAE2,shohat2019deep} to update the encoders parameters $\{\theta_k\}_{k=1}^K$. 
In the following two subsections, we introduce two different approaches for designing the quantizers $\{\mcal{Q}_k\}_{k=1}^K$ and the combiner $\mcal{G}$, and discuss how to incorporate their design in the learning framework of the neural network parameters $\{\theta_k\}_{k=1}^K$ and $\phi$.
\begin{rem}
{\rm 
Note that we do not optimize the classifier $\mcal{C}$ as it is assumed to be pretrained and fixed. However, since the approaches discussed in this section are gradient-based, they can be directly extended to the case where the classifier $\mcal{C}$ is trainable as well, i.e., we can update the parameters of the classifier $\mcal{C}$ as we update the parameters of the networks $f_k$ and $g$ of the encoders and decoders.
}
\end{rem}
\subsection{Discrete distributed neural representation for classification:}\label{sec:neural_REG}
In the first approach, we explicitly design the encoders to produce binary string representations of $\bz_k$. In particular,  for each encoder  $\mcal{E}_k$, the neural network $f_k(\cdot;\theta_k)$ outputs a vector with $R_k$ entries ($m_k=R_k$), and we constrain the range of the elements of this vector  to be in $[-1,1]$. We achieve this by selecting the activations of the last layer of the neural network $f_k(\cdot;\theta_k)$ to be a function that has the range $[-1,1]$ (we used the $\tanh(.)$ function in our numerical evaluation). 
We then simply quantize the output values, by applying the quantizer $\mcal{Q}_k$ as
\begin{equation}
\label{eq:binary_string_quant}
\mcal{Q}_k(\mathbf{u}) = 2*\mathbb{I}(\mathbf{u}\geq 0)-1, \quad \forall k \in [1:K],
\end{equation}
where the indicator function $\mbb{I}$ is applied elementwise.
For the combiner $\mcal{G}$ in the decoder we  simply use an identity function.

\begin{rem}\label{rem:high_quant_noise}
	{\rm 
		As discussed above the $\mcal{Q}_k$ function prevents the backpropagation of the gradient to the encoder network $f_k(\cdot;\theta_k)$. 
		To alleviate this, a straight-through approach is to only use the quantizer blocks in the forward pass and treat them as an identity during backpropagation~\cite{VQ-VAE,shohat2019deep}. This approach works well in some applications~\cite{VQ-VAE}, however, we observed  in our experiments that such an approach prevented the encoder parameters $\{\theta_k\}_{k=1}^K$ from having meaningful gradient updates, and the end-to-end system had a classification performance close to random guessing in the CIFAR10 dataset. In particular, this can happen as when applying the chain rule during backpropagation, we would like to have the derivative $\partial \mcal{L}^c/ \partial \mathbf{v}_k$ to update the parameters $\theta_k$, where $\mathbf{v}_k = f_k(\bx_{\Omega_k};\theta_k)$. Instead, the straight-through approach would use the gradient of a different point in space
		\[
		\frac{\partial \mcal{L}^c}{\partial \mathbf{z}_k} = \frac{\partial \mcal{L}^c}{\partial (\mathbf{v}_k + (\mathbf{z}_k - \mathbf{v}_k))},
		\]
		to update $\theta_k$, where $\bz_k = \mcal{Q}_k(\mb{v}_k)$ (as in Figure~\ref{fig:model_nn}).
		This can be very different from the intended gradient depending on how $\mcal{L}^c$ looks as a function of $\mathbf{z}_k$ and how big is the second term $(\mathbf{z}_k - \mathbf{v}_k)$.
		As an illustration, if one choice of $\theta_k$ results in $v_k = 10^{-6}$ (very close to $0$), it would get quantized to $\bz_k = 1$, resulting in quantization noise $\bz_k - \mb{v}_k = 1-10^{-6}$; if a different $\theta_k$ results in $\mb{v}_k$ being very close to $1$, it would again get quantized to $1$, in this case with negligible quantization noise. Both parameters $\theta_k$ would be updated by the same gradient, even though in the first case, $\mb{v}_k$ was orders of magnitude smaller. Thus, when skipping the quantizer in the backpropagation, the calculated gradients may not be useful if the quantization noise is large.
	}
\end{rem}

\noindent {\bf Regularization for quantization:} Based on the observation in Remark~\ref{rem:high_quant_noise}, we opted to facilitate gradient-based optimization by dropping the quantizers blocks $\{\mathcal{Q}_k\}_{k=1}^K$ during training (both in the forward and backward passes) and instead nudge the network to naturally output values close to quantized ones.
In particular, we penalize the output values that are far from both $-1$ and $+1$, by
introducing an additional term to the loss in \eqref{eq:misclassify_loss_emp3}, termed \textit{quantization loss}, and calculated as
\begin{equation}\label{eq:quantization_loss}
\mcal{L}^{\rm q} = - \frac{1}{KN}\sum_{i=1}^N \sum_{k=1}^{K}\left\|f_k(\bx_{\Omega_k}^{(i)};\theta_k)\right\|_2^2.
\end{equation}
Note that since we choose the activations of the last layer in each encoder to have the range $[-1,1]$,  $\mcal{L}^{\rm q}$ is minimized (achieves the optimal value {$-\sum_k {R_k}/K$}) only when $f_k(\bx_{\Omega_k}^{(i)};\theta_k) \in \{-1,1\}^{R_k} $ $\forall i\in [1:N]$, $k \in [1:K]$. 
Thus, the total training loss becomes
\begin{align}
\label{eq:loss_total_binary}
\mcal{L} = \mcal{L}^{\rm c} + \beta \mcal{L}^{\rm q} =  - \frac{1}{N}\sum_{i=1}^N \left( \log \left({\tt softmax }[\widehat{y}(\widehat{\bx}^{(i)})]_{y(\bx^{(i)})}\right) - \beta \frac{1}{K}\sum_{k=1}^{K}\left\|f_k(\bx_{\Omega_k}^{(i)};\theta_k)\right\|_2^2\right),
\end{align}
where: (i) $\beta$ is a scalar hyperparameter that controls the contribution of $\mcal{L}^{\rm q}$; and (ii) $\mcal{L}^{\rm c}$ is the misclassification loss in \eqref{eq:misclassify_loss_emp3}.

    
For large enough $\beta$, minimizing $\mcal{L}$ can be interpreted as minimizing the classification loss under the constraint that the encoders outputs are very close to $-1/1$, which results in $\|f_k(\bx_{\Omega_k}^{(i)};\theta_k)-\mathcal{Q}_k(f_k(\bx_{\Omega_k}^{(i)};\theta_k))\|_2$ being very small. That is, the outputs without quantization differ by only a small amount from the  outputs with quantization which can be treated as negligible quantization noise during testing.

To illustrate the impact of the \emph{quantization loss} on the distribution of the encoder outputs, Figure~\ref{fig:enc_learn} shows the empirical distribution of the encoders outputs after $50$ training epochs on the CIFAR-$10$ dataset, with and without using $\mcal{L}^{\rm q}$. 
While the classification loss tries to direct the parameters of the encoders and decoder $\{\theta_k\},\phi$ to improve the classification accuracy, the quantization loss adjusts the parameters to push the encoders outputs to be close to $-1/1$.

In the approach discussed in this subsection, we have integrated the quantization during the training phase by modifying the loss function to favor models that have small added noise due to quantization. Instead of modifying the objective function, in the following subsection, we introduce a multi-phase approach, where we first learn continuous representations for classification and then learn a quantizer on these continuous representations using our previously introduced $\tt GBI$ algorithm.

\begin{rem}
{\rm 
Note that the learning approach described in this subsection has computational complexity $O(N*{\tt num\_epochs})$, where {\tt num\_epochs} is typically much smaller than $N$.
}
\end{rem}
\begin{figure}[!htb]
	\begin{subfigure}{0.5\textwidth}
		\centering
		\includegraphics[scale=0.44]{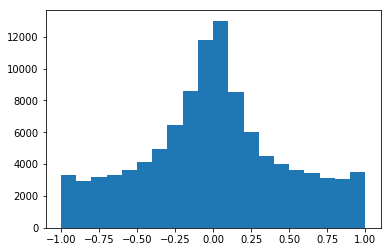} 
		\caption{without quantization loss}
	\end{subfigure}\quad
	\begin{subfigure}{0.5\textwidth}
		\centering
		\includegraphics[scale=0.44]{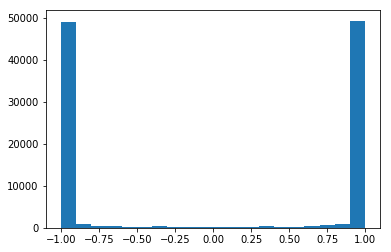}
		\caption{ with quantization loss}
	\end{subfigure}
	\caption{Effect of the quantization loss on the distribution of the decoder inputs after training for 50 epochs on the CIFAR10 dataset.}
	\label{fig:enc_learn}
\end{figure}

\subsection{Distributed neural representation using {\tt GBI}:}\label{sec:neural_GBI}
The main idea in this approach is to use the $\tt GBI$ algorithm to design the quantizers $\{\mcal{Q}_k\}_{k=1}^K$ and the initial decoder $\mcal{G}$.
We first use the neural network $f_k(\cdot;\theta_k)$ to map the features of node $k$ from $\mathbf{x}_{\Omega_k}\in \mathbb{R}^{|\Omega_k|}$ to $\mathbf{v}_k\in \mathbb{R}^{m_k}$.
We select $m_k$ to be as small as possible while maintaining a good classification accuracy; that is, the networks $f_k(\cdot;\theta_k)$ essentially perform dimensionality reduction at the encoders before applying the quantization step. We then apply  $\tt GBI$ on the output of the encoder neural network $f_k$.
The main benefit is that, by decreasing the number of dimensions of the input to $\tt GBI$ algorithm, we alleviate the complexity of $\tt GBI$, that grows with the number of dimensions (see Section~\ref{sec:GBI}).


Given that the neural network $f_k(\cdot;\theta_k)$ is potentially a universal function approximator~\cite{csaji2001approximation}, it is not difficult to see that even using a naive uniform quantizer $\mathcal{Q}_k$, we could potentially implement any encoder $\mathcal{E}_k = \mathcal{Q}_k(f_k(\cdot;\theta_k)) :\mathbb{R}^{|\Omega_k|}\to \mathcal{M}_k$. Hence, ideally, the choice of the quantizer $\mathcal{Q}_k$ should not play a significant role. However, due to the fact that neural networks tend to work well only if the initialization is close to a good solution, the choice of the quantizer becomes important. In the following, we propose a method
that operates in three phases:

\par \noindent {\bf Phase 1.} We first train the encoders and decoder neural networks without any quantization units (i.e., without $\mcal{Q}_k$ and $\mcal{G}$) until we get classification accuracy that is close to the classifier's accuracy. Note that for $m_k\geq |\Omega_k|$, we can reconstruct the classifier accuracy. Effectively, in this step, we are following the example structure shown in Figure~\ref{fig:model_nn}, assuming the blocks $\mcal{Q}_k$ and $\mcal{G}$ are identities.

\par \noindent {\bf Phase 2.} With the parameters $\theta_k, \phi$ learned in {\bf Phase 1}, we now design the quantization components $\{\mathcal{Q}_k\}_{k=1}^K$ and $\mcal{G}$ based on the outputs $\{\mathbf{v}_k\}$ of the neural networks $f_k(\cdot;\theta_k)$. If the quantizer maps data points that have different labels to the same quantized value, the quantized value cannot be used to classify the points correctly. Hence, the objective of the quantizer is to map points that have different labels to different quantized values. We do this by introducing boundaries in the space using our proposed $\texttt{GBI}$ quantizer, described earlier in Section \ref{sec:GBI}.

\par \noindent{\bf Phase 3.} Finally, we continue training the encoders and decoder neural networks ($f_k$ and $g$) with the quantizers designed in {\bf Phase 2}.  To do so we skip the quantizers blocks in the back propagation and consider them only in the forward pass. We observed in our experiments that this skip does not cause the network to behave randomly as the initialization is designed carefully. The parameters learned in {\bf Phase 1} act as initializations for $\theta_k, \phi$ in this phase. Phase 3 enables to fine tune the network parameters given that we have already learned the quantizer components earlier in {\bf Phase 2}.

\section{Experimental Evaluation}
\label{sec:experiments}
In this section, we present various experimental results, comparing the behaviour of our proposed distributed quantization approaches with quantization approaches for reconstruction.
We find that tailoring the quantizers to the classification task can offer significant savings: we can achieve more than a factor of two reduction in terms of the number of bits communicated, for the same classification accuracy.
Moreover, our algorithms retain reasonable classification performance even when constrained to use a very small number of bits per encoder; for instance, for $2$ bits per encoder, we achieve approximately two to four times the classification accuracy of alternative approaches.

Additionally, we also compare to centralized quantization approaches for classification and show that despite our distributed setup, we are still able to achieve a competitive performance in terms of classification accuracy.

\subsection{Performance on Electromyography sensor measurements:}
We start with experiments on a dataset of surface electromyographic (sEMG) signals~\cite{lobov2018latent}. Each data point represents measurements recorded from 8 sensors that are used to differentiate between 6 different hand gestures. For our classifier $\mcal{C}$, we use a Multi-Layer Perceptron (MLP)~\cite{bishop2006pattern} architecture with fully connected layers of the form $8-100-200-200-200-6$ and ReLu activations. The classifier was pretrained on an unquantized training set of $15,345$ measurements, and yielded a baseline accuracy of $98.66\%$ on a test set of $6,578$ measurements. 

For our distributed quantization framework, we assume that we have $K=4$ encoders, where each encoder has access to measurements from only two of the sensors (i.e., we have four feature groups each consisting of two features). We use MLPs for our encoders and decoder, while the quantizers are either trained using the quantization loss regularization {\bf (NN-REG)} or with the GBI algorithm {\bf (NN-GBI)} as described in Section~\ref{sec:neural}. The hidden layers structures of encoders/decoders, and the hyper parameters (learning rate and regularization weight $\beta$) are described in Appendix~\ref{appendix:net_structure}.

\par{\bf Comparison with quantization for reconstruction.}
We demonstrate that our approaches achieve competitive classification results with smaller number of bits as compared to distributed approaches aiming at reconstructing the input. We compare against the $k$-means algorithm~\cite{macqueen1967some} as a representative for unsupervised vector quantization algorithms. In particular, in the sEMG dataset, each $k$-means encoder maps a point in $\mbb{R}^2$ to the nearest centroid point among $2^R$ choices. The decoder is treated as an identity in this case, that passes its input $\bz$ vector to the classifier. 

The results are shown in Table~\ref{tbl:accuracies_EMG}. We see that our approaches outperform the unsupervised distributed quantization. For example, using $4$ bits for each encoder, we can achieve a classification accuracy of $> 95\%$ while the unsupervised approach achieves a performance of $66\%$.
\begin{table}
	\centering
	\begin{tabular}{@{}lccccc@{}}
		\toprule
		& \multicolumn{5}{c}{\bf bits per encoder ($R$)}  \\ 
		\cmidrule{2-6}
		&\bf 1    &\bf 2    &\bf 3    &\bf 4    &\bf 5    \\
		\bottomrule
		{\bf $k$-means}~\cite{macqueen1967some} & 18.77\% & 44.18\% & 56.47\% & 66.71\% & 75.19\% \\
		\bf NN-REG						        & 54.50\% & 63.04\% & 82.90\% & 94.72\% & 97.73\% \\
		\bf NN-GBI      						& 55.49\% & 72.35\% & 91.12\% & 97.30\% & 98.21\% \\ 
		\bottomrule
	\end{tabular}
	\caption{Correct classification percentage on the sEMG test set. Each system uses $R$ bits per encoder, $K=4$ encoders and a pretrained classifier with 98.66\% accuracy.}
	\label{tbl:accuracies_EMG}
	\vspace{-2em}
\end{table}


\par{\bf Comparison with learning vector quantization for classification.}
To benchmark the performance of our distributed quantization system for classification against centralized approaches, a natural candidate for comparison is the centralized Generalized Learning Vector Quantization approach (GLVQ)~\cite{sato1996generalized}. In this case, the output of the algorithm is a Voronoi tessellation in the space, where each centroid is now associated with a class. Thus, by mapping a point in space to its nearest centroid, a classification is also performed by picking the class associated to the selected centroid. We compare the performance of our distributed quantization approachs against the quantizer-classifier learned by the LVQ3 centralized quantizer~\cite{sato1996generalized} with learning rate $10^{-4}$ for $200$ epochs.

Since our distributed quantization system with $K=4$ encoders uses $2$ bits per encoder, we allow LVQ3 to use $8$ bits (i.e., $64$ centroids) to keep the total number of bits across the nodes constant. Our {\bf NN-GBI} approach, yielded $71.59\%$ classification accuracy, while the centralized LVQ classifier gave an accuracy of $75.53\%$.

Although LVQ gives a better classification accuracy, the learned Voronoi boundaries are not decomposable to be applied on distributed nodes. In particular, in the described setting, when inspecting the values of the centroids learned by the LVQ algorithm, we found that although $2^8$
centroids are used in $\mbb{R}^8$, restricting the values of the centroids to any one dimension of the 8, gave $64$ distinct values which would require each of the $4$ encoders to at least use 8 bits to represent these quantized values. Recall that from Table~\ref{tbl:accuracies_EMG}, we are able to achieve much higher accuracies than $75\%$, when only $5$ bits are used at each encoder.

\subsection{Performance on CIFAR10 images:}
In this set of experiments, we evaluate the performance of our proposed algorithms on the CIFAR10 dataset, where each input $\bx$ is a $32\times 32 \times 3$ image. Each image in the CIFAR10 dataset is associated with one of $10$ classes.  
We assume a distributed quantization system with $K=4$ distributed encoders, that each have access to a quadrant of the image. For the classifier, we use a pretrained VGG-13 classifier~\cite{SimonyanZ14a} with $94.27\%$ accuracy on the CIFAR10 test dataset.

\par{\bf Comparison to VQ-VAE.} The VQ-VAE~\cite{VQ-VAE} framework  is used to learn discrete neural representations of a dataset for reconstruction. We compare against this framework implemented both in a centralized and a distributed fashion. In particular, for the centralized VQ-VAE, a single encoder has access to the full image. We use the same VQ-VAE network structure from~\cite{VQ-VAE} for the CIFAR10 dataset and ensure that the total number of bits used by the VQ-VAE encoder is $4$ times what our system would use for a single encoder.
In the distributed setting, a VQ-VAE encoder is applied on each image quadrant and then a common decoder is used for reconstruction.
VQ-VAE structures were trained with $2\times 10^{-4}$ learning rate, $200$ epochs and $64$ batch size. 

\begin{table}
	\centering
	\begin{tabular}{@{}lccccc@{}}
		\toprule
		& \multicolumn{5}{c}{\bf bits per encoder ($R$)}  \\ 
		\cmidrule{2-6}
		&\bf 1    &\bf 2    &\bf 3    &\bf 4    &\bf 5    \\
		\bottomrule
		{\bf VQ-VAE (centralized)} & 13.82\% & 14.36\% & 15.87\% & 17.52\% & 18.18\% \\
		{\bf VQ-VAE (distributed)} & 10.12\% & 10.46\% & 11.03\% & 11.61\% & 12.28\% \\
		\bf NN-REG        & 48.63\% & 63.32\% & 68.07\% & 73.43\% & 78.12\% \\
		\bf NN-GBI      & 48.33\% & 60.88\% & 65.16\% & 71.57\% & 81.18\% \\ 
		\bottomrule
	\end{tabular}
	\caption{Correct classification percentage on the CIFAR10 test set. All distributed systems use $R$ bits per encoder, $K=4$ encoders.
		The centralized VQ-VAE system uses $4R$ bits at the encoder.
		The classifier is a pretrained VGG-13 with 94.27\% accuracy.}
	\label{tbl:accuracies_CIFAR}
	\vspace{-1em}
\end{table}

The results are summarized in Table~\ref{tbl:accuracies_CIFAR}. 
We find that although VQ-VAE has shown great success in reconstructing images from discrete representations, it does not perform  well with a low number of bits even in the centralized  case. 
To get classification accruacy of around $50\%$, the centralized VQ-VAE  required $200$ bits (equivalent to $50$ bits/encoder in the distributed setting), while our algorithms could get more than  $70\%$ accuracy with $3$ bits per encoder.


\section{Conclusions}
\label{conclusion}
In this paper, we introduced the problem of data-driven distributed data quantization for classification.
We proved that in many cases, designing an optimal quantization system is an NP-hard problem  that is also hard to approximate. 
For a case that is not NP-hard, we proposed an optimal polynomial-time algorithm for designing the quantizer under some structural restrictions. 
For the NP-hard cases, we proposed a polynomial time greedy approach and two learning based approaches. 
Numerical results on the sEMG and CIFAR10 datasets indicate that 
tailoring the quantizers to the classification task can offer significant savings: more (and in some cases much more) than  a factor of two reduction in the number of bits communicated, for the same classification accuracy. Moreover, our algorithms retain reasonable classification performance even when constrained to use a very small number of bits.

\appendices
\section{Counter-Example for Adaboost inspired quantization}
\label{sec:xor_example}
\begin{figure}[!ht]
	\centering
	\includegraphics[width=0.5\textwidth]{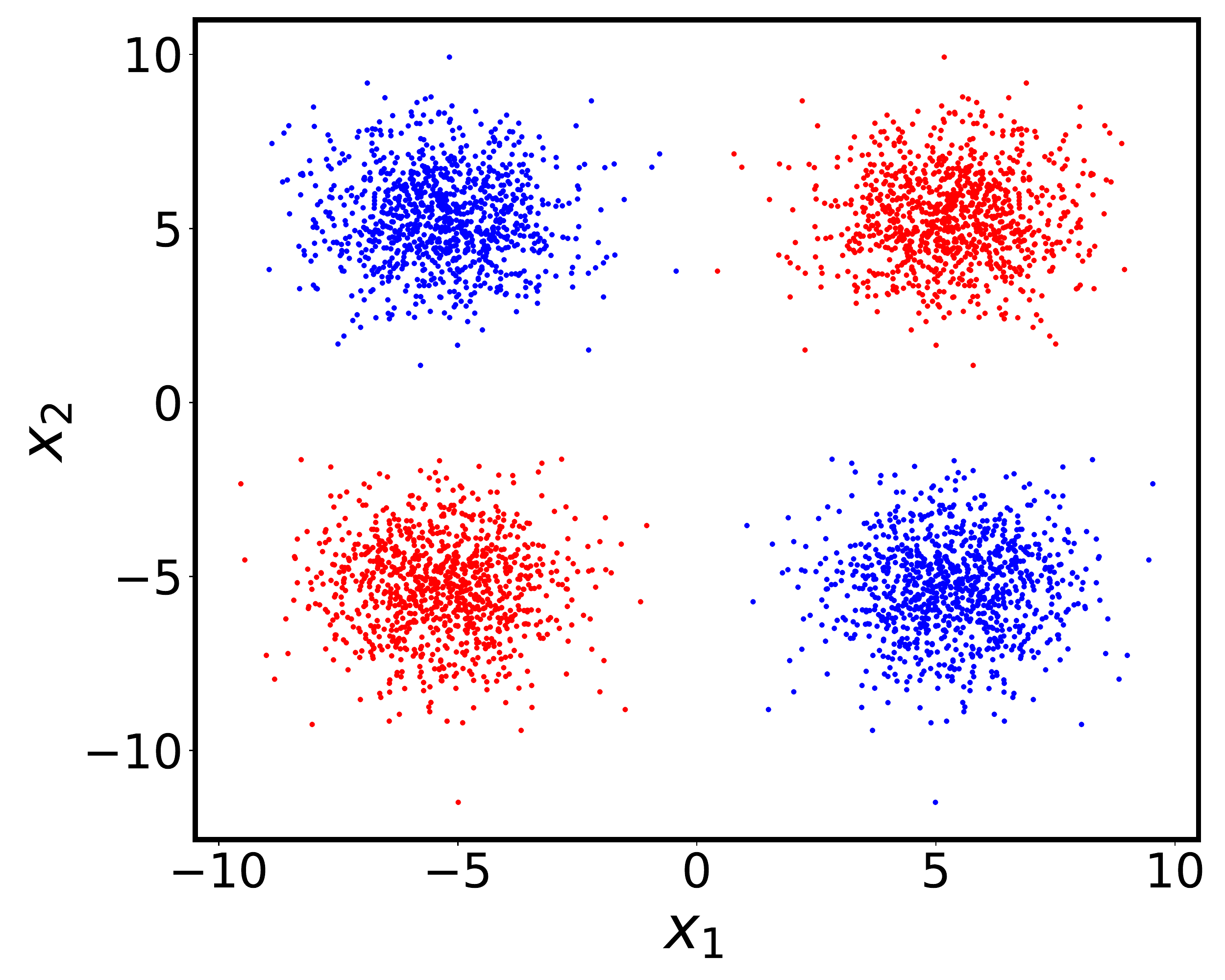}
	\vspace{-1em}
	\caption{Xor-like example where Adaboost inspired quantization would fail.}
	\label{fig:xor_example}
\end{figure}

\newpage
\section{NP-hardness and hardness of approximation} \label{hardness_proof}

In this appendix we prove Theorem \ref{hardness}.
\subsection{NP-hardness of \ref{i} and \ref{ii}}
In this subsection, we prove that~\ref{i}, \ref{ii} are NP-hard. We start with \ref{ii}. Since \ref{ii} is a special case of \ref{i}; it follows that \ref{i} is NP-hard. We prove hardness of \ref{ii} by reduction from the Chromatic Number problem. In particular, we show that any instance of the Chromatic Number problem can be reduced to an instance of problem \ref{ii} in polynomial time. The decision version of the Chromatic Number problem is on Karp's list of NP-complete problems \cite{karp1972reducibility}.

Let us consider an undirected graph $G$. We denote the set of vertices and edges of $G$ by $V, E$ respectively, and assume that the vertices are labeled by numbers $1,2,...,|V|$. Since the graph is undirected, we assume without loss of generality that the vertices pair corresponding to each edge is ordered such that if $(q_1,q_2)\in E$, then $q_1>q_2$. We will see that this assumption, ensures that we can construct a linearly separable dataset as required in problem \ref{ii}.

We construct two matrices $\{F_i\}_{i=0}^1, F_i\in \mathbb{R}^{|E|\times |V|}$ that represent the set of edges $E$, by Algorithm \ref{alg}.
\begin{algorithm} \caption{Incidence matrices} \label{alg}
	\textbf{0:} Initialize the entries of $F_i\in \mathbb{R}^{|E|\times |V|}, i\in \{0,1\}$ with all zeros. $k=1$.\\
	$\forall (q_1,q_2)\in E$ , do the following two steps:\\
	\hspace*{4mm} \textbf{1:}	Put $[F_0]_{kq_1}=1,[F_1]_{kq_2}=1$.\\
	\hspace*{4mm} \textbf{2:} k=k+1.
\end{algorithm}
We can think of the matrices $\{F_i\}_{i=0}^1$ as a decomposed version of the incidence matrix, where for each edge, one endpoint is represented in $F_0$ and the other endpoint is represented in $F_1$. As an illustrative example, we consider the graph in Figure~\ref{graph_1}. The corresponding matrices are given by 

\begin{equation}\label{dist_1}
F_0=\begin{bmatrix}
0 & 0 & 1 & 0 & 0 \\
0 & 0 & 0 & 1 & 0 \\
0 & 0 & 0 & 0 & 1 \\
0 & 0 & 0 & 1 & 0 \\
0 & 0 & 0 & 1 & 0 \\
0 & 0 & 0 & 0 & 1 \\
0 & 0 & 0 & 0 & 1 \\
0 & 1 & 0 & 0 & 0
\end{bmatrix},
F_1=\begin{bmatrix}
1 & 0 & 0 & 0 & 0 \\
1 & 0 & 0 & 0 & 0 \\
1 & 0 & 0 & 0 & 0 \\
0 & 1 & 0 & 0 & 0 \\
0 & 0 & 1 & 0 & 0 \\
0 & 0 & 1 & 0 & 0 \\
0 & 0 & 0 & 1 & 0 \\
1 & 0 & 0 & 0 & 0
\end{bmatrix}.
\end{equation}

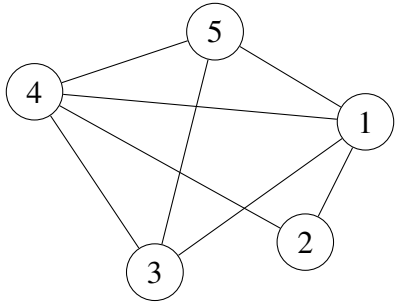
\begin{figure}[!ht]
	\center
	\begin{tikzpicture}
	[scale=.8,auto=left,every node/.style={circle,fill=none,draw}]
	\node (n4) at (5,8)  {4};
	\node (n5) at (8,9)  {5};
	\node (n1) at (10.5,7.5) {1};
	\node (n2) at (9.5,5.5)  {2};
	\node (n3) at (7,5)  {3};
	
	\foreach \from/\to in {n1/n2,n1/n3,n1/n4,n1/n5,n2/n4,n3/n4,n3/n5,n4/n5}
	\draw (\from) -- (\to);
	
	\end{tikzpicture}
	\caption{Graph with edges represented by the matrices in \eqref{dist_1}.}
	\label{graph_1}
\end{figure}
If two vertices are colored with the same color, we update the matrices $\{F_i\}_{i=0}^1$ by replacing the columns that correspond to the vertices colored with the same color, with their sum. For example, if vertices $1,2$ are assigned the same color, we update each matrix $F_i$ by replacing the first two columns with their sum, which results in
\begin{equation}\label{updated matrices}
F_0^{'}=\begin{bmatrix}
0 & 1 & 0 & 0 \\
0 & 0 & 1 & 0 \\
0 & 0 & 0 & 1 \\
0 & 0 & 1 & 0 \\
0 & 0 & 1 & 0 \\
0 & 0 & 0 & 1 \\
0 & 0 & 0 & 1 \\
1 & 0 & 0 & 0
\end{bmatrix},
F_1^{'}=\begin{bmatrix}
1 & 0 & 0 & 0 \\
1 & 0 & 0 & 0 \\
1 & 0 & 0 & 0 \\
1 & 0 & 0 & 0 \\
0 & 1 & 0 & 0 \\
0 & 1 & 0 & 0 \\
0 & 0 & 1 & 0 \\
1 & 0 & 0 & 0
\end{bmatrix}.
\end{equation}

We notice that {\bf a coloring is valid} (no two vertices connected with an edge are assigned the same color) if and only if the updated matrices satisfy: 
\begin{equation}\label{eq:property}
\forall k\in [1:|E|],\ \forall q\in [1:V'],\ \forall i \in \{0,1\}\quad : \quad  [F_i^{'}]_{kq}\neq 0\implies [F_{1-i}^{'}]_{kq}= 0,
\end{equation}
where $V'$ is the number of columns of the matrix $F_0^{'}$ or $F_1^{'}$. In the above example, the property in \eqref{eq:property} is not satisfied since, $[F_i^{'}]_{80}= 1, [F_{1-i}^{'}]_{81}= 1$. This is because the vertices $1,2$, which are assigned the same color, are connected with an edge.

Hence, the Chromatic Number of the graph is the minimum number of columns of matrices $\{F_i^{'}\}_{i=0}^1$ that satisfy the property in \eqref{eq:property} and are constructed according to the following rules:
\begin{itemize}
	\item Any set of columns in the matrix $F_i$ can be replaced by their sum, $i=0,1$.
	\item If the set of columns indexed by $\mathcal{I}$ in $F_i$ are replaced with their sum, then the set of columns indexed by $\mathcal{I}$ in $F_{1-i}$ are replaced with their sum, $i=0,1$, i.e., exactly the same operations done on $F_0$ are done on $F_1$ and vice versa. 
\end{itemize}

The next step is to consider an instance from the problem \ref{ii}, and show that it is equivalent to the problem of finding the minimum number of columns of the matrices $\{F_i^{'}\}_{i=0}^1$. To that end, we consider a dataset, with two classes, namely $\mcal{Y} = \{0,1\}$, and two features $x_1,x_2$. The dataset is constructed based on the matrices $\{F_i\}_{i=0}^1$ by Algorithm \ref{alg_red_color}.

\begin{algorithm} \caption{Reduction from vertex coloring problem} \label{alg_red_color}
	\textbf{0:} Start with $k=1$.\\
	$\forall (q_1,q_2)\in E$ , do the following two steps:\\
	\hspace*{4mm} \textbf{1:} Put a training point that belong to class 0 at $x_1=q_1,x_2=\frac{q_1+q_2}{2}$,\\
	\hspace*{9mm} and a training point that belong to class 1 at $x_1=q_2,x_2=\frac{q_1+q_2}{2}$.\\
	\hspace*{4mm} \textbf{2:} k=k+1.
\end{algorithm}
Note that under the assumption $q_1>q_2$, the point $(x_1,x_2) = (q_1,\frac{q_1+q_2}{2})$ lies on the right side of the line $x_1=x_2$, while the point $(x_1,x_2)=(q_2,\frac{q_1+q_2}{2})$ lies on the left side of the line $x_1=x_2$. Hence, all the points that belong to class 0 lie on the right side of the line $x_1=x_2$, and all the points that belong to class 1 lie on the left side of the line $x_1=x_2$. That is, the constructed dataset is linearly separable by the line $x_1=x_2$.

For the constructed dataset, we want to answer the following question: for $R_2\to \infty$, what is the minimum $R_1$ for which $\mcal{L}(\mathbf{\mathbf{\mathcal{E}}},\mcal{D},\mcal{T})<\frac{1}{2}$?  

As an example, consider the matrices $\{F_i\}_{i=0}^1$ in \eqref{dist_1}, the constructed dataset is given in Figure~\ref{fig_1}. 
\begin{figure}[h!]
	\centering
	\includegraphics[width=0.7\textwidth]{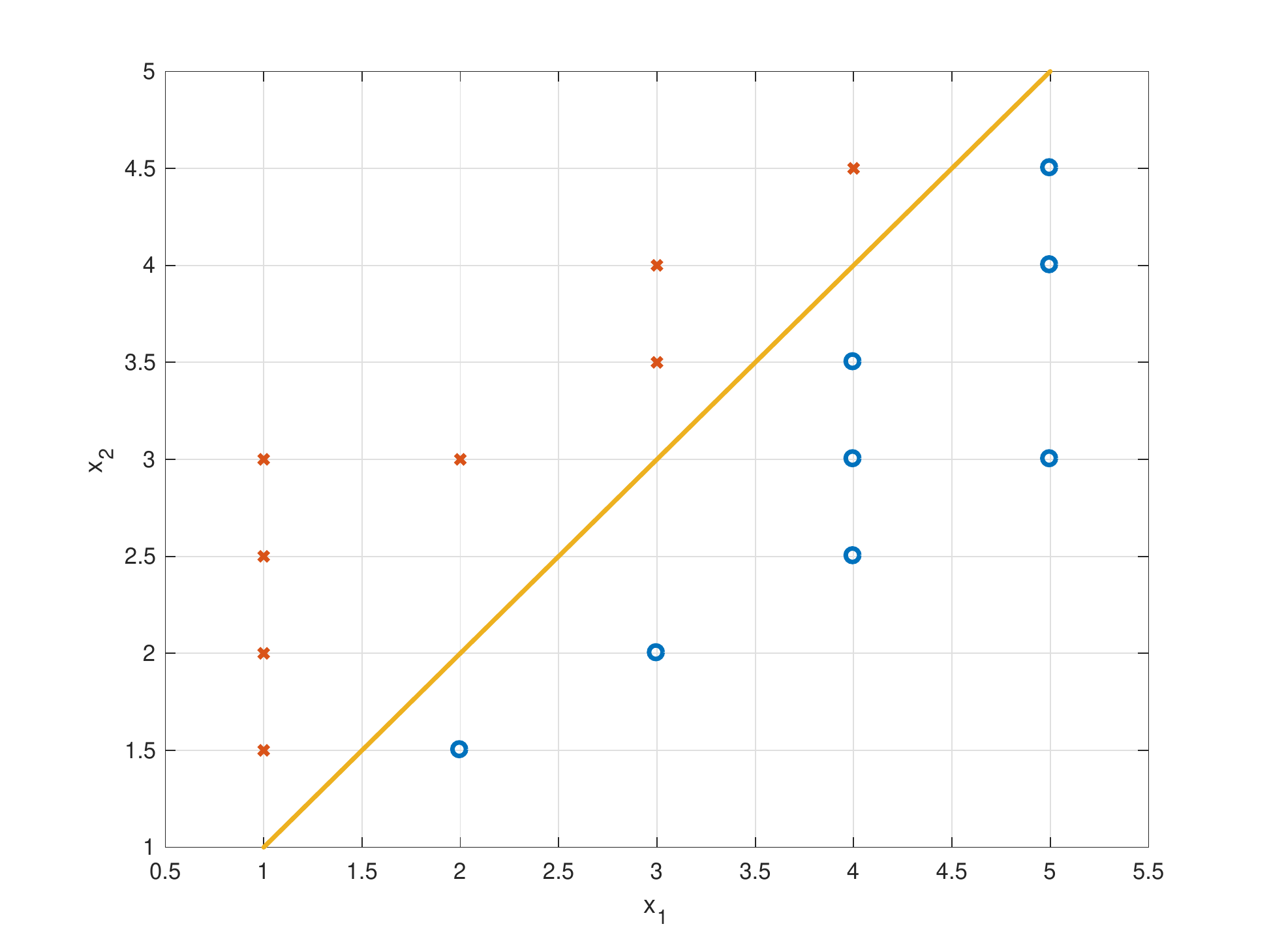}
	\caption{Training data corresponding to $\{F_i\}$ in \eqref{dist_1}.}
	\label{fig_1} 
\end{figure}

Assume that we want to find the minimum $R_1$ for which $\mcal{L}(\mathbf{\mathbf{\mathcal{E}}},\mcal{D},\mcal{T})<\frac{1}{2}$. If $2^{R_1}\geq 5$, then we do not need to do quantization and can send $x_1$ as it is. In this case we have $\mcal{L}(\mathbf{\mathbf{\mathcal{E}}},\mcal{D},\mcal{T})=0$. Now, assume that $2^{R_1}=4$, then we have only $4$ values to send to represent $x_1$, i.e., $\mcal{E}_1(x_1)\in [1:4]$. Hence, the quantizer has to map two different values of $x_1$ to the same quantized value, i.e., $\exists x_1^{(1)},x_1^{(2)}: x_1^{(1)}\neq x_1^{(2)}, \mcal{E}_1(x_1^{(1)})=\mcal{E}_1(x_1^{(2)})$. The matrices $\{F_i^{'}\}$ are constructed from $\{F_i\}$ such that if the encoder maps a set of $x_1$ values to the same encoded value, we replace the corresponding columns with their sum. For instance, assume that the encoder maps the values $x_1=1, x_1=2$ to the same quantized value. Based on this, we update each matrix $F_i$ by replacing the first two columns with their sum, which results in the matrices in \eqref{updated matrices}. Since this encoder maps the two points $(1,1.5),(2,1.5)$, which belong to different classes, to the same encoded value, we have $\mcal{L}(\mathbf{\mathbf{\mathcal{E}}},\mcal{D},\mcal{T})= \sum_{k=1}^{l_2}\sum_{q=1}^{l_1-1} \min_j\{[F_j^{'}]_{kq}\}=1>\frac{1}{2}$. So, this encoder does not satisfy $\mcal{L}(\mathbf{\mathbf{\mathcal{E}}},\mcal{D},\mcal{T})<\frac{1}{2}$. The reason for this is that the updated matrices do not satisfy: $\forall k\in [1:|E|] \forall q\in [1:|V'|] \forall i \in \{0,1\} \ [F_i^{'}]_{kq}\neq 0\implies [F_{1-i}^{'}]_{kq}= 0$. 
Note that the matrices constructed by Algorithm \ref{alg} satisfy the following properties
\begin{itemize}
	\item Every row in $F_i$ has exactly one non-zero entry.
	\item If $[F_i]_{kq}\neq 0$, then $[F_{1-i}]_{kq}= 0$.
	\item All the non-zero entries have value $1$.
\end{itemize}
It is easy to observe that for all matrices satisfying the three properties mentioned above, $\mcal{L}(\mathbf{\mathbf{\mathcal{E}}},\mcal{D},\mcal{T})<\frac{1}{2}$ if and only if the encoder $\mathbf{\mathcal{E}}$ satisfies: 
\begin{equation} \label{prop_1}
\begin{aligned}
&\forall q_1 \forall q_2:q_1,q_2\in \text{dom}(\mcal{E}_1), q_1\neq q_2, \mcal{E}_1(q_1)=\mcal{E}_1(q_2)\\
&[\forall k\in [1:|E|] \forall i \in \{0,1\}\ \left( [F_i]_{kq_1}+[F_i]_{kq_2}\neq 0\implies [F_{1-i}]_{kq_1}+[F_{1-i}]_{kq_2}=0\right) ].
\end{aligned}
\end{equation}
Hence, the $\min\{2^{R_1}|\mcal{L}(\mathbf{\mathbf{\mathcal{E}}},\mcal{D},\mcal{T})<\frac{1}{2}\}$ is the minimum number of columns of matrices $\{F_i^{'}\}_{i=0}^1$ that satisfy the property in \eqref{eq:property} and are constructed according to the following rules:
\begin{itemize}
	\item Any set of columns in the matrix $F_i$ can be replaced by their sum, $i=0,1$.
	\item If the set of columns indexed by $\mathcal{I}$ in $F_i$ are replaced with their sum, then the set of columns indexed by $\mathcal{I}$ in $F_{1-i}$ are replaced with their sum, $i=0,1$, i.e., exactly the same operations done on $F_0$ are done on $F_1$ and vice versa. 
\end{itemize}
This shows that $\mathcal{X}(G)=\min\{2^{R_1}|\mcal{L}(\mathbf{\mathbf{\mathcal{E}}},\mcal{D},\mcal{T}_G)<\frac{1}{2}\}$, where $\mathcal{X}(G)$ is the chromatic number of the graph $G$, and $\mathcal{T}_G$ is the dataset constructed by Algorithm \ref{alg_red_color}.

Note that the maximum number of $x_1$ values that are encoded to the same value is $|V|$, i.e., $\mathcal{E}_1(\bz_1)$ is the union of at most $|V|$ $\mathbb{R}$-bins. Hence, $r$ is chosen to be $r=|V|$. This concludes the proof that problem \ref{ii} is NP-hard.

\subsection{NP-hardness of \ref{iii}:}
\newcommand{\TG}{\mcal{T}_G}
\newcommand{\TV}{\mcal{T}_{v_1,v_2}}
\newcommand{\qa}{\mb{q}_{v_1}}
\newcommand{\qb}{\mb{q}_{v_2}}
In this subsection, we prove that~\ref{iii} is NP-hard.
For reference, we restate the statement of~\ref{iii} below:

\par {(\bf P3') : } For $n = 2$ features, $|\mcal{Y}| = 2$ classes, find the minimum $R_1=R_2$ bits for which $\mcal{L}(\mathbf{\mathbf{\mathcal{E}}},\mcal{D},\mcal{T})<p_e$, assuming that $\mcal{E}_k^{-1}(\bz_k)$ is a single $\mathcal{X}_{\Omega_k}$-bin.

To prove that~\ref{iii} is NP-hard, we show that the maximum Balanced Complete Bipartite Subgraph (BCBS) problem~\cite{alon1994algorithmic} can be reduced to~\ref{iii} in polynomial-time. The maximum BCBS problem is defined below.
\begin{defin} (maximum BCBS)
	{\rm 
		Given a balanced bipartite graph $G = (V_1,V_2,E)$: find the maximum size, in terms of number of vertices, of a balanced bi-clique in $G$\footnote{By a balanced bipartite graph $G$, we mean that $|V_1| = |V_2|$.}.
	}
\end{defin}
The maximum BCBS problem is known to be NP-hard as proved in~\cite{alon1994algorithmic}. 

We start by considering the case where the minimum degree in $G$
is nonzero and then address the case, where there are vertices with zero degree afterwards.

Our first step in the reduction is to construct a dataset $\mcal{T}_G = \{(\bx,y(\bx))\}$ based on the graph $G$.
To do so, we model each pair of vertices $(v_1,v_2) \in [1:V_1(G)]\times [1:V_2(G)]$ as a gadget dataset $\mcal{T}_{v_1,v_2}$ of labeled points $(\bx,y(\bx))$, where $v_1,v_2$ are the indices of the vertex pair.
The dataset $T_{v_1,v_2}$ can take one of two choices ($\mcal{T}_{v_1,v_2}^{(1)}$ or $\mcal{T}_{v_1,v_2}^{(0)}$) depending on whether $(v_1,v_2) \in E$ or not, respectively. In particular, these two gadget choices are defined below
\begin{figure}
	\centering
	\includegraphics[width=0.7\textwidth]{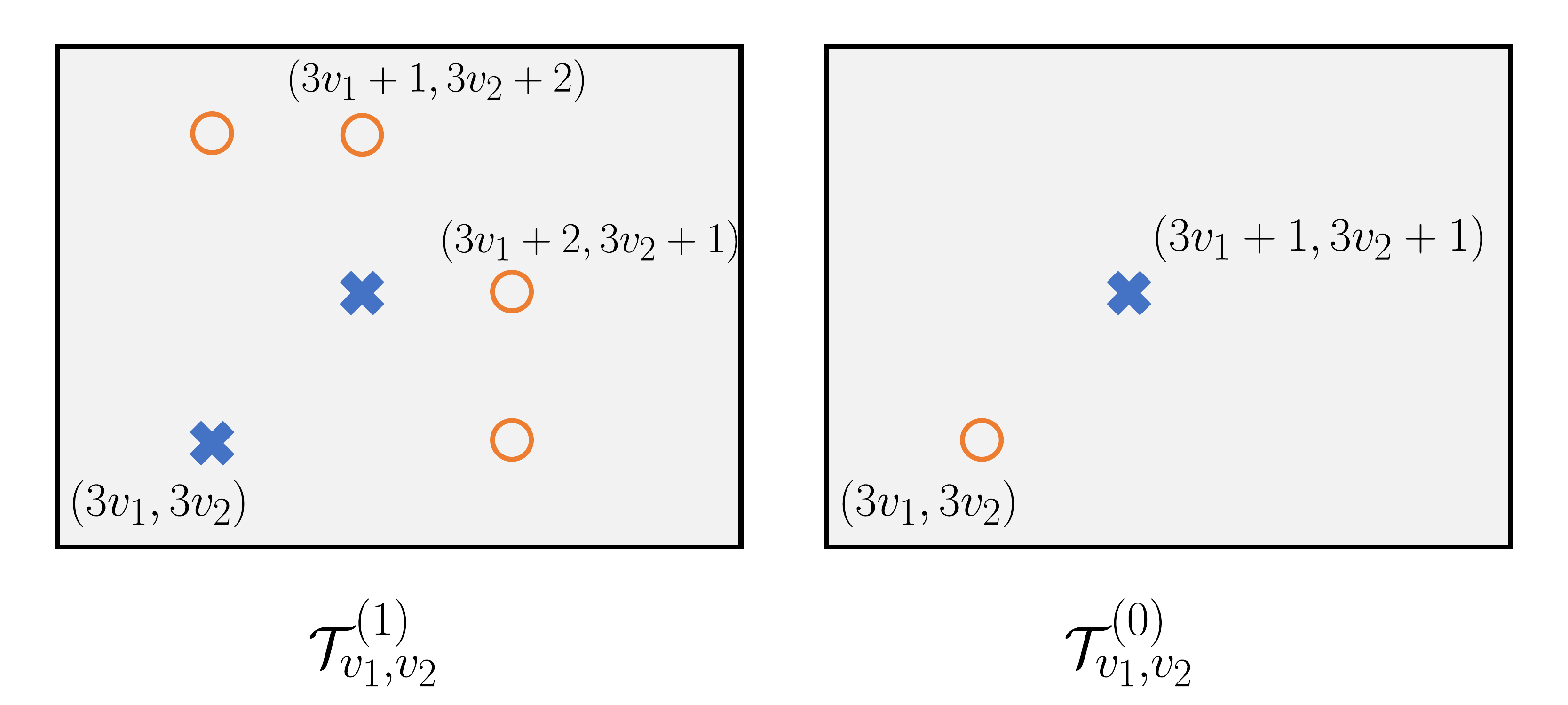}
	\caption{Structures of gadget datasets $T_{v_1,v_2}^{(1)}$ and $T_{v_1,v_2}^{(0)}$.}
	\label{T_v_gadgets}
\end{figure}
\begin{align}\label{eq:gadget_1}
\mcal{T}_{v_1,v_2}^{(1)} =& \left\{([3v_1,3v_2],1),\ ([3v_1{+}1,3v_2{+}1],1),\right.\nonumber\\
&	  ([3v_1{+}2,3v_2],0),\ ([3v_1{+}2,3v_2{+}1],0),\nonumber\\ 
&	  \left.([3v_1,3v_2{+}2],0),\ ([3v_1+1,3v_2{+}2],0)\right\}	 \quad {\rm if}\ (v_1,v_2) \in E,\\
\nonumber
\end{align}
\begin{align}
\label{eq:gadget_2}
\mcal{T}_{v_1,v_2}^{(0)} =& \left\{([3v_1,3v_2],0),\ ([3v_1+1,3v_2+1],1)\right\}		                             \quad {\rm if}\ (v_1,v_2) \not\in E.
\end{align}
Figure~\ref{T_v_gadgets} shows an illustration of the two possible versions of $\mcal{T}_{v_1,v_2}$.
The constructed dataset $\mcal{T}_G$ is the union of all gadget dataset $\mcal{T}_{v_1,v_2},\ \forall v_1,v_2$, i.e.,
\[
\mcal{T}_G = \bigcup_{(v_1,v_2) \in [1:V_1(G)]\times[1:V_2(G)]} \mcal{T}_{v_1,v_2}.
\]
\begin{figure}[!ht]
	\centering
	\begin{subfigure}{0.45\textwidth}
		\includegraphics[width=1\textwidth]{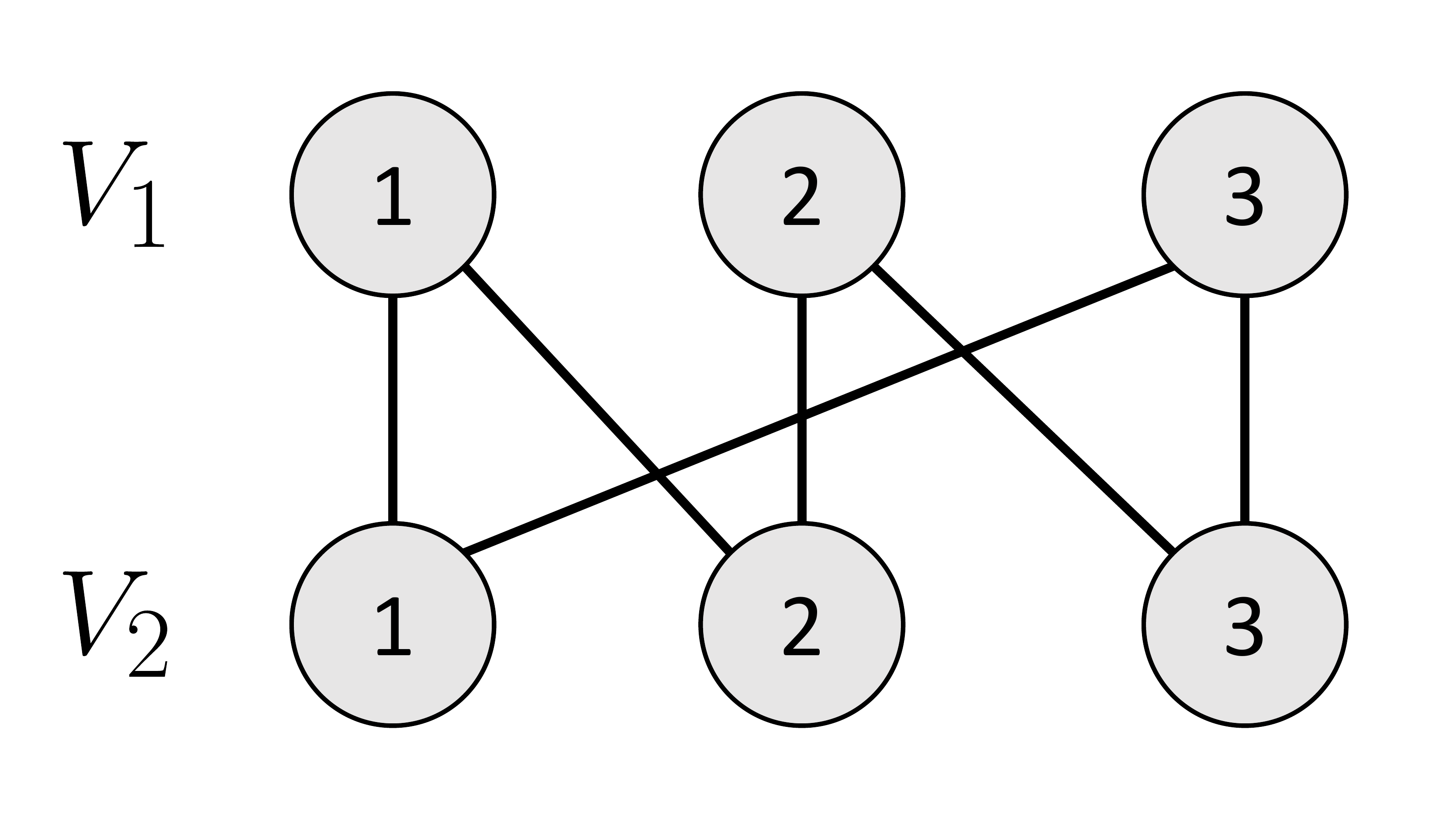}
		\vspace{1em}
		\caption{Example graph $G$.}
		\label{graph_bi_example_fig1}
	\end{subfigure}
	\begin{subfigure}{0.45\textwidth}
		\includegraphics[width=1\textwidth]{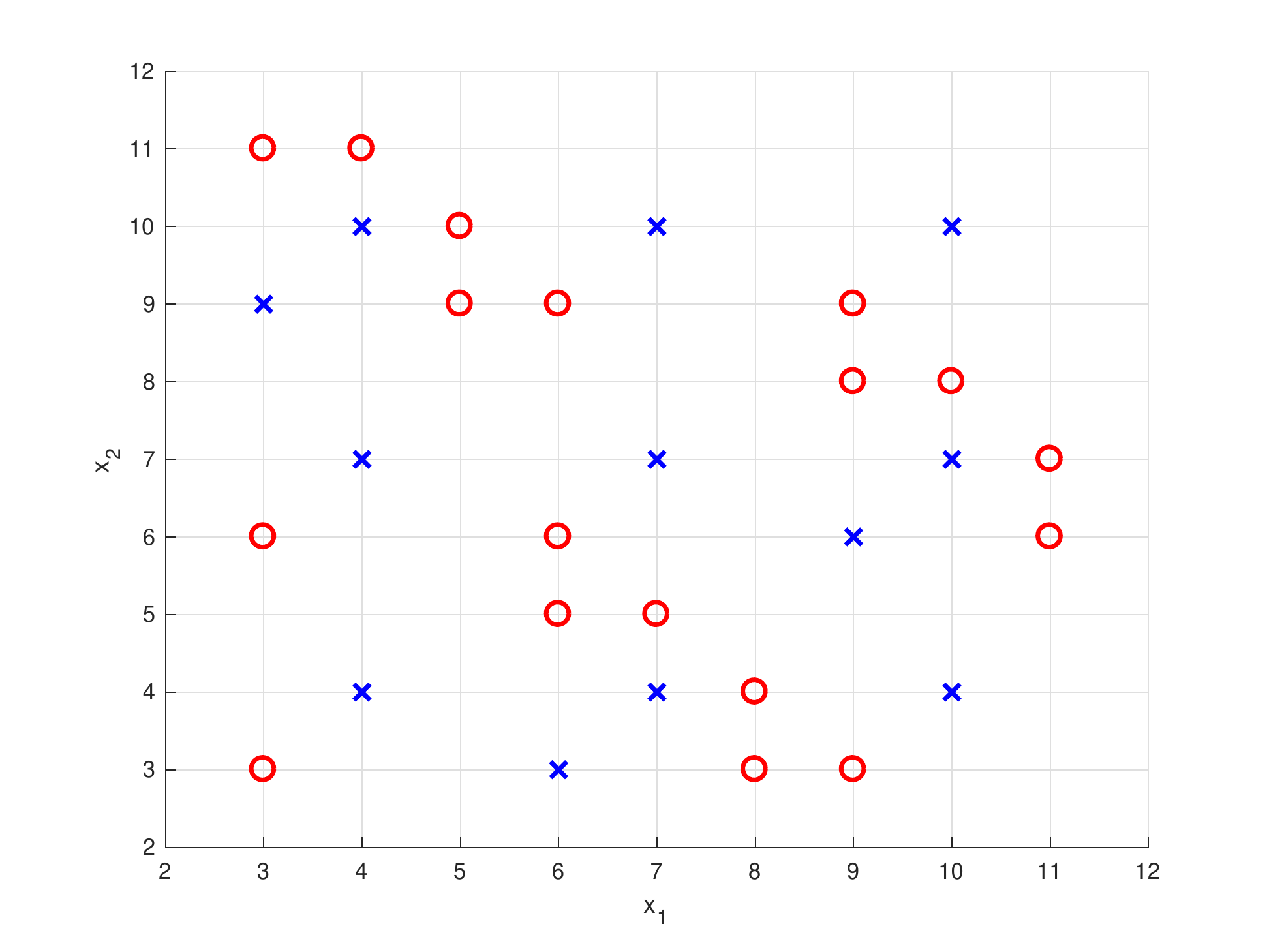}
		\caption{Dataset $\TG$ constructed from $G$.}
		\label{graph_bi_example_fig2}
	\end{subfigure}
	\caption{Example of reduction from balanced bipartite graph $G$ to dataset $\TG$.}
	\label{graph_bi_example}
\end{figure}

{\bf Example.} Consider the example balanced bipartite graph $G = (V1,V2,E)$ shown in Figure~\ref{graph_bi_example_fig1}, where $|V_1| = |V_2| = 3$ and $E = \{(1,1),(1,2),(2,2),(2,3),(3,1),(3,3)\}$. The constructed dataset $\TG$ using the construction defined above is shown in Figure~\ref{graph_bi_example_fig2}.

Now, we would like to show that in the constructed dataset $\mcal{T}_G$: finding the minimum $R_1 = R_2$ such that the $\mcal{L}(\mcal{E},\mcal{D},\mcal{T}_G) = 0$ is equivalent to finding the size of the maximum BCBS in $G$. 

To do so, we first discuss the following properties of the gadget datasets $\mcal{T}_{v_1,v_2}^{(1)}$ and $\mcal{T}_{v_1,v_2}^{(2)}$ and the dataset $\mcal{T}_G$ below.

\noindent{\bf Properties of $\TG$, $\TV^{(1)}$ and $\TV^{(0)}$.}
\begin{enumerate}
	\item Points in the dataset $\TG$ are arranged in {\bf rows} and {\bf columns} in $\mbb{R}^2$ indexed by
	\[
	{\rm rows} = \bigcup_{v_2 \in [1:|V_2|]} \{3v_2, 3v_2+1,3v_2+2\}, \qquad {\rm cols} = \bigcup_{v_1 \in [1:|V_1|]} \{3v_1, 3v_1+1,3v_1+2\}
	\]\label{prop1}
	
	\item Since~\ref{iii} assumes assumes $\mcal{E}_k^{-1}(\bz_k)$ is a single $\mathcal{X}_{\Omega_k}$-bin, then quantization of $x_1$ (resp., $x_2$) in $\TG$ is equivalent to combining adjacent columns (resp., rows) in $\TG$. Combining non-adjacent columns (resp., rows) is not allowed as it will not result in a single bin as assumed by \ref{iii}. We say that two columns (resp., rows) are {\bf combined}, if they are assigned to the same $\bz_1$ (resp., $\bz_2$);\label{prop2}
	
	\item For any $v_2 \in [1:|V_2|]$, there exists a $v_1$ such that $\mcal{T}_{v_1,v_2} = \mcal{T}_{v_1,v_2}^{(1)}$, i.e., is of type 1. This is due to the fact that the vertex indexed by $v_2$ has a non-zero degree.
	A similar property exists, flipping the roles of $v_1$ and $v_2$ in the statement above;\label{prop3}
	
	\item Observing the structure of the gadget dataset $\TV^{(1)}$ and $\TV^{(0)}$ in Figure~\ref{T_v_gadgets}, it is not difficult to see that only the columns (resp., rows) corresponding to indices $x_1 = 3v_1,\ 3v_1+1$ (resp., $x_2 = 3v_2,\ 3v_2+1$ can be combined while maintaining $\mcal{L}(\mcal{E},\mcal{D},\TV) = 0$. For shorthand, we refer to 
	action of combining these columns (resp., rows) with $\mb{q}_{v_1}$ (resp., $\mb{q}_{v_2}$), where
	\begin{align}\label{eq:quant_actions}
	\qa &\text{: combining the columns $x_1 = 3v_1,\ x_1 = 3v_1+1$}, \nonumber \\
	\qb &\text{: combining the rows $x_2 = 3v_2,\ x_2 = 3v_2+1$}.
	\end{align}\label{prop4}
	
	\item The dataset types $\TV^{(1)}$ and $\TV^{(0)}$ impose \underline{allowance} and \underline{restriction} relations on the possible quantizations (combining of columns/rows). 
	It is not hard to see from Figure~\ref{T_v_gadgets} that in $\TV^{(1)}$, we can apply both combining actions $\qa$ and $\qb$ simultaneously without incurring any penalty in the misclassification loss. On the other hand, in $\TV^{(0)}$, we cannot apply both combining actions $\qa$ and $\qb$ without incurring a misclassification loss (two points from different class will collapse to the same position and become indistinguishable after quantization). Thus we have the following equivalent relations  in $G$ and $\TG$ {\bf [This is the key property in the reduction]}\label{prop5}
	\begin{align}
	(v_1,v_2) \not\in E \iff \TV^{(0)} \subseteq \TG \iff \qa, \qb\ \text{are mutually exclusive},
	\end{align}
	and similarly
	\begin{align}
	(v_1,v_2) \in E \iff \TV^{(1)} \subseteq \TG \iff \qa, \qb\ \text{are not mutually exclusive}.
	\end{align}	
	\item Property~\ref{prop3} and~\ref{prop4} together imply that in $\TG$, columns (resp., rows) from different $\TV$ cannot be combined together while maintaining $\mcal{L}(\mcal{E},\mcal{D},\TV) = 0$. In other words, the only available quantization design actions that maintain zero loss are those defined by $\qa$ and $\qb$ in~\eqref{eq:quant_actions};\label{prop6}
	\item From Property~\ref{prop3}, we have that the total number of columns (resp., rows) in the $\TG$ is equal to $3|V_1|$ (resp., 3$|V_2|$).\label{prop7}
	
	\item From Property~\ref{prop7}, by default, we would need $2^{R_1} = 3|V_1|$ and $2^{R_2} = 3|V_2| = 3|V_1|$ to represent the data without quantization (recall that $|V_1| = |V_2|$ as $G$ is balanced).\label{prop8}
\end{enumerate}

With the previously observed properties, we can now state the equivalence between finding maximum BCBS and finding the minimum $R = R_1 = R_2$ that achieves $\mcal{L}(\mcal{E},\mcal{D},\TG) = 0$.
In particular, we have that 

\newtheorem{claim}{Claim}
\begin{claim} \label{claim_equiv_P3}
	{\rm 
		Any quantization (combining) approach in $\TG$ with $2^R = 3|V_1| - W$ quantization bins for both $x_1$ and $x_2$ such that $\mcal{L}(\mcal{E},\mcal{D},\TV) = 0$ results in a BCBS in $G$ of size $W$ and vice versa. 
	}
\end{claim}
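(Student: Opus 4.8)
The plan is to prove the equivalence in both directions by putting the admissible zero-loss combining actions into bijection with the vertices of a balanced biclique. For the forward direction, suppose a quantization attains $\mcal{L}(\mcal{E},\mcal{D},\TG)=0$ using $2^{R_1}=2^{R_2}=3|V_1|-W$ bins on each feature. By Property~\ref{prop7} the construction starts with $3|V_1|$ distinct column values and $3|V_2|=3|V_1|$ distinct row values, so reaching $3|V_1|-W$ bins requires exactly $W$ column merges and $W$ row merges. Properties~\ref{prop4} and~\ref{prop6} leave no freedom in how these merges may be performed at zero loss: the only admissible column (resp.\ row) merge is the gadget-aligned action $\qa$ (resp.\ $\qb$) combining columns $3v_1,3v_1{+}1$ (resp.\ rows $3v_2,3v_2{+}1$), each available once and for a distinct index. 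Hence the $W$ column merges determine a set $S_1\subseteq[1:|V_1|]$ with $|S_1|=W$, and the $W$ row merges determine $S_2\subseteq[1:|V_2|]$ with $|S_2|=W$.

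I would then invoke the key Property~\ref{prop5} to force $S_1\times S_2\subseteq E$. Fix any $v_1\in S_1$ and $v_2\in S_2$; since both $\qa$ and $\qb$ are applied, they act jointly on the gadget $\TV$. Were $(v_1,v_2)\notin E$, the gadget would be $\TV^{(0)}$, on which $\qa$ and $\qb$ are mutually exclusive, so applying both collapses two oppositely-labeled points onto a single bin and contributes a strictly positive term to the misclassification loss---contradicting $\mcal{L}=0$. Therefore $(v_1,v_2)\in E$ for every such pair, and $(S_1,S_2)$ is a balanced complete bipartite subgraph of $G$ with $W$ vertices on each side.

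For the reverse direction, start from a balanced biclique given by $S_1,S_2$ with $|S_1|=|S_2|=W$ and $S_1\times S_2\subseteq E$, and apply $\qa$ for each $v_1\in S_1$ and $\qb$ for each $v_2\in S_2$. These $2W$ actions reduce the bins to $3|V_1|-W$ on each feature, matching the target rate with $R_1=R_2$. Zero loss is verified gadget-by-gadget: an untouched gadget is unaffected; a gadget touched by a single action incurs no loss by Property~\ref{prop4}; and a gadget $\TV$ touched by both actions has $v_1\in S_1$, $v_2\in S_2$, hence $(v_1,v_2)\in E$ and $\TV=\TV^{(1)}$, on which $\qa$ and $\qb$ are jointly applicable at zero loss by Property~\ref{prop5}. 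This produces a zero-loss quantization with $2^{R_1}=2^{R_2}=3|V_1|-W$, closing the equivalence.

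The crux---and the only step requiring real care---is the forward direction's rigidity: that every zero-loss saving of $W$ bins is realized by exactly $W$ of the canonical actions $\qa,\qb$, with no alternative merge (across adjacent gadget blocks, or absorbing the third column/row at index $3v_1{+}2$) able to save a bit without loss. This rigidity is precisely what Properties~\ref{prop4} and~\ref{prop6} assert, so the substantive work lies in verifying those two structural properties directly from the gadget point sets in~\eqref{eq:gadget_1}--\eqref{eq:gadget_2}; once they are established, the bin-counting argument together with the mutual-exclusion dichotomy of Property~\ref{prop5} delivers the biclique correspondence immediately.
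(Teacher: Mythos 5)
Your proposal is correct and follows essentially the same route as the paper: both directions are argued via the canonical gadget-aligned actions $\qa,\qb$, with Property~\ref{prop6} supplying the rigidity that rules out any other zero-loss merge and Property~\ref{prop5} supplying the edge/non-edge dichotomy that forces $S_1\times S_2\subseteq E$. Your write-up is somewhat more explicit about the bin count (exactly $W$ merges per axis) and the gadget-by-gadget verification in the reverse direction, but the substance is identical to the paper's proof.
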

By this relation between the number of bits $R$ and the size of bi-clique $W$, we have that finding the maximum BCBS size in $G$ is equivalent to finding the minimum $R$ bits that results in $\mcal{L}(\mcal{E},\mcal{D},\TV) = 0$.

What remains is to prove both directions in Claim~\ref{claim_equiv_P3}.

\noindent \underline{BCBS $\to$ quantization:} Assume that the maximum BCBS is equal to $W$. Then there exists a set $W_1 \subseteq [1:|V_1|]$ and $W_2 \subseteq [1:|V_1|]$ such that: (i) $|W_1| = |W_2| = W$; (ii) $\forall (w_1,w_2) \in W_1 \times W_2$, we have that $(w_1,w_2) \in E$ and $\mcal{T}^{(1)}_{w_1,w_2} \subseteq \TG$. As a result, we can apply the combining actions $\mb{q}_{w_1} \forall w_1 \in W_1$ and $\mb{q}_{w_2} \forall w_2 \in W_2$ while still maintaining zero loss as none of them are in conflict with one another (recall~\eqref{claim_equiv_P3}). Thus, after applying these combinations, we are left with $3|V_1| - W$ columns and rows, which is our intended value for $2^R$.

\noindent \underline{quantization $\to$ BCBS:} 
Assume that we have a zero loss quantization that uses $2^R = 3|V_1| - W$ bins for either $x_1$ and $x_2$.
Since, the possible combining actions are those given by~\eqref{eq:quant_actions}, then there exits a set $W_1, W_2 \subseteq [1:|V_1|]$, such that the given quantization is designed by applying $\mb{q}_{w_1}$ and $\mb{q}_{w_2}\forall \ w_1\in W_1, w_2\in W_2$. Since, applying the aforementioned combining actions did not result in an increase in the loss, then by Property~\ref{prop4}, we have that 
\[	
\forall (w_1,w_2) \in W_1,W_2, (w_1,w_2) \in E.
\]
Thus, we have that $W_1$ and $W_2$ index vertices in $G$ that form a balanced bi-clique of size $W$.

This concludes the proof for the case, where all vertices in $G$ have a non-zero degree.

Finally, let us assume that our bipartite graph $G$ has some vertices with zero degrees. In particular let $G = G' \cup \widehat{G}$, where $G'=  (V'_1,V'_2,E)$ is the subgraph with all vertices with non-zero degrees and $\widehat{G} = (\widehat{V}_1,\widehat{V}_2,\phi)$ be the bipartite graph combining all vertices with zero degrees. 

Here, we can follow the same logic for $G'$ to construct $\mcal{T}_{G'}$. In addition, for $\widehat{G}$, we can add an additional dataset $\mcal{T}_{\widehat{G}}$ (shown in Figure~\ref{fig:P3_consolidation}) such that 
$\mcal{T}_{\widehat{G}}$ is a grid of points of size $3|\widehat{V}_1| \times 3|\widehat{V}_2|$ that cannot be quantized further with zero loss.
$\TG = \mcal{T}_{G'} \cup \mcal{T}_{\widehat{G}}$ as seen  in Figure~\ref{fig:P3_consolidation}.

\begin{figure}[!ht]
	\centering
	\includegraphics[width=0.7\textwidth]{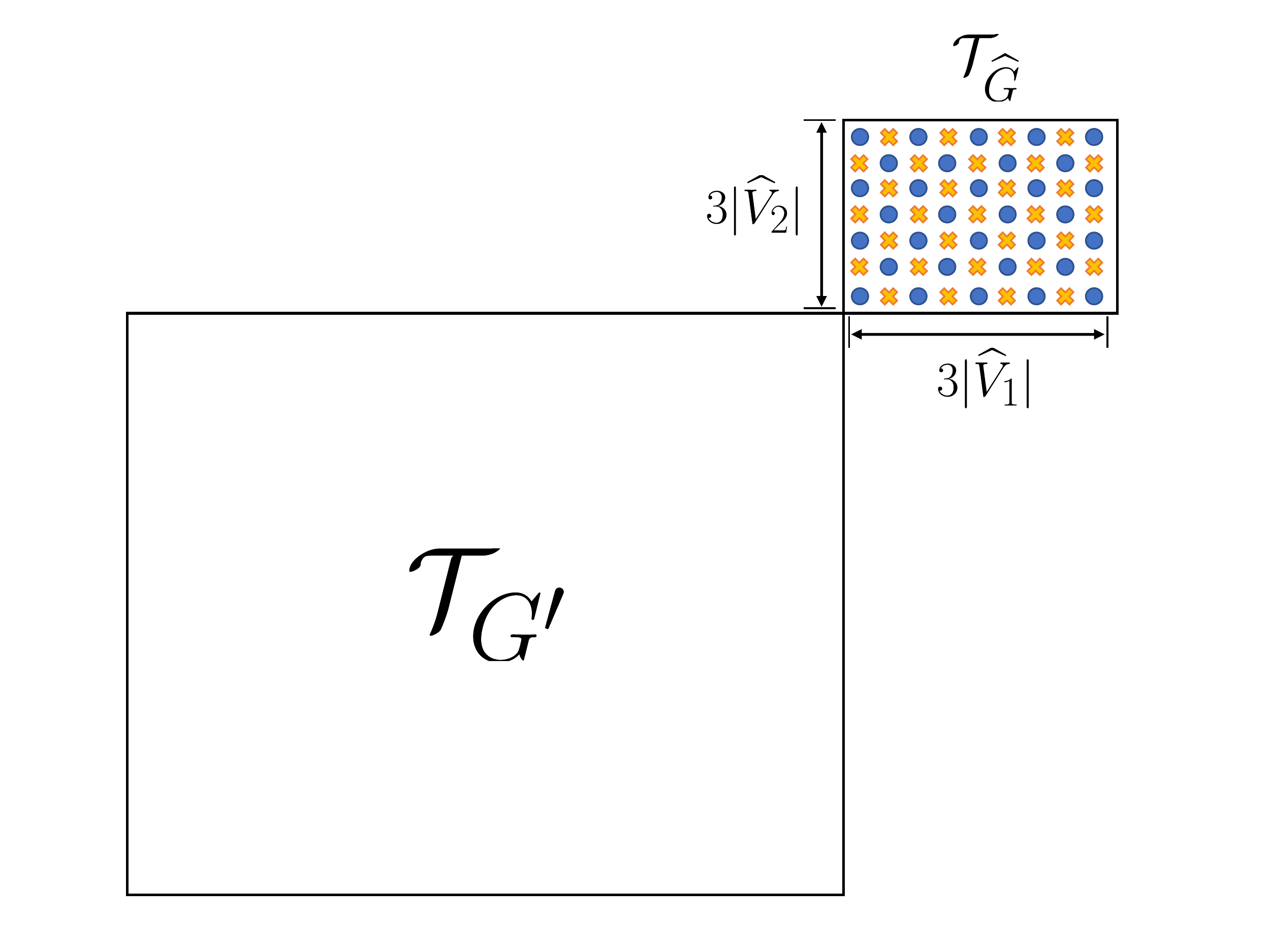}
	\caption{Illustration of $\TG$ decomposition into $\mcal{T}_{\widehat{G}}$ and $\mcal{T}_{G'}$.}
	\label{fig:P3_consolidation}
\end{figure}
It is not difficult to see that Claim~\ref{claim_equiv_P3} extends to this case as well. This concludes the proof of NP-hardness of~\ref{iii}.

\subsection{NP-hardness of \ref{iv}}

To prove \ref{iv}, we notice the following. Consider a training dataset with $n=3$ features, where the training data points have arbitrary values for features $x_1,x_2$, while feature $x_3$ takes values $x_3=-1$ for all points that belong to class $0$ and $x_3=1$ for all points that belong to class $1$. Such training data points are linearly separable by the hyperplane $x_3=0$. However, if $R_3=0$, we cannot send any information based on feature $3$, hence, we have to only consider the projection of the training data on features $x_1,x_2$ ignoring feature $x_3$. This shows that problem \ref{iii} is a special case of problem \ref{iv}. For instance, we can have a training dataset that have values for features $x_1,x_2$ generated based on the method used to prove \ref{iii}, while feature $x_3$ takes $x_3=-1$ for all points that belong to class $0$ and $x_3=1$ for all points that belong to class $1$.  It follows that problem \ref{iv} is NP-hard. 

\subsection{Hardness of approximation}

\par From the proof of \ref{i},\ref{ii}, we can see that a polynomial-time algorithm that approximates $2^{R_1}$ for problem \ref{i} or \ref{ii} within $O(N^{1-\epsilon})$ for some $\epsilon>0$ can be used to approximate the chromatic number within $O(N^{1-\epsilon})$ in polynomial time, since the chromatic number problem is polynomial-time reducible to problem \ref{i} or \ref{ii} with $\mathcal{X}(G)=\min 2^{R_1}$, and number of training points equal to the number of vertices in the graph. However, it was shown in \cite{zuckerman2006linear} that approximating the chromatic number within $O(N^{1-\epsilon})$ is NP-hard $\forall \epsilon>0$.

\par Similarly, from the proof of \ref{iii},\ref{iv}, we can see that a polynomial-time algorithm that approximates $2^{R}$ for problem \ref{iii} or \ref{iv} within a factor of $O(N^{\frac{1}{2}-\epsilon})$ for some $\epsilon>0$ can be used to approximate the BCBS within a factor of $O(N^{1-\epsilon})$ in polynomial time. The fraction $\frac{1}{2}$ in the exponent is because the BCBS problem is polynomial-time reducible to problem \ref{iii} or \ref{iv} with number of training points in the order of the square of the number of vertices in the graph. It was shown in \cite{manurangsi2017inapproximability} that approximating the BCBS within $O(N^{1-\epsilon})$ is NP-hard $\forall \epsilon>0$ assuming the \textit{Small Set Expansion Hypothesis} (SSEH) and that NP $\nsubseteq$ BPP.

\section{On-the-line optimality} \label{proof:on-the-line}
In this appendix, we prove the optimality of Algorithm \ref{alg3} under the considered restriction (horizontal and vertical lines defining $d_{k,i}$ meet along the line $x_1 = x_2$) and assuming that the data points are linearly separable and scaled such that the line $x_1=x_2$ separates the data. 

We prove by induction that $\forall k \in[1:2N]$: at the $k$-th iteration of the algorithm, it finds the optimal quantization boundaries considering only the points $\mcal{T}_{s_k}  = \{(\mb{x},y(\mb{x})) \in \mcal{T} | x_1, x_2 \leq s_k \},\ \forall b \in [1:2^R-1]$, where $s_k$ is the $k$-th smallest element in the set of possible boundaries $\mathbf{s}$.

\begin{itemize}
	\item At the first iteration, there is only one possible position for all the boundaries, $s_1$. Hence, at the first iteration the algorithm finds the optimal quantization boundaries considering the points $\mcal{T}_{s_k}$, $\forall b \in [1:2^R-1]$.
	\item Assuming that at iteration $k$ the algorithm finds the optimal quantization boundaries considering the points $\mcal{T}_{s_k}$, we show that it finds the optimal quantization boundaries considering the points $\mcal{T}_{s_{k+1}}$ at iteration $k+1$, $\forall b \in [1:2^R-1]$.

	At iteration $k+1$, the only possible positions for the first boundary before $s_{k+1}$ are the $k$ positions $s_1,s_2,...,s_k$. Therefore, to find the optimal boundaries at iteration $k+1$, we can condition on the position of the first boundary before $s_{k+1}$, and then optimize over this position. Conditioned on the position of the first boundary before $s_{k+1}$ being at $s_\ell$, the updated loss function can be expressed as
	\begin{equation}
	\mcal{L}_{B\cup s_\ell}(\mathbf{\mathbf{\mathcal{E}}},\mcal{T}_{s_{k+1}})=\mcal{L}_{B}(\mathbf{\mathbf{\mathcal{E}}},\mcal{T}_{s_{\ell}})+\displaystyle\min_{c \in \{1,2\}} |\{j | s_\ell \prec \mb{x}^{(j)} \preceq s_{k+1},\ y^{(j)} = c\}|,
	\end{equation}
	where $\mcal{L}_{B}(\mathbf{\mathbf{\mathcal{E}}},\mcal{T}_{s_{\ell}})$ is the loss $\mcal{L}(\mathbf{\mathbf{\mathcal{E}}},\mathcal{D},\mcal{T}_{s_{\ell}})$ when using the boundaries in the set $B$, and $B$ is the set of boundaries in the region of the space defined by $\mathbf{x}\in \mathbb{R}^2:\mathbf{x}\preceq s_\ell$, $|B\cup s_\ell |=b$. This is, as we discussed in the paper, due to the fact that the misclassified points contributing to the quantizer loss can only lie in the $2$-dimensional intervals crossed by the line $x_1 = x_2$. Which is because any other interval lies on one side of the line that separates the points, hence, contains points from only one class. Minimizing over all the possible values for $s_\ell , B$ we get
	\begin{equation}
	E(s_{k+1},b) = \min_{B,\ell : |B|\leq b-1, \ell <k+1} \{ \mcal{L}_{B}(\mathbf{\mathbf{\mathcal{E}}},\mcal{T}_{s_{\ell}})+\displaystyle\min_{c \in \{1,2\}} |\{j | s_\ell \prec \mb{x}^{(j)} \preceq s_{k+1},\ y^{(j)} = c\}|\}.
	\end{equation}
	We can observe that only the first term in the previous minimization depends on $B$, hence, we can optimize over $B$ first, which gives
	\begin{equation}
	E(s_{k+1},b) = \displaystyle\min_{\ell < k+1}\left\{E(s_\ell,b-1) + \displaystyle\min_{c \in \{1,2\}} |\{j | s_\ell \prec \mb{x}^{(j)} \preceq s_{k+1},\ y^{(j)} = c\}| \right\},
	\end{equation}
	which is the update rule used in the algorithm. Hence, the boundaries corresponding to $E(s_{\ell ^*},b-1)$ along with the boundary at $s_{\ell ^*}$ are the optimal boundaries at iteration $k+1$, where 
	\begin{equation}
	\ell ^*=\displaystyle\argmin_{\ell < k+1}\left\{E(s_\ell,b-1) + \displaystyle\min_{c \in \{1,2\}} |\{j | s_\ell \prec \mb{x}^{(j)} \preceq s_{k+1},\ y^{(j)} = c\}| \right\}.
	\end{equation}
\end{itemize}

\section{Discussion on breaking ties in {\tt GBI} : the purity criterion}
\label{sec:purity_criterion}
In this section we illustrate what we call the purity criterion which is used to break ties in $\texttt{GBI}$. If it happens that two or more possible boundaries lead to the same quantizer loss (something that happened surprisingly often in our experiments), it makes a significant performance difference to add the boundary that looks ahead to allow future boundaries to further decrease the loss. The intuition behind this is the following: for a tie, we have a fixed number of misclassified points; what matters for the algorithm performance is that the misclassified points are in bins that can be more easily partitioned to bins that contains no misclassification in a next iteration. This was more likely to happen in our experiments (and small examples) if a bin that has misclassified points contained a number of majority class points that was just slightly higher than that of the misclassified classes.
 Formally, let $H$ be an $\mathbb{R}^n$-bin and let $\mcal{T}$ be the set of training points $\{(\mb{x}^{(i)}, y(\mb{x}^{(i)})\}$. Define, $B(H,\mcal{T})$ as
 \[
 B(H,\mcal{T}) := 
 \begin{cases}
 0 & \text{if all points in $H$ are of the same class}, \\
 \displaystyle\max_{c \in \mcal{C}}\left|\{(\mb{x},y(\bx)) {\in} \mcal{T} {\cap} H\ \text{s.t.} \ y(\mb{x}) {=} c \}\right| & \text{otherwise}.
 \end{cases}
 \]
 $B(H,\mcal{T})$ counts the number of points of the {\bf majority} class in $H$ when there is at least two or more classes represented in $H$, and is zero otherwise.
 For a particular set of boundaries that partition $\mbb{R}^n$ into the $\mathbb{R}^n$-bins $\{H_k\}_{k=1}^M$, we want to {\bf minimize} the {\bf purity criterion}  defined as  
 \[
 U(\{H_k\}_{k=1}^M, \mcal{T}) := \frac{\sum_{k=1}^M {B}(H_k,\mcal{T})^2}{N},
 \]
 where the term ${B}(H_k,\mcal{T})^2$ penalizes bins with more majority points. $U(\{H_k\}_{k=1}^M, \mcal{T})$ is minimized when the correctly classified points represent a weak majority in the bins that have misclassification. This allows for the bins that have misclassification to be easily partitioned to bins that have no misclassification in a following iteration. For illustration, consider the example shown in Figure~\ref{fig_purity}.
In this example all the potential boundaries \circled{1}, \circled{2}, \circled{3} ,\circled{4} result in the same value for the quantizer loss. However, it is clear that, unlike boundaries \circled{3},\circled{4}, if we pick boundaries \circled{1}, \circled{2}, this allows for the separation of the ``o'' points (red) from the ``x'' points (blue) in the next iteration.
The purity criterion chooses boundary \circled{2} as shown by the values in Figure~\ref{fig_purity}. This choice separates the maximum number of ``x'' points from the bin that have misclassification.
Note that, a power greater than $1$, hence a function with slope that increases when we move away from zero, is needed in the purity criterion to highly penalize the bins that have misclassification having high concentration of majority class points, which prevents isolating the misclassified points in the following iterations. If we use a power of $1$, the purity criterion is reduced to the quantizer loss. In our case we use a power of $2$. 
\begin{figure}[!ht]
\centering
	\includegraphics[width=0.8\textwidth]{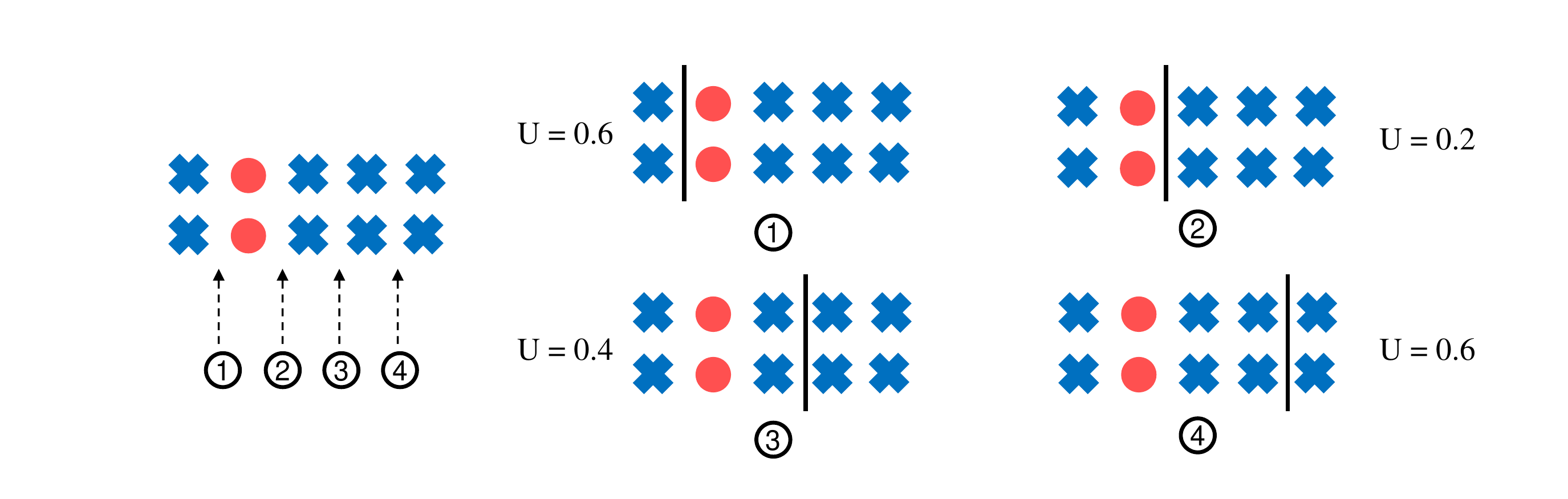}
	\vspace{-1em}
	\caption{Illustration of the purity criterion.}
	\label{fig_purity}
\end{figure}

\section{Parameters of trained models for experimental evaluation}
\label{appendix:net_structure}
In this appendix, we describe the structure of the encoders/decoders neural networks and parameters used in the experimental results for each dataset.
\subsection{\bf sEMG dataset:}
The structures of the neural networks used by the encoders $f_k(\cdot;\theta_k)$ and decoder $g(\cdot;\phi)$ are given in Table~\ref{enc_sEMG}.

{\bf $\bullet\ $ General parameters.} The distributed quantization system was trained using Adam optimizer with learning rate $10^{-3}$ for $300$ epochs.
The parameters of the pretrained classifier $\mcal{C}$ were frozen (not updated) during the learning phase. 

\begin{table}[h]
	\caption{Structure of the encoder neural networks $f_k(\cdot;\theta_k)$ and the decoder neural network $g(\cdot;\phi)$ used for the sEMG dataset.}
	\label{enc_sEMG}
	\centering
	\begin{tabular}{c|c|c}
		\multicolumn{3}{c}{\bf Encoder neural network $f_k(\cdot;\theta_k),\ \forall k \in [1:4]$} \\
		\toprule
		Layer Index & Layer Type & Output size \\
		\midrule
		\midrule
		1 & FC-Relu         & 90\\
		\hline
		2 & FC-Relu     & 170\\
		\hline
		3 & FC-Batchnorm-Tanh         & \# of bits per encoder\\
		\bottomrule
	\end{tabular}
	\hspace{2em}
	\begin{tabular}{c|c|c}
		\multicolumn{3}{c}{\bf Decoder neural network $g(\cdot;\phi)$} \\	
		\toprule
		Layer Index & Layer Type & Output size \\
		\midrule
		\midrule
		1 & FC-Relu         & 170\\
		\hline
		2 & FC-Relu       & 90\\
		\hline
		3 & FC         &	8\\
		\bottomrule
	\end{tabular}
\end{table}

{\bf $\bullet\ $ NN-REG.} For our approach in Section~\ref{sec:neural_REG}, we chose the regularization parameter $\beta$ through $5$-fold cross validation out of possible parameter values $\{0,0.1,\cdots,2\}$. The chosen regularization parameter is $\beta = 1.4$. 

{\bf $\bullet\ $ NN-GBI.} For our approach in Section~\ref{sec:neural_GBI}, the output of the last layer in the encoder was chosen to be $1$ when applying {\bf Phase 1} (refer to Section~\ref{sec:neural_GBI}). When applying $\tt GBI$, a batch size of 300 data points was used in each iteration of the $\tt GBI$ algorithm. After designing the quantizer with \texttt{GBI}, the network is trained for an extra $20$ epochs with the quantizer in {\bf Phase 3}.

\subsection{\bf CIFAR10 dataset:}
The structures of the neural networks used by the encoders $f_k(\cdot;\theta_k)$ and decoder $g(\cdot;\phi)$ are given in Table~\ref{enc_CIFAR} and Table~\ref{dec_CIFAR}, respectively. 

{\bf $\bullet\ $ General parameters.} The distributed quantization system was trained using Adam optimizer with learning rate $10^{-3}$ for $200$ epochs.
The parameters of the pretrained classifier $\mcal{C}$ were frozen (not updated) during the learning phase. 

{\bf $\bullet\ $ NN-REG.} The chosen regularization parameter is $\beta = 0.25$. 

{\bf $\bullet\ $ NN-GBI.} For our approach in Section~\ref{sec:neural_GBI}, the output of the last layer in the encoder was chosen to be $5$ when applying {\bf Phase 1} (refer to Section~\ref{sec:neural_GBI}). When applying $\tt GBI$, a batch size of 300 data points was used in each iteration of the $\tt GBI$ algorithm. After designing the quantizer with \texttt{GBI}, the network is trained for an extra $50$ epochs with the quantizer in {\bf Phase 3}.

\begin{table}[h]
	\caption{Structure of encoder neural networks $f_k(\cdot;\theta_k)$ used for CIFAR-10 dataset.}
	\label{enc_CIFAR}
	\centering
	\begin{tabular}{c|c|c|c|c|c|c|c}
		\toprule
		\makecell{Layer \\ Index} & \makecell{Layer \\ Type} & \makecell{Output \\ size} & \makecell{Input \\ channels} & \makecell{Output \\ channels} & \makecell{Kernel \\ size} &  Stride & Padding\\
		\midrule
		\midrule
		1 & Conv-Relu         & - & 3 & 64 & 3 & 1 & 1 \\
		\hline
		2 & Conv-Relu         & - & 64 & 64 & 3 & 1 & 1 \\
		\hline
		3 & Maxpool         & - & - & - & 2 & 2 & - \\
		\hline
		4 & Conv-Relu         & - & 64 & 128 & 3 & 1 & 1 \\
		\hline
		5 & Conv-Relu         & - & 128 & 128 & 3 & 1 & 1 \\
		\hline
		6 & Maxpool         & - & - & - & 2 & 2 & - \\
		\hline
		7 & FC-Tanh         & number of bits per encoder & - & - & - & - & - \\
		\bottomrule
	\end{tabular}
\end{table}

\begin{table}[h]
	\caption{Structure of decoder neural network $g(\cdot;\phi)$ used for CIFAR-10 dataset.}
	\label{dec_CIFAR}
	\centering
	\begin{tabular}{c|c|c|c|c|c|c|c}
		\toprule
		\makecell{Layer \\ Index} & \makecell{Layer \\ Type} & \makecell{Output \\ size} & \makecell{Input \\ channels} & \makecell{Output \\ channels} & \makecell{Kernel \\ size} &  Stride & Padding\\
		\midrule
		\midrule
		1 & FC-Relu         & 80 & - & - & - & - & - \\
		\hline
		2 & ConvTranspose-Relu-Batchnorm         & - & 5 & 5 & 4 & 2 & 1 \\
		\hline
		3 & ConvTranspose-Relu-Batchnorm         & - & 5 & 5 & 4 & 2 & 1 \\
		\hline
		4 & ConvTranspose-Relu-Batchnorm        & - & 5 & 5 & 4 & 2 & 1 \\
		\hline
		5 & Conv         & - & 5 & 3 & 5 & 1 & 2 \\
		\bottomrule
	\end{tabular}
\end{table}

\newpage
\bibliographystyle{IEEEtran}
\bibliography{QUANT_JSAIT}

\begin{thebibliography}{10}
\providecommand{\url}[1]{#1}
\csname url@samestyle\endcsname
\providecommand{\newblock}{\relax}
\providecommand{\bibinfo}[2]{#2}
\providecommand{\BIBentrySTDinterwordspacing}{\spaceskip=0pt\relax}
\providecommand{\BIBentryALTinterwordstretchfactor}{4}
\providecommand{\BIBentryALTinterwordspacing}{\spaceskip=\fontdimen2\font plus
\BIBentryALTinterwordstretchfactor\fontdimen3\font minus
  \fontdimen4\font\relax}
\providecommand{\BIBforeignlanguage}[2]{{%
\expandafter\ifx\csname l@#1\endcsname\relax
\typeout{** WARNING: IEEEtran.bst: No hyphenation pattern has been}%
\typeout{** loaded for the language `#1'. Using the pattern for}%
\typeout{** the default language instead.}%
\else
\language=\csname l@#1\endcsname
\fi
#2}}
\providecommand{\BIBdecl}{\relax}
\BIBdecl

\bibitem{DLbook}
A.~Courville and I.~G. andYoshua Bengio, \emph{Deep learning}.\hskip 1em plus
  0.5em minus 0.4em\relax MIT press, 2015.

\bibitem{lebedev2006brain}
M.~A. Lebedev and M.~A. Nicolelis, ``Brain--machine interfaces: past, present
  and future,'' \emph{TRENDS in Neurosciences}, vol.~29, no.~9, 2006.

\bibitem{googleCloud}
``{Google Cloud AI},'' \url{http://cloud.google.com/products/ai/}, 2008,
  [Online; accessed 19-July-2008].

\bibitem{clarifai}
``{Clarifai},'' \url{http://clarifai.com/}.

\bibitem{VQ-VAE}
A.~van~den Oord, O.~Vinyals, and K.~Kavukcuoglu, ``Neural discrete
  representation learning,'' in \emph{Advances in Neural Information Processing
  Systems}, 2017, pp. 6306--6315.

\bibitem{VQ-VAE2}
A.~Razavi, A.~van~den Oord, and O.~Vinyals, ``Generating diverse high-fidelity
  images with {VQ-VAE-2},'' \emph{arXiv preprint arXiv:1906.00446}, 2019.

\bibitem{JSCC}
K.~Choi, K.~Tatwawadi, T.~Weissman, and S.~Ermon, ``{NECST:} neural joint
  source-channel coding,'' \emph{arXiv preprint arXiv:1811.07557}, 2018.

\bibitem{theis2017lossy}
L.~Theis, W.~Shi, A.~Cunningham, and F.~Husz{\'a}r, ``Lossy image compression
  with compressive autoencoders,'' \emph{arXiv preprint arXiv:1703.00395},
  2017.

\bibitem{balle2016end}
J.~Ball{\'e}, V.~Laparra, and E.~P. Simoncelli, ``End-to-end optimization of
  nonlinear transform codes for perceptual quality,'' in \emph{2016 Picture
  Coding Symposium (PCS)}.\hskip 1em plus 0.5em minus 0.4em\relax IEEE, 2016,
  pp. 1--5.

\bibitem{van2016pixel}
A.~Van~Oord, N.~Kalchbrenner, and K.~Kavukcuoglu, ``Pixel recurrent neural
  networks,'' in \emph{International Conference on Machine Learning}, 2016, pp.
  1747--1756.

\bibitem{chamberland2007wireless}
J.-F. Chamberland and V.~V. Veeravalli, ``Wireless sensors in distributed
  detection applications,'' \emph{IEEE Signal Processing Magazine}, vol.~24,
  no.~3, pp. 16--25, 2007.

\bibitem{luo2005universal}
Z.-Q. Luo, ``Universal decentralized estimation in a bandwidth constrained
  sensor network,'' \emph{IEEE Transactions on Information Theory}, vol.~51,
  no.~6, pp. 2210--2219, 2005.

\bibitem{longo1990quantization}
M.~Longo, T.~D. Lookabaugh, and R.~M. Gray, ``Quantization for decentralized
  hypothesis testing under communication constraints,'' \emph{IEEE Trans. on
  Inf. Theory}, vol.~36, no.~2, pp. 241--255, 1990.

\bibitem{amari1998statistical}
T.~S. Han and S.~Amari, ``Statistical inference under multiterminal data
  compression,'' \emph{IEEE Transactions on Information Theory}, vol.~44,
  no.~6, pp. 2300--2324, Oct 1998.

\bibitem{han2018geometric}
Y.~Han, A.~{\"O}zg{\"u}r, and T.~Weissman, ``Geometric lower bounds for
  distributed parameter estimation under communication constraints,'' in
  \emph{The 31st Annual Conference on Learning Theory}, 2018.

\bibitem{diakonikolas2017communication}
I.~Diakonikolas, E.~Grigorescu, J.~Li, A.~Natarajan, K.~Onak, and L.~Schmidt,
  ``Communication-efficient distributed learning of discrete distributions,''
  in \emph{Advances in Neural Information Processing Systems}, 2017, pp.
  6391--6401.

\bibitem{acharya2018distributed}
J.~Acharya, C.~L. Canonne, and H.~Tyagi, ``Distributed simulation and
  distributed inference,'' \emph{arXiv preprint arXiv:1804.06952}, 2018.

\bibitem{Poor-HVQ-88}
H.~V. Poor, ``High-rate vector quantization for detection,'' \emph{IEEE
  Transactions on Information Theory}, vol.~34, p. 960–972, 1988.

\bibitem{sun2013distributed}
J.~Z. Sun, V.~Misra, and V.~K. Goyal, ``Distributed functional scalar
  quantization simplified,'' \emph{IEEE Transactions on Signal Processing},
  vol.~61, no.~14, pp. 3495--3508, 2013.

\bibitem{misra2011distributed}
V.~Misra, V.~K. Goyal, and L.~R. Varshney, ``Distributed scalar quantization
  for computing: High-resolution analysis and extensions,'' \emph{IEEE
  Transactions on Information Theory}, vol.~57, no.~8, pp. 5298--5325, 2011.

\bibitem{kohonen1996lvq}
T.~Kohonen, J.~Hynninen, J.~Kangas, J.~Laaksonen, and K.~Torkkola, ``Lvq pak:
  The learning vector quantization program package,'' Technical report,
  Laboratory of computer and Information Science~…, Tech. Rep., 1996.

\bibitem{sato1996generalized}
A.~Sato and K.~Yamada, ``Generalized learning vector quantization,'' in
  \emph{Advances in neural information processing systems}, 1996, pp. 423--429.

\bibitem{berger1996ceo}
T.~Berger, Z.~Zhang, and H.~Viswanathan, ``The ceo problem [multiterminal
  source coding],'' \emph{IEEE Transactions on Information Theory}, vol.~42,
  no.~3, pp. 887--902, 1996.

\bibitem{viswanathan1997quadratic}
H.~Viswanathan and T.~Berger, ``The quadratic gaussian ceo problem,''
  \emph{IEEE Transactions on Information Theory}, vol.~43, no.~5, pp.
  1549--1559, 1997.

\bibitem{oohama1998rate}
Y.~Oohama, ``The rate-distortion function for the quadratic gaussian ceo
  problem,'' \emph{IEEE Transactions on Information Theory}, vol.~44, no.~3,
  pp. 1057--1070, 1998.

\bibitem{prabhakaran2004rate}
V.~Prabhakaran, D.~Tse, and K.~Ramachandran, ``Rate region of the quadratic
  gaussian ceo problem,'' in \emph{International Symposium onInformation
  Theory, 2004. ISIT 2004. Proceedings.}\hskip 1em plus 0.5em minus 0.4em\relax
  IEEE, 2004, p. 119.

\bibitem{wagner2011distributed}
A.~B. Wagner, ``On distributed compression of linear functions,'' \emph{IEEE
  Transactions on Information Theory}, vol.~57, no.~1, pp. 79--94, 2011.

\bibitem{krithivasan2009lattices}
D.~Krithivasan and S.~S. Pradhan, ``Lattices for distributed source coding:
  Jointly gaussian sources and reconstruction of a linear function,''
  \emph{IEEE Transactions on Information Theory}, vol.~55, no.~12, pp.
  5628--5651, 2009.

\bibitem{doshi2010functional}
V.~Doshi, D.~Shah, M.~M{\'e}dard, and M.~Effros, ``Functional compression
  through graph coloring,'' \emph{IEEE Transactions on Information Theory},
  vol.~56, no.~8, pp. 3901--3917, 2010.

\bibitem{ZhouYGXC17}
A.~Zhou, A.~Yao, Y.~Guo, L.~Xu, and Y.~Chen, ``Incremental network
  quantization: Towards lossless cnns with low-precision weights,'' in
  \emph{International Conference on Learning Representations}, 2017.

\bibitem{jacob2018quantization}
B.~Jacob, S.~Kligys, B.~Chen, M.~Zhu, M.~Tang, A.~Howard, H.~Adam, and
  D.~Kalenichenko, ``Quantization and training of neural networks for efficient
  integer-arithmetic-only inference,'' in \emph{Proceedings of the IEEE
  Conference on Computer Vision and Pattern Recognition}, 2018, pp. 2704--2713.

\bibitem{wu2018training}
S.~Wu, G.~Li, F.~Chen, and L.~Shi, ``Training and inference with integers in
  deep neural networks,'' in \emph{International Conference on Learning
  Representations}, 2018.

\bibitem{RD-ModelCompICML19}
W.~Gao, Y.-H. Liu, C.~Wang, and S.~Oh:, ``Rate distortion for model
  compression: From theory to practice,'' in \emph{ICML}, 2019, pp. 2102--2111.

\bibitem{freund1997decision}
Y.~Freund and R.~E. Schapire, ``A decision-theoretic generalization of on-line
  learning and an application to boosting,'' \emph{Journal of computer and
  system sciences}, vol.~55, no.~1, pp. 119--139, 1997.

\bibitem{hastie2009multi}
T.~Hastie, S.~Rosset, J.~Zhu, and H.~Zou, ``Multi-class adaboost,''
  \emph{Statistics and its Interface}, vol.~2, no.~3, pp. 349--360, 2009.

\bibitem{tu2014layered}
Z.~Tu, P.~Dollar, and Y.~Wu, ``Layered logic classifiers: Exploring
  theand'andor'relations,'' \emph{arXiv preprint arXiv:1405.6804}, 2014.

\bibitem{kingma2013auto}
D.~P. Kingma and M.~Welling, ``Auto-encoding variational bayes,'' \emph{arXiv
  preprint arXiv:1312.6114}, 2013.

\bibitem{higgins2017beta}
I.~Higgins, L.~Matthey, A.~Pal, C.~Burgess, X.~Glorot, M.~Botvinick,
  S.~Mohamed, and A.~Lerchner, ``beta-vae: Learning basic visual concepts with
  a constrained variational framework.'' \emph{ICLR}, vol.~2, no.~5, p.~6,
  2017.

\bibitem{zhao2019infovae}
S.~Zhao, J.~Song, and S.~Ermon, ``Infovae: Balancing learning and inference in
  variational autoencoders,'' in \emph{Proceedings of the AAAI Conference on
  Artificial Intelligence}, vol.~33, 2019, pp. 5885--5892.

\bibitem{mathieu2019disentangling}
E.~Mathieu, T.~Rainforth, N.~Siddharth, and Y.~W. Teh, ``Disentangling
  disentanglement in variational autoencoders,'' in \emph{International
  Conference on Machine Learning}, 2019, pp. 4402--4412.

\bibitem{45903}
A.~Alemi, I.~Fischer, J.~Dillon, and K.~Murphy, ``Deep variational information
  bottleneck,'' in \emph{ICLR}, 2017.

\bibitem{tishby2000information}
N.~Tishby, F.~C. Pereira, and W.~Bialek, ``The information bottleneck method,''
  \emph{arXiv preprint physics/0004057}, 2000.

\bibitem{munkres2014topology}
J.~Munkres, \emph{Topology}.\hskip 1em plus 0.5em minus 0.4em\relax Pearson
  Education, 2014.

\bibitem{bishop2006pattern}
C.~M. Bishop, \emph{Pattern recognition and machine learning}.\hskip 1em plus
  0.5em minus 0.4em\relax springer, 2006.

\bibitem{shohat2019deep}
M.~Shohat, G.~Tsintsadze, N.~Shlezinger, and Y.~C. Eldar, ``Deep quantization
  for mimo channel estimation,'' in \emph{ICASSP 2019-2019 IEEE International
  Conference on Acoustics, Speech and Signal Processing (ICASSP)}.\hskip 1em
  plus 0.5em minus 0.4em\relax IEEE, 2019, pp. 3912--3916.

\bibitem{csaji2001approximation}
B.~C. Cs{\'a}ji, ``Approximation with artificial neural networks,''
  \emph{Faculty of Sciences, Etvs Lornd University, Hungary}, vol.~24, p.~48,
  2001.

\bibitem{lobov2018latent}
S.~Lobov, N.~Krilova, I.~Kastalskiy, V.~Kazantsev, and V.~Makarov, ``Latent
  factors limiting the performance of s{EMG}-interfaces,'' \emph{Sensors},
  vol.~18, no.~4, p. 1122, 2018.

\bibitem{macqueen1967some}
J.~MacQueen, ``Some methods for classification and analysis of multivariate
  observations,'' in \emph{Proceedings of the fifth Berkeley symposium on
  mathematical statistics and probability}, vol.~1, no.~14.\hskip 1em plus
  0.5em minus 0.4em\relax Oakland, CA, USA, 1967, pp. 281--297.

\bibitem{SimonyanZ14a}
K.~Simonyan and A.~Zisserman, ``Very deep convolutional networks for
  large-scale image recognition,'' in \emph{ICLR}, 2015.

\bibitem{karp1972reducibility}
R.~M. Karp, ``Reducibility among combinatorial problems,'' in \emph{Complexity
  of computer computations}.\hskip 1em plus 0.5em minus 0.4em\relax Springer,
  1972, pp. 85--103.

\bibitem{alon1994algorithmic}
N.~Alon, R.~A. Duke, H.~Lefmann, V.~Rodl, and R.~Yuster, ``The algorithmic
  aspects of the regularity lemma,'' \emph{Journal of Algorithms}, vol.~16,
  no.~1, pp. 80--109, 1994.

\bibitem{zuckerman2006linear}
D.~Zuckerman, ``Linear degree extractors and the inapproximability of max
  clique and chromatic number,'' in \emph{Proceedings of the thirty-eighth
  annual ACM symposium on Theory of computing}.\hskip 1em plus 0.5em minus
  0.4em\relax ACM, 2006, pp. 681--690.

\bibitem{manurangsi2017inapproximability}
P.~Manurangsi, ``Inapproximability of maximum edge biclique, maximum balanced
  biclique and minimum k-cut from the small set expansion hypothesis,'' in
  \emph{44th International Colloquium on Automata, Languages, and Programming
  (ICALP 2017)}.\hskip 1em plus 0.5em minus 0.4em\relax Schloss
  Dagstuhl-Leibniz-Zentrum fuer Informatik, 2017.

\end{thebibliography}

\end{document}